%% file: main.tex
\documentclass{article} 
\usepackage{iclr2027_conference,times}

\usepackage{hyperref}
\usepackage{url}
\usepackage{graphicx}
\usepackage{booktabs}
\usepackage{multirow}
\usepackage{bm}
\usepackage{tabularx}
\usepackage{amsmath} 
\usepackage{wrapfig}
\usepackage{algorithm}
\usepackage{algorithmic}
\usepackage{subcaption}
\usepackage{bbm}
\usepackage{amssymb}
\usepackage[most]{tcolorbox}
\usepackage[utf8]{inputenc} 
\usepackage[T1]{fontenc}    
\usepackage{hyperref}       
\usepackage{url}            
\usepackage{booktabs}       
\usepackage{amsfonts}       
\usepackage{nicefrac}       
\usepackage{microtype}      
\usepackage{xcolor}         
\usepackage{booktabs}
\usepackage{multirow}
\usepackage{graphicx}
\usepackage{bm}
\newtheorem{proposition}{Proposition}

\title{Beyond Site Agreement: Re-estimation for Brain Network Generalization}

\author{Yingxu Wang\textsuperscript{1}\thanks{Equal contribution.}, Kunyu Zhang\textsuperscript{2}\footnotemark[1], Yanwu Yang\textsuperscript{3}, Thomas Wolfers\textsuperscript{3},
\\
\textbf{Yujie Wu\textsuperscript{4}, Siyang Gao\textsuperscript{5}, Nan Yin\textsuperscript{6}}
\\
\textsuperscript{1} Mohamed bin Zayed University of Artificial Intelligence \;\;
\textsuperscript{2} Zhengzhou University \\
\textsuperscript{3} University Hospital T\"{u}bingen \;\;
\textsuperscript{4} The Hong Kong Polytechnic University \\
\textsuperscript{5} City University of Hong Kong \;\;
\textsuperscript{6} The Education University of Hong Kong \\
\texttt{\{yingxv.wang,kunyu.zky,yangyanwu1111,yinnan8911\}@gmail.com} \\
\texttt{dr.thomas.wolfers@gmail.com, wu-yj16@tsinghua.org.cn} \\
\texttt{siyangao@cityu.edu.hk} }

\iclrfinalcopy
\def\method{BRIO} 
\begin{document}

\maketitle

\begin{abstract}
Cross-site out-of-distribution (OOD) generalization in resting-state functional magnetic resonance imaging (rs-fMRI) often relies on learning task-discriminative representations from full-scan functional connectivity (FC) graphs and promoting invariance across source sites. However, FC graphs are estimated from finite, temporally correlated blood-oxygen-level-dependent (BOLD) sequences. Cross-site agreement therefore does not necessarily imply that predictive evidence remains supported under FC re-estimation within the same scan. In this paper, we propose \textbf{B}rain Network \textbf{R}e-estimation-\textbf{I}nformed \textbf{O}OD Learning (\method{}), a framework that uses within-scan FC re-estimation to guide cross-site alignment. \method{} maps full-scan graphs and their re-estimates into consistently indexed connectome factors, enabling comparisons of their predictive contributions. It assesses re-estimation support from changes in these contributions relative to within-class subject variability and class separation. For each source-site pair and class, this task-calibrated support from both sites is combined with predictive relevance to form pairwise qualifications, which determine relative factor weights and overall alignment strength. Leave-one-site-out experiments on four real-world datasets (ABIDE, REST-meta-MDD, SRPBS, and ABCD) show that \method{} consistently outperforms competitive baselines, with relative improvements of up to 3.8\% in accuracy. These gains also persist under an alternative brain parcellation on ABIDE.
\end{abstract}

\input{main/1_intro}
\input{main/2_related}
\input{main/4_method}
\input{main/5_experiments}

\input{main/6_conclusion}

\bibliography{reference}
\bibliographystyle{iclr2027_conference}

\clearpage
\appendix
\input{main/7_appendix}

\end{document}

%% file: main/1_intro.tex
\section{Introduction}

Resting-state functional magnetic resonance imaging (rs-fMRI) enables non-invasive characterization of macroscale functional interactions among brain regions~\cite{finn2015functional,bessadok2022graph}. A standard pipeline parcellates the brain into regions of interest (ROIs) and constructs subject-level functional connectivity (FC) graphs, where nodes represent ROIs and edges encode statistical dependencies estimated from regional blood-oxygen-level-dependent (BOLD) time series~\cite{kan2022brainnttf,zhang2026modeling}. To leverage these interregional dependencies for predicting neurological and psychiatric disorders, graph neural networks (GNNs) aggregate information across connected ROIs to learn subject-level predictive representations~\cite{li2021braingnn,gu2025fchgnn}.

Despite this progress, brain network models often generalize poorly to unseen sites because differences in scanners, acquisition protocols, and cohort composition induce systematic shifts in FC distributions~\cite{qiu2025metaexplainer,yamashita2025computational,yu2018statistical}. To address these shifts, many cross-site out-of-distribution (OOD) methods seek FC graph representations that are both task-discriminative and invariant across source sites~\cite{li2025out,xu2025brainood,yu2025causal,wang2025protomol,zhang2026brainriem}. They pursue this goal through site-confounder removal, invariant feature or subgraph learning, and class-conditional alignment~\cite{wang2026brain,chen2022learning,wang2024dissecting}. However, each underlying FC graph is a finite-sample estimate derived from a temporally correlated BOLD sequence~\cite{birn2013effect,noble2019decade,pakravan2026uncertainty,xiang2025jailbreaking,zhang2025mvho}. Re-estimating FC using different temporal samples from the same scan may change the estimated connectivity patterns and their contributions to model predictions~\cite{yangbrain}. These within-subject changes are not directly characterized by the discrepancy between representation distributions across sites. Consequently, low cross-site discrepancy among full-scan representations does not by itself reveal whether the predictive evidence used for alignment remains supported under within-scan FC re-estimation.

In this paper, we revisit cross-site OOD generalization in brain networks by accounting for variability from finite-sample FC estimation. This raises three sequential challenges:
(1) \textit{How can FC re-estimation variability be probed from a single rs-fMRI scan?}
A full-scan FC graph provides only one finite-sample estimate. Probing this variability involves varying the temporal sample composition while retaining local temporal dependence and cross-ROI temporal correspondence~\cite{bellec2008bootstrap,kudela2017assessing}.
(2) \textit{How can changes in predictive evidence under FC re-estimation be represented and compared consistently?}
A GNN with graph-level readout aggregates information from multiple connectivity patterns into a single predictive representation~\cite{cui2022interpretable,han2026rethinking}. Comparing full-scan and re-estimated representations as a whole does not directly reveal changes in individual patterns' predictive contributions. Resolving these changes requires consistent pattern correspondence across subjects, sites, and FC estimates.
(3) \textit{How can within-subject re-estimation variability be interpreted for cross-site alignment?}
FC re-estimation probes changes within subjects, whereas cross-site alignment compares representation distributions across sites. The same contribution change can have different implications depending on within-class subject variability and class separation, which may vary across sites~\cite{yamashita2025computational}. Thus, contribution changes need to be interpreted in the task context of both sites to assess a pattern's suitability for alignment.

To address these challenges, we propose \textbf{B}rain Network \textbf{R}e-estimation-\textbf{I}nformed \textbf{O}OD Learning (\method{}), a unified framework using within-scan FC re-estimation to guide cross-site alignment. It comprises three components:
(i) \textit{Correspondence-Preserving Connectome Factorization} constructs FC re-estimates from each BOLD sequence and maps full-scan and re-estimated graphs into consistently indexed connectome factors, enabling comparisons of predictive contributions.
(ii) \textit{Task-Calibrated Re-estimation Qualification} evaluates each factor's predictive relevance and assesses its re-estimation support for each site and class by calibrating contribution changes under FC re-estimation against within-class subject variability and class separation.
(iii) \textit{Qualification-Guided Cross-Site Alignment} combines predictive relevance with support from both sites to form factor qualifications for each source-site pair and class. Normalized qualifications determine relative factor weights, while total qualification modulates class-conditional alignment strength between full-scan representations. Leave-one-site-out (LOSO) experiments on real-world datasets (ABIDE, REST-meta-MDD, SRPBS, and ABCD) show that \method{} consistently outperforms competitive baselines, achieving relative accuracy gains of up to 3.8\% and maintaining its advantage under the alternative brain parcellation.

Our contributions are summarized as follows:
(1) We revisit cross-site brain network OOD generalization by considering within-scan re-estimation variability, consistent comparisons of pattern-level predictive contributions, and interpretation of re-estimation variability for cross-site alignment.
(2) We propose \method{}, which learns consistently indexed connectome factors and combines predictive relevance with task-calibrated re-estimation support to determine factor weights and class-conditional alignment strength.
(3) We conduct leave-one-site-out experiments on four real-world datasets, showing consistent gains over competitive baselines, including under the alternative brain parcellation.

%% file: main/2_related.tex
\vspace{-0.4cm}
\section{Related Work}
\vspace{-0.1cm}
\textbf{Cross-Site OOD Generalization in Brain Networks.}
Graph OOD generalization aims to learn robust predictors under environment shifts through invariant subgraph learning, separation of invariant and spurious information, and representation alignment~\cite{li2025out,chen2022learning,wang2024dissecting}. In multi-site neuroimaging, harmonization methods reduce site effects in feature distributions and covariance~\cite{yu2018statistical,chen2022mitigating,yangbrain}, while related analyses characterize how subject, scanner, and acquisition effects influence predictive biomarkers~\cite{yamashita2025computational,wang2026sgac}. To learn representations that generalize to unseen sites, brain network OOD methods employ transferable explanations, feature selection, and invariant subgraph learning~\cite{qiu2025metaexplainer}. Other approaches use causal augmentation~\cite{yu2025causal} or combine site-aware deconfounding with transferable connectivity dynamics~\cite{wang2026brain,wang2026usbd}. \method{} uses within-scan FC re-estimation support to qualify predictive factors for cross-site alignment.

\textbf{Functional Connectome Estimation and Reliability.}
Functional connectomes are estimated from finite BOLD sequences, with scan duration, temporal sampling, and autocorrelation affecting their reliability and statistical properties~\cite{birn2013effect,huotari2019sampling,xiang2026agents,xiang2026safety}. Test--retest studies examine how acquisition and processing choices affect FC reliability~\cite{noble2019decade}, while methodological comparisons show how connectivity estimators shape network structure~\cite{smith2011network}. Statistical approaches include empirical Bayes normalization of connectivity metrics and bootstrap procedures for inference and uncertainty quantification~\cite{chen2015empirical,bellec2008bootstrap,kudela2017assessing}. Recent work develops learnable connectome representations from BOLD signals~\cite{yangbrain}, uses temporal segments for contrastive representation learning~\cite{lamprou2026varconet}, and explicitly models connectivity uncertainty~\cite{pakravan2026uncertainty}. \method{} assesses task-calibrated re-estimation support through changes in each connectome factor's predictive contribution.

\begin{figure}
    \centering
    \includegraphics[width=0.98\linewidth]{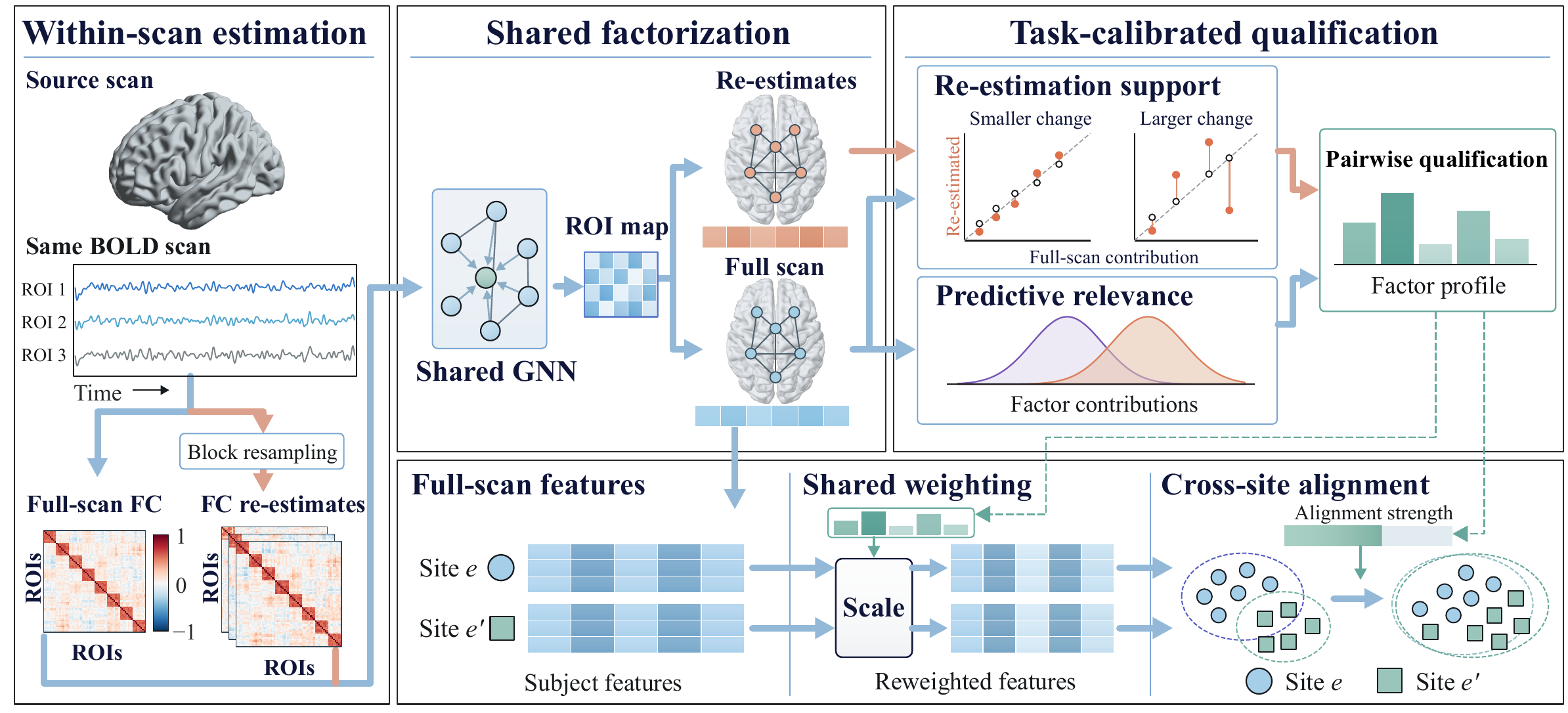}
    \vspace{-0.2cm}
    \caption{Overview of \method{}. Block resampling of the same BOLD scan produces FC re-estimates. Shared factorization uses a shared GNN and ROI map to obtain consistently indexed factors from full-scan and re-estimated graphs. Predictive relevance and task-calibrated re-estimation support from both sites form pairwise qualifications for each site pair and class, which control shared weighting and alignment strength for cross-site alignment of reweighted full-scan features.}
    \label{fig:framework}
    \vspace{-0.5cm}
\end{figure}

%% file: main/4_method.tex
\vspace{-0.1cm}
\section{Methodology}
\vspace{-0.1cm}

\textbf{Problem Setup.}
We study cross-site OOD generalization in brain network analysis with imaging sites as environments. Let $\mathcal E_{\mathrm{tr}}=\{1,\ldots,E\}$, with $E\geq2$, and $\mathcal E_{\mathrm{te}}$ denote disjoint source and unseen test environments. Each environment $e$ has a joint distribution $\mathbb P^e$ over BOLD sequences and labels. For each source site $e\in\mathcal E_{\mathrm{tr}}$, we observe $\mathcal D^e=\{(\bm X_i^e,Y_i^e)\}_{i=1}^{N_e}\sim(\mathbb P^e)^{N_e}$, where $\bm X_i^e\in\mathbb R^{T_i^e\times P}$ is the preprocessed BOLD sequence with $T_i^e$ valid time points and $P$ ROIs, and $Y_i^e\in\{0,1\}$ is the label. Sites share the label space, preprocessing pipeline, parcellation, and FC estimator, while $\mathbb P^e$ may differ. The shared FC estimator $\Psi$ yields full-scan graphs $\bm A_i^e=\Psi(\bm X_i^e)\in\mathbb R^{P\times P}$, whose entries encode FC between ROIs. Let $f_{\theta}$ denote a predictor operating on FC graphs. We train it using only source data and evaluate its average risk on unseen sites:
\begin{equation}
    \mathcal R_{\mathrm{OOD}}(f_{\theta})
    =
    \frac{1}{|\mathcal E_{\mathrm{te}}|}
    \sum_{e\in\mathcal E_{\mathrm{te}}}
    \mathbb E_{(\bm X,Y)\sim\mathbb P^{e}}
    \left[
        \ell\!\left(
            f_{\theta}\!\left(\Psi(\bm X)\right),
            Y
        \right)
    \right],
\end{equation}
where $\ell$ denotes the prediction loss.

\textbf{Overview.} As shown in Fig.~\ref{fig:framework}, \method{} uses within-scan FC re-estimation to guide cross-site alignment through three components:
(i) \textbf{Correspondence-Preserving Connectome Factorization} constructs FC re-estimates from each BOLD sequence and maps full-scan and re-estimated graphs into consistently indexed connectome factors using a shared signed GNN and shared ROI assignment.
(ii) \textbf{Task-Calibrated Re-estimation Qualification} evaluates source-balanced predictive relevance and assesses re-estimation support for each site and class by calibrating changes in factor-wise predictive contributions against within-class subject variability and between-class separation.
(iii) \textbf{Qualification-Guided Cross-Site Alignment} combines predictive relevance with support from both sites to form qualifications for each source-site pair and class. These qualifications determine shared factor weights and the overall strength of class-conditional alignment between reweighted full-scan representations.

\vspace{-0.1cm}
\subsection{Correspondence-Preserving Connectome Factorization}
\label{sec:factorization}

FC estimated from a finite BOLD sequence can vary with the temporal samples used~\cite{kudela2017assessing}, potentially changing the predictive contributions of connectivity patterns. To compare these changes at the pattern level, we construct within-scan FC re-estimates and map them together with full-scan graphs into consistently indexed connectome factors.

\textbf{(i) Within-Scan Re-estimation and Shared Connectome Encoding.}
BOLD signals exhibit temporal dependence and share a common time axis across ROIs. With $\bm A_i^{e,(0)}=\bm A_i^e$ denoting the full-scan graph, we construct $K$ FC re-estimates by jointly resampling temporal blocks:
\begin{equation}
\begin{aligned}
    \bm X_i^{e,(k)}
    &=
    \mathcal R_k\!\left(\bm X_i^e\right), \quad
    \bm A_i^{e,(k)}
    =
    \Psi\!\left(\bm X_i^{e,(k)}\right),
    \quad k=1,\ldots,K,
\end{aligned}
\label{eq:fc_reestimation}
\end{equation}
where $\mathcal R_k$ denotes joint circular moving-block resampling with a source-estimated block length $\ell_b$~\cite{shao1993bootstrapping,bellec2008bootstrap}. Resampling retains local temporal dependence within sampled blocks and preserves cross-ROI temporal correspondence through shared sampling indices. 

To encode all FC estimates over the same ROI pairs while retaining their estimate-specific weights, we construct a fixed sparse propagation support $\bm S\in\{0,1\}^{P\times P}$ solely from source full-scan graphs. For $u\neq v$, we compute
\begin{equation}
    \overline a_{uv}
    =
    \operatorname{median}_{e\in\mathcal E_{\mathrm{tr}}}
    \operatorname{median}_{1\leq i\leq N_e}
    \left|
        \left[\bm A_i^{e,(0)}\right]_{uv}
    \right|,
    \quad
    \bm S
    =
    \operatorname{SymTop}_{m}\!\left(\overline{\bm A}\right),
\label{eq:source_support}
\end{equation}
where $\overline{\bm A}=[\overline a_{uv}]$ has zero diagonal and $m=\lceil\rho_g(P-1)\rceil$, with $\rho_g\in(0,1]$. The operator $\operatorname{SymTop}_{m}$ selects the $m$ largest off-diagonal entries per row, sets $S_{uv}=S_{vu}=1$ if either ROI selects the other, and sets all remaining entries to zero. For $k=0,\ldots,K$, we retain each FC estimate's weights on this support and separate positive weights from negative-weight magnitudes:
\begin{equation}
\begin{aligned}
    \bm G_i^{e,(k)}
    =
    \bm S\odot\bm A_i^{e,(k)}, \quad
    \bm G_i^{e,(k),+}
    =
    \left[\bm G_i^{e,(k)}\right]_{+},
    \quad
    \bm G_i^{e,(k),-}
    =
    \left[-\bm G_i^{e,(k)}\right]_{+},
\end{aligned}
\label{eq:signed_graphs}
\end{equation}
where $\odot$ denotes elementwise multiplication and $[\cdot]_{+}$ takes the elementwise positive part.

The shared parcellation provides consistent ROI labels across graphs, while local connectivity on $\bm S$ varies across subjects and FC estimates. Let $\bm e_u^{\mathrm{ROI}}\in\mathbb R^{d_e}$ denote the learnable identity embedding of ROI $u$. We summarize its local connectivity using the descriptor $\bm c_{i,u}^{e,(k)}$:
\begin{equation}
    \bm c_{i,u}^{e,(k)}
    =
    \left[
        \frac{1}{d_u}\sum_v G_{i,uv}^{e,(k),+},
        \frac{1}{d_u}\sum_v G_{i,uv}^{e,(k),-},
        \sqrt{
            \frac{1}{d_u}
            \sum_v
            \left(G_{i,uv}^{e,(k)}\right)^2
        }
    \right]^{\top},
    \quad
    d_u=\sum_v S_{uv}.
\label{eq:node_connectivity_descriptor}
\end{equation}
Here, $G_{i,uv}^{e,(k)}$ and $G_{i,uv}^{e,(k),\pm}$ are entries of the corresponding adjacency matrices, and $d_u$ is the degree of ROI $u$ in $\bm S$. We combine ROI identity and connectivity to initialize node features, then encode them with the shared signed GNN:
\begin{equation}
\begin{aligned}
    \bm h_{i,u}^{e,(k),[0]}
    =
    \phi_{\mathrm{in}}
    \left(
        \bm e_u^{\mathrm{ROI}},
        \bm c_{i,u}^{e,(k)}
    \right), \quad
    \bm H_i^{e,(k)}
    =
    \mathcal G_{\theta_g}
    \left(
        \bm H_i^{e,(k),[0]},
        \bm G_i^{e,(k),+},
        \bm G_i^{e,(k),-}
    \right),
\end{aligned}
\label{eq:shared_signed_encoder}
\end{equation}
where $\phi_{\mathrm{in}}:\mathbb R^{d_e}\times\mathbb R^3\rightarrow\mathbb R^{d_h}$ initializes node features, $\bm H_i^{e,(k),[0]}\in\mathbb R^{P\times d_h}$ stacks them row-wise, and $\mathcal G_{\theta_g}$ is the shared signed GNN. Its output $\bm H_i^{e,(k)}\in\mathbb R^{P\times d_h}$ has $u$-th row $(\bm h_{i,u}^{e,(k)})^{\top}$.

\textbf{(ii) Shared Connectome Factorization and Factor-Wise Scoring.}
Predictive connectivity patterns may involve multiple ROIs~\cite{li2021braingnn}. We therefore pool the encoded ROI representations into connectome factors using shared ROI weighting profiles, so that each factor can be compared consistently across subjects, sites, and FC estimates.

Let $\bm E^{\mathrm{ROI}}=[\bm e_1^{\mathrm{ROI}},\ldots,\bm e_P^{\mathrm{ROI}}]^{\top}\in\mathbb R^{P\times d_e}$ stack the shared ROI identity embeddings, and let $\bm Q=[\bm q_1,\ldots,\bm q_J]^{\top}$ denote $J$ learnable factor queries. The shared assignment $\bm\Pi\in\mathbb R_{+}^{J\times P}$ is defined as
\begin{equation}
\begin{aligned}
    \bm\Pi
    =
    \mathcal A_{\omega}
    \left(
        \bm Q,
        \bm E^{\mathrm{ROI}}
    \right)
    , \quad
    \bm\Pi\bm 1_P
    =
    \frac{1}{J}\bm 1_J,
    \quad
    \bm\Pi^{\top}\bm 1_J
    =
    \frac{1}{P}\bm 1_P,
\end{aligned}
\label{eq:shared_assignment}
\end{equation}
where $\mathcal A_{\omega}$ is a learnable mapping that computes a balanced assignment from similarities between factor queries and ROI identity embeddings, and $\bm1_n$ denotes the length-$n$ all-ones vector. Since $\bm\Pi$ depends only on shared queries and ROI identity embeddings, each factor uses the same ROI weighting profile across subjects, sites, and FC estimates. Pooling the encoded ROI representations under these profiles gives the connectome factors:
\begin{equation}
    \bm r_{i,j}^{e,(k)}
    =
    \mathcal F_{\phi}
    \left(
        J
        \sum_{u=1}^{P}
        \Pi_{j,u}
        \bm h_{i,u}^{e,(k)}
    \right)
    \in\mathbb S^{d_t-1},
    \quad
    j=1,\ldots,J,
\label{eq:connectome_factors}
\end{equation}
where $\mathcal F_{\phi}:\mathbb R^{d_h}\rightarrow\mathbb S^{d_t-1}$ is a shared mapping with unit-norm output. 

To retain the individual factor representations, we concatenate them in a fixed order:
\begin{equation}
    \bm s_i^{e,(k)}
    =
    \frac{1}{\sqrt J}
    \left[
        \bm r_{i,1}^{e,(k)}
        \mathbin\Vert
        \cdots
        \mathbin\Vert
        \bm r_{i,J}^{e,(k)}
    \right].
\label{eq:factorized_representation}
\end{equation}
Here, $\Vert$ denotes concatenation. To attribute prediction changes to individual factors, we use a linear head on the concatenated representation, yielding the prediction logit and additive factor scores:
\begin{equation}
\begin{aligned}
    z_i^{e,(k)}
    =
    \frac{1}{\sqrt J}
    \sum_{j=1}^{J}
    \bm w_j^{\top}\bm r_{i,j}^{e,(k)}
    +b, \quad
    a_{i,j}^{e,(k)}
    =
    \bm w_j^{\top}\bm r_{i,j}^{e,(k)},
\end{aligned}
\label{eq:factorized_task_prediction}
\end{equation}
where $\bm w_j$ is the classifier block for factor $j$, $b$ is the bias, and $a_{i,j}^{e,(k)}$ is the factor's additive contribution to the logit before the common $1/\sqrt J$ scaling.

\subsection{Task-Calibrated Re-estimation Qualification}
\label{sec:qualification}

The factor-wise contributions may distinguish classes in full-scan graphs while varying across FC re-estimates of the same subject. Within-class subject variability and between-class separation provide a reference for assessing these changes at each site. We therefore evaluate source-balanced predictive relevance and task-calibrated re-estimation support.

To assess predictive relevance, we measure class separation in full-scan contributions relative to within-class variability. For source site $e$ and class $c\in\{0,1\}$, let $\mathcal I_{e,c}=\{i\in\{1,\ldots,N_e\}:Y_i^e=c\}$ and $N_{e,c}=|\mathcal I_{e,c}|$, with both classes represented at each source site. Each qualification evaluation computes the full-scan and re-estimated contributions of all subjects in $\mathcal I_{e,c}$ under the same fixed model snapshot. From the full-scan contributions, we define the empirical mean, within-class variance, and class contrast as
\begin{equation}
\begin{aligned}
    \mu_{e,c,j}
    = \frac{1}{N_{e,c}}\sum_{i\in\mathcal I_{e,c}}a_{i,j}^{e,(0)},
    \quad
    V_{e,c,j}^{\mathrm{sub}}
    = \frac{1}{N_{e,c}}\sum_{i\in\mathcal I_{e,c}}
      \left(a_{i,j}^{e,(0)}-\mu_{e,c,j}\right)^2,
    \quad
    \Delta_{e,j}
    = \mu_{e,1,j}-\mu_{e,0,j}.
\end{aligned}
\label{eq:factor_task_statistics}
\end{equation}
We aggregate these within-site variances and signed class contrasts with equal site weights so that larger cohorts do not dominate the relevance estimate:
\begin{equation}
\begin{aligned}
    V_{c,j}^{\mathrm{pool}}
    = \frac{1}{E}\sum_{e\in\mathcal E_{\mathrm{tr}}}V_{e,c,j}^{\mathrm{sub}},
    \quad
    \Delta_j
    = \frac{1}{E}\sum_{e\in\mathcal E_{\mathrm{tr}}}\Delta_{e,j}.
\end{aligned}
\label{eq:source_balanced_statistics}
\end{equation}
Opposing class contrasts offset one another before $\Delta_j$ is squared. We measure relative class separation by normalizing the squared aggregate contrast by within-class variability. To account for contribution scale, we further weight this ratio by the relative squared classifier-block norm $\beta_j$, since unit-norm factors satisfy $|a_{i,j}^{e,(k)}|\leq\|\bm w_j\|_2$. The resulting relevance scores are normalized across factors:
\begin{equation}
    \beta_j
    = \frac{\|\bm w_j\|_2^2}
      {\sum_{j'=1}^{J}\|\bm w_{j'}\|_2^2+\epsilon_w},
    \quad
    R_j^{\mathrm{task}}
    = \frac{\beta_j\Delta_j^2}
      {V_{0,j}^{\mathrm{pool}}+V_{1,j}^{\mathrm{pool}}+\epsilon_s},
    \quad
    \pi_j^{\mathrm{task}}
    = \frac{R_j^{\mathrm{task}}}
      {\sum_{j'=1}^{J}R_{j'}^{\mathrm{task}}},
\label{eq:task_relevance}
\end{equation}
where $\epsilon_w,\epsilon_s>0$ are numerical stabilizers. For $\sum_{j=1}^{J}R_j^{\mathrm{task}}>0$, the normalized scores $\pi_j^{\mathrm{task}}$ form a probability distribution over factors.

We next assess within-scan re-estimation changes by averaging the squared differences between each subject's re-estimated and full-scan contributions over subjects and re-estimates within each site and class:
\begin{equation}
    D_{e,c,j}^{\mathrm{re}}
    = \frac{1}{N_{e,c}K}
      \sum_{i\in\mathcal I_{e,c}}\sum_{k=1}^{K}
      \left(a_{i,j}^{e,(k)}-a_{i,j}^{e,(0)}\right)^2.
\label{eq:reestimation_variation}
\end{equation}
To interpret these deviations at each site, we combine full-scan within-class variability and class separation into a local reference scale:
\begin{equation}
\begin{aligned}
    M_{e,j}^{\mathrm{task}}
    = \frac{1}{4}\Delta_{e,j}^{2},
    \quad
    T_{e,c,j}
    = V_{e,c,j}^{\mathrm{sub}}+M_{e,j}^{\mathrm{task}},
\end{aligned}
\label{eq:task_calibration_scale}
\end{equation}
where $M_{e,j}^{\mathrm{task}}$ is the squared half-separation between the two class means at site $e$. Consequently, $T_{e,c,j}$ equals the empirical mean squared distance of full-scan contributions in class $c$ from the midpoint of these means. We use this reference scale to define re-estimation support for each site and class:
\begin{equation}
    q_{e,c,j}^{\mathrm{re}}
    = \frac{T_{e,c,j}+\epsilon_q}
      {T_{e,c,j}+D_{e,c,j}^{\mathrm{re}}+\epsilon_q}
    \in(0,1],
\label{eq:reestimation_support}
\end{equation}
where $\epsilon_q>0$ is a numerical stabilizer. Higher support indicates smaller mean squared contribution changes relative to the local full-scan reference scale.

\tcolorboxenvironment{proposition}{
  breakable,
  colback=blue!5,
  colframe=blue!50!black,
  boxrule=0.4pt,
  arc=3pt,
  left=3pt, right=3pt,
  top=2pt, bottom=2pt,
  grow to left by=1.4pt,
  grow to right by=1.4pt,
  before skip=3pt, after skip=9pt
}
\begin{proposition}[Class-Contrast Stability under FC Re-estimation]
\label{prop:reestimation_contrast_stability}
Fix the model snapshot, a source site $e$, and a factor $j$, with $N_{e,0},N_{e,1}>0$ and $K\geq1$. Let $\Delta_{e,j}^{\mathrm{re}}$ be the class contrast of contributions averaged over subjects and FC re-estimates, and let $\kappa_{e,c,j}=\Delta_{e,j}^2/\big(4(V_{e,c,j}^{\mathrm{sub}}+\epsilon_q)\big)$ be the full-scan separation ratio of class $c$. Then
\begin{equation}
    \left|\Delta_{e,j}^{\mathrm{re}}-\Delta_{e,j}\right|
    \leq \sum_{c\in\{0,1\}}
    \sqrt{\left(T_{e,c,j}+\epsilon_q\right)\left(\frac{1}{q_{e,c,j}^{\mathrm{re}}}-1\right)},
\label{eq:class_contrast_stability}
\end{equation}
and $\Delta_{e,j}^{\mathrm{re}}$ has the same sign as $\Delta_{e,j}$ whenever
$q_{e,c,j}^{\mathrm{re}}>(1+\kappa_{e,c,j})/(1+2\kappa_{e,c,j})$ for both $c\in\{0,1\}$.
\end{proposition}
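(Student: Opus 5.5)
The plan is to write the re-estimated class means as perturbations of the full-scan means, bound each perturbation by the mean squared deviation $D^{\mathrm{re}}_{e,c,j}$, and then rewrite $D^{\mathrm{re}}$ in terms of $q^{\mathrm{re}}$.

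\textbf{Re-estimated means.} Define the re-estimated class mean
$\mu^{\mathrm{re}}_{e,c,j}=\frac{1}{N_{e,c}K}\sum_{i\in\mathcal I_{e,c}}\sum_{k=1}^K a^{e,(k)}_{i,j}$, so that $\Delta^{\mathrm{re}}_{e,j}=\mu^{\mathrm{re}}_{e,1,j}-\mu^{\mathrm{re}}_{e,0,j}$. Since $\mu^{\mathrm{re}}_{e,c,j}-\mu_{e,c,j}$ is the average of the deviations $a^{e,(k)}_{i,j}-a^{e,(0)}_{i,j}$ over $i\in\mathcal I_{e,c}$ and $k\le K$, Jensen's (or Cauchy--Schwarz's) inequality gives
$|\mu^{\mathrm{re}}_{e,c,j}-\mu_{e,c,j}|\le\sqrt{D^{\mathrm{re}}_{e,c,j}}$.

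\textbf{First claim.} Inverting the definition \eqref{eq:reestimation_support} yields the exact identity
$D^{\mathrm{re}}_{e,c,j}=(T_{e,c,j}+\epsilon_q)(1/q^{\mathrm{re}}_{e,c,j}-1)$, which is valid because $T_{e,c,j}+\epsilon_q>0$. The triangle inequality then gives
$|\Delta^{\mathrm{re}}_{e,j}-\Delta_{e,j}|\le\sum_c|\mu^{\mathrm{re}}_{e,c,j}-\mu_{e,c,j}|$, and substituting the identity produces \eqref{eq:class_contrast_stability}.

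\textbf{Sign preservation.} If $\Delta_{e,j}=0$, then $\kappa_{e,c,j}=0$ and the threshold becomes $q^{\mathrm{re}}_{e,c,j}>1$. This is impossible, so the condition is vacuous and nothing needs proving. Otherwise the sign is preserved whenever the perturbation is strictly smaller than $|\Delta_{e,j}|$. It therefore suffices to show the per-class bound
$\sqrt{D^{\mathrm{re}}_{e,c,j}}<|\Delta_{e,j}|/2$ for each $c$, since summing over the two classes gives total perturbation $<|\Delta_{e,j}|$.

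\textbf{Main obstacle: translating the threshold.} Write $W=V^{\mathrm{sub}}_{e,c,j}+\epsilon_q$ and $M=\Delta_{e,j}^2/4$, so that $\kappa_{e,c,j}=M/W$ and $T_{e,c,j}+\epsilon_q=W+M$. The condition $q>(1+\kappa)/(1+2\kappa)$ is equivalent to
$1/q-1<\kappa/(1+\kappa)=M/(W+M)$.
Multiplying by $W+M$ and using the identity for $D^{\mathrm{re}}$ gives $D^{\mathrm{re}}_{e,c,j}<M=\Delta_{e,j}^2/4$, which is exactly the required per-class bound.

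The argument is elementary. The only care needed is to use the $\epsilon_q$-shifted variance in $\kappa$ so that the algebra closes exactly, and to handle the degenerate case $\Delta_{e,j}=0$ separately.
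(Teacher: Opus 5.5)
Your proof is correct and follows essentially the same route as the paper: Cauchy--Schwarz on the paired changes gives $|\mu^{\mathrm{re}}_{e,c,j}-\mu_{e,c,j}|\le\sqrt{D^{\mathrm{re}}_{e,c,j}}$, then the exact identity $D^{\mathrm{re}}_{e,c,j}=(T_{e,c,j}+\epsilon_q)(1/q^{\mathrm{re}}_{e,c,j}-1)$ and the triangle inequality give the bound, and the support threshold reduces to $D^{\mathrm{re}}_{e,c,j}<\Delta_{e,j}^2/4$. The differences are cosmetic: the paper checks $T_{e,c,j}\ge0$ through the midpoint identity and obtains $\Delta_{e,j}\neq0$ implicitly from the strict inequality, whereas you treat $\Delta_{e,j}=0$ separately as a vacuous case.
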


Proposition~\ref{prop:reestimation_contrast_stability} bounds the change of class contrast under FC re-estimation by the re-estimation variation, and shows that the support needed to preserve the sign of the contrast decreases with full-scan class separation, from one for weakly separated factors to one half for strongly separated ones. A factor with high relevance but low support, or low relevance but high support, can thus flip its class contrast under re-estimation, so we qualify factors by assessing the two criteria jointly.

\subsection{Qualification-Guided Cross-Site Alignment}
\label{sec:alignment}

Predictive relevance combines class separation with contribution scale across source sites, whereas re-estimation support varies by site and class. For class-conditional alignment, we combine predictive relevance with support from both sites to determine relative factor weights and alignment strength.

For two distinct source sites $e,e'\in\mathcal E_{\mathrm{tr}}$, class $c\in\{0,1\}$, and factor $j$, we combine re-estimation support from both sites and weight it by predictive relevance:
\begin{equation}
    q_{e,e',c,j}^{\mathrm{pair}}
    =
    \sqrt{
        q_{e,c,j}^{\mathrm{re}}
        q_{e',c,j}^{\mathrm{re}}
    },
    \quad
    \alpha_{e,e',c,j}
    =
    \pi_j^{\mathrm{task}}
    q_{e,e',c,j}^{\mathrm{pair}}.
\label{eq:pairwise_factor_qualification}
\end{equation}
The geometric mean treats the two sites symmetrically and decreases when support at either site decreases. We normalize the qualifications to determine relative factor weights and retain their sum as the overall qualification for scaling the alignment loss:
\begin{equation}
\begin{aligned}
    g_{e,e',c}
    =
    \sum_{j=1}^{J}
    \alpha_{e,e',c,j},
    \quad
    \overline{\alpha}_{e,e',c,j}
    =
    \frac{
        \alpha_{e,e',c,j}
    }{
        g_{e,e',c}
    },
    \quad
    \sum_{j=1}^{J}
    \overline{\alpha}_{e,e',c,j}
    =
    1,
\end{aligned}
\label{eq:pairwise_qualification_profile}
\end{equation}
where $0<g_{e,e',c}\leq1$. 

\begin{proposition}[Variational Form of Pairwise Qualification]
\label{prop:pairwise_qualification_variational}
For a probability vector $\bm\pi$ over $J$ factors and supports $q_j\in(0,1]$, define $\alpha_j=\pi_jq_j$, $g=\sum_{j=1}^{J}\alpha_j$, and $\overline\alpha_j=\alpha_j/g$. Let $\mathcal V_{\pi}$ be the set of probability vectors $\bm v$ with $v_j=0$ whenever $\pi_j=0$, and define
\begin{equation}
    \mathcal J(\bm v)
    = \operatorname{KL}\!\left(\bm v\,\middle\|\,\bm\pi\right)
    + \sum_{j=1}^{J}v_j\log\frac{1}{q_j}.
\label{eq:pairwise_qualification_variational_objective}
\end{equation}
Then $\overline{\bm\alpha}$ is the unique minimizer of $\mathcal J$ over $\mathcal V_{\pi}$, with minimum value $-\log g$, and
\begin{equation}
    \overline\alpha_j
    = \frac{\partial\log g}{\partial\log q_j},
    \quad
    -\log g
    \leq \sum_{j=1}^{J}\pi_j\log\frac{1}{q_j},
\label{eq:pairwise_qualification_variational_solution}
\end{equation}
where equality holds if and only if $q_j$ is constant over the factors with $\pi_j>0$.
\end{proposition}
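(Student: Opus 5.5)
The plan is to rewrite $\mathcal J$ as a single KL divergence against the normalized qualification vector $\overline{\bm\alpha}$. This is the Gibbs variational principle (Donsker--Varadhan). Let $S=\{j:\pi_j>0\}$. Every $\bm v\in\mathcal V_\pi$ is supported on $S$, so with the convention $0\log 0=0$ we have, for $\bm v\in\mathcal V_\pi$,
\begin{equation*}
\mathcal J(\bm v)=\sum_{j\in S}v_j\log\frac{v_j}{\pi_j q_j}=\sum_{j\in S}v_j\log\frac{v_j}{\alpha_j}.
\end{equation*}
Since $q_j>0$, we get $\alpha_j>0$ exactly on $S$ and $g=\sum_{j\in S}\alpha_j>0$. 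Moreover $g\le\sum_j\pi_j=1$ because $q_j\le1$. Substituting $\alpha_j=g\,\overline\alpha_j$ gives
\begin{equation*}
\mathcal J(\bm v)=\operatorname{KL}(\bm v\,\|\,\overline{\bm\alpha})-\log g .
\end{equation*}
Here $\overline{\bm\alpha}$ is a probability vector supported on $S$, and it lies in $\mathcal V_\pi$.

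Gibbs' inequality gives $\operatorname{KL}(\bm v\|\overline{\bm\alpha})\ge0$, with equality iff $\bm v=\overline{\bm\alpha}$. Hence $\overline{\bm\alpha}$ is the unique minimizer over $\mathcal V_\pi$, and the minimum value is $-\log g$. This is the only step that needs care. One must restrict to $\mathcal V_\pi$ so that the $\log(v_j/\pi_j)$ terms are finite. One must also check that the zero-entries convention makes the two expressions for $\mathcal J$ agree.

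For the derivative identity, treat $\log q_j$ as the variable, so $g=\sum_j\pi_j e^{\log q_j}$. Then
\begin{equation*}
\frac{\partial\log g}{\partial\log q_j}=\frac{\pi_jq_j}{g}=\overline\alpha_j.
\end{equation*}

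For the inequality, evaluate $\mathcal J$ at the feasible point $\bm v=\bm\pi$. Since $\operatorname{KL}(\bm\pi\|\bm\pi)=0$,
\begin{equation*}
-\log g=\min_{\mathcal V_\pi}\mathcal J\le\mathcal J(\bm\pi)=\sum_j\pi_j\log\frac{1}{q_j}.
\end{equation*}
This is equivalently Jensen's inequality for the concave $\log$ applied to $\log\sum_j\pi_jq_j$.

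By uniqueness, equality holds iff $\bm\pi=\overline{\bm\alpha}$, that is, iff $\pi_j=\pi_jq_j/g$ for all $j\in S$. This is equivalent to $q_j=g$ for every $j\in S$, i.e. $q_j$ is constant on $\{\pi_j>0\}$. Conversely, if $q_j\equiv q$ on $S$, then $g=q$ and $\overline{\bm\alpha}=\bm\pi$, so equality holds.
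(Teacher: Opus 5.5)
Your proposal is correct and follows essentially the same route as the paper: rewrite $\mathcal J(\bm v)=\operatorname{KL}(\bm v\,\|\,\overline{\bm\alpha})-\log g$ on $\mathcal V_\pi$, conclude uniqueness and the minimum value from nonnegativity of KL, and compute the log-derivative directly. You then get the inequality and its equality case by evaluating at $\bm v=\bm\pi$ and using uniqueness. The only difference is that the paper proves Gibbs' inequality explicitly via $\log t\le t-1$, including the case where $\bm v$ has smaller support than $\bm\pi$, whereas you cite it as standard.
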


With $\bm\pi=\bm\pi^{\mathrm{task}}$ and $q_j=q^{\mathrm{pair}}_{e,e',c,j}$, Proposition~\ref{prop:pairwise_qualification_variational} recovers $\overline{\bm\alpha}_{e,e',c}$ and $g_{e,e',c}$. The normalized qualifications are the weights closest to predictive relevance under a penalty for weak support, each equal to the sensitivity of alignment strength to its factor's support. Weights and strength thus derive from one objective, and reweighting lowers the support penalty below relevance-only weighting.

We use the normalized qualifications derived from predictive contributions to weight full-scan factor representations identically at both sites. For $d\in\{e,e'\}$ and subject $i\in\mathcal I_{d,c}$, the weighted representation is
\begin{equation}
    \bm\varphi_{i\mid e,e',c}^{Q,d}
    =
    \frac{1}{\sqrt{2}}
    \left[
        \sqrt{\overline{\alpha}_{e,e',c,1}}\bm r_{i,1}^{d,(0)}
        \mathbin\Vert \cdots \mathbin\Vert
        \sqrt{\overline{\alpha}_{e,e',c,J}}\bm r_{i,J}^{d,(0)}
    \right].
\label{eq:pair_specific_representation}
\end{equation}
For subjects $i\in\mathcal I_{e,c}$ and $h\in\mathcal I_{e',c}$, their squared distance in the common weighted space is
\begin{equation}
\begin{aligned}
    \left\|
        \bm\varphi_{i\mid e,e',c}^{Q,e}
        -
        \bm\varphi_{h\mid e,e',c}^{Q,e'}
    \right\|_2^2
    =
    \sum_{j=1}^{J}\overline{\alpha}_{e,e',c,j}
    \left[
        1-\left(\bm r_{i,j}^{e,(0)}\right)^{\top}\bm r_{h,j}^{e',(0)}
    \right].
\end{aligned}
\label{eq:qualified_pairwise_distance}
\end{equation}
Each factor's cosine dissimilarity is weighted by its relative qualification. Since the factors have unit norm and the weights sum to one, the squared distance lies in $[0,2]$. We then construct the empirical class-conditional distributions in this weighted space. Given nonempty source subsets $\mathcal B_{e,c}\subseteq\mathcal I_{e,c}$ and $\mathcal B_{e',c}\subseteq\mathcal I_{e',c}$, we define
\begin{equation}
\begin{aligned}
    \widehat P_{e,c}^{Q[e,e']}
    &=
    \frac{1}{|\mathcal B_{e,c}|}
    \sum_{i\in\mathcal B_{e,c}}
    \delta_{\bm\varphi_{i\mid e,e',c}^{Q,e}},
    \quad
    \widehat P_{e',c}^{Q[e,e']}
    =
    \frac{1}{|\mathcal B_{e',c}|}
    \sum_{h\in\mathcal B_{e',c}}
    \delta_{\bm\varphi_{h\mid e,e',c}^{Q,e'}},
\end{aligned}
\label{eq:qualified_site_distributions}
\end{equation}
where $\delta_{\bm x}$ denotes a point mass at $\bm x$. We compare these distributions using the debiased Sinkhorn divergence $\mathcal S_{\varepsilon}$ with the squared Euclidean ground cost and entropic regularization parameter $\varepsilon>0$~\cite{cuturi2013sinkhorn,feydy2019interpolating}. Details of debiased Sinkhorn divergence $\mathcal S_{\varepsilon}$ are provided in Appendix~\ref{app:sinkhorn}.  Weighting each divergence by $g_{e,e',c}$ and averaging over source-site pairs and classes yields the alignment loss:
\begin{equation}
\begin{aligned}
    \mathcal L_{\mathrm{env}}^{Q}
    =
    \frac{2}{E(E-1)}
    \sum_{\substack{e,e'\in\mathcal E_{\mathrm{tr}}\\e<e'}}
    \frac{1}{2}
    \sum_{c\in\{0,1\}}
    g_{e,e',c}
    \mathcal S_{\varepsilon}
    \left(
        \widehat P_{e,c}^{Q[e,e']},
        \widehat P_{e',c}^{Q[e,e']}
    \right).
\end{aligned}
\label{eq:qualification_guided_alignment}
\end{equation}

\vspace{-0.4cm}
\subsection{Learning Objective}
\label{sec:objective}

For each full-scan graph, the predicted probability is $\widehat p_i^{e,(0)}=\sigma\!\left(z_i^{e,(0)}\right)$, where $\sigma$ denotes the sigmoid function. We define the classification loss with equal weights across source sites and classes:
\begin{equation}
    \mathcal L_{\mathrm{cls}}
    =
    \frac{1}{2E}
    \sum_{e\in\mathcal E_{\mathrm{tr}}}
    \sum_{c\in\{0,1\}}
    \frac{1}{N_{e,c}}
    \sum_{i\in\mathcal I_{e,c}}
    \ell_{\mathrm{BCE}}
    \left(
        \widehat p_i^{e,(0)},Y_i^e
    \right),
\label{eq:classification_loss}
\end{equation}
where $\ell_{\mathrm{BCE}}$ denotes binary cross-entropy. The overall objective is
\begin{equation}
    \mathcal L
    =
    \mathcal L_{\mathrm{cls}}
    +
    \lambda_{\mathrm{env}}\mathcal L_{\mathrm{env}}^{Q},
\label{eq:overall_objective}
\end{equation}
where $\lambda_{\mathrm{env}}\geq0$ controls the strength of cross-site alignment.

%% file: main/5_experiments.tex
\section{Experiments}
\textbf{Datasets.} To evaluate the effectiveness of \method{}, we conduct LOSO experiments on four real-world fMRI datasets: ABIDE~\cite{abide}, REST-meta-MDD~\cite{restmeatmdd}, SRPBS~\cite{srpbs}, and ABCD~\cite{abcd}. These datasets cover prediction tasks related to autism spectrum disorder (ASD), major depressive disorder (MDD), and attention-deficit/hyperactivity disorder (ADHD), with ABCD providing a developmental cohort. We additionally include ABIDE (CC200), constructed using the CC200 functional parcellation, to examine the effect of atlas choice. More details of these datasets are provided in Appendix~\ref{app:data}.

\input{table/OOD}

\textbf{Baselines.} We compare \method{} with competitive baselines as follows:
(1) General GNNs: GCN~\cite{kipf2016gcn}, GAT~\cite{petar2018gat}, and GIN~\cite{xu2018gin};
(2) General OOD methods: CORAL~\cite{sun2016coral} and IRM~\cite{arjovsky2019irm};
(3) Graph OOD methods: GSAT~\cite{miao2022gsat}, DisC~\cite{fan2022disc}, CEPG~\cite{wang2026cepg}, and DiSCO~\cite{sun2026disco};
(4) Brain network methods: BrainNetTF~\cite{kan2022brainnttf}, XG-GNN~\cite{qiu2024towards}, AGMGC~\cite{noman2025agmgc}, FC-HGNN~\cite{gu2025fchgnn}, BrainOOD~\cite{xu2025brainood}, DeCI~\cite{yu2026deci} and CORE~\cite{wang2026brain}. More details of baselines are provided in Appendix~\ref{app:baseline}.

\subsection{Performance Comparison}

We compare \method{} with baseline methods under the LOSO protocol on ABIDE, REST-meta-MDD, SRPBS, and ABCD in Table~\ref{tab:main_results_ood}. We observe that:
(1) General OOD methods do not consistently outperform general GNNs across datasets, suggesting that generic robustness objectives offer limited gains in cross-site brain network generalization.
(2) The best-performing brain network methods consistently outperform graph OOD methods, while the gains from other brain network models vary across datasets. This suggests that cross-site generalization benefits not only from modeling brain connectivity but also from learning task-relevant patterns that transfer across sites.
(3) \method{} consistently outperforms all baselines across datasets. These gains can be attributed to two complementary mechanisms: (i) shared connectome factorization enables consistent comparisons of predictive contributions across FC estimates, while task-calibrated qualification identifies task-relevant factors supported under re-estimation; (ii) qualification-guided alignment prioritizes factors supported by both sites and adjusts the overall alignment strength accordingly, limiting the influence of weakly supported factors on cross-site matching. We further evaluate performance on ABIDE using CC200~\citep{cc200} instead of AAL. Although baseline rankings vary between the two parcellations, \method{} outperforms all baselines on both metrics, indicating that its gains persist across parcellations.

\vspace{-0.1cm}
\subsection{Ablation Study}\label{sec:ablation}
\vspace{-0.1cm}

We evaluate four ablation variants: (1) \method{} w/o RS uses task relevance alone to weight factors and fixes the strength multiplier to one; (2) \method{} w/o TR replaces explicit task relevance weights with uniform weights while retaining task-calibrated re-estimation support; (3) \method{} w/o SM retains qualification-based factor weights but fixes the strength multiplier to one; and (4) \method{} w/o SF uses task relevance alone to weight factors while retaining qualification-based strength modulation. As shown in Fig.~\ref{fig:sensitivity}(a,b), removing RS reduces performance, indicating that assessing predictive contributions under FC re-estimation provides benefits beyond full-scan task relevance. Removing TR degrades performance, highlighting the value of prioritizing discriminative factors rather than relying on re-estimation support alone. The performance drops without SM and SF support the complementary roles of qualification in modulating alignment strength for each site pair and class and prioritizing supported factors in cross-site matching. More results are shown in Appendix~\ref{app:ablation}.

\begin{figure}[t]
    \centering
    \captionsetup[subfigure]{font=small, skip=3pt}

    \begin{subfigure}[b]{0.24\linewidth}
        \centering
        \parbox[c][0.70\linewidth][c]{\linewidth}{%
            \centering
            \includegraphics[
                width=\linewidth,
                height=0.67\linewidth,
                keepaspectratio,
                trim=10bp 46bp 3bp 0bp,
                clip
            ]{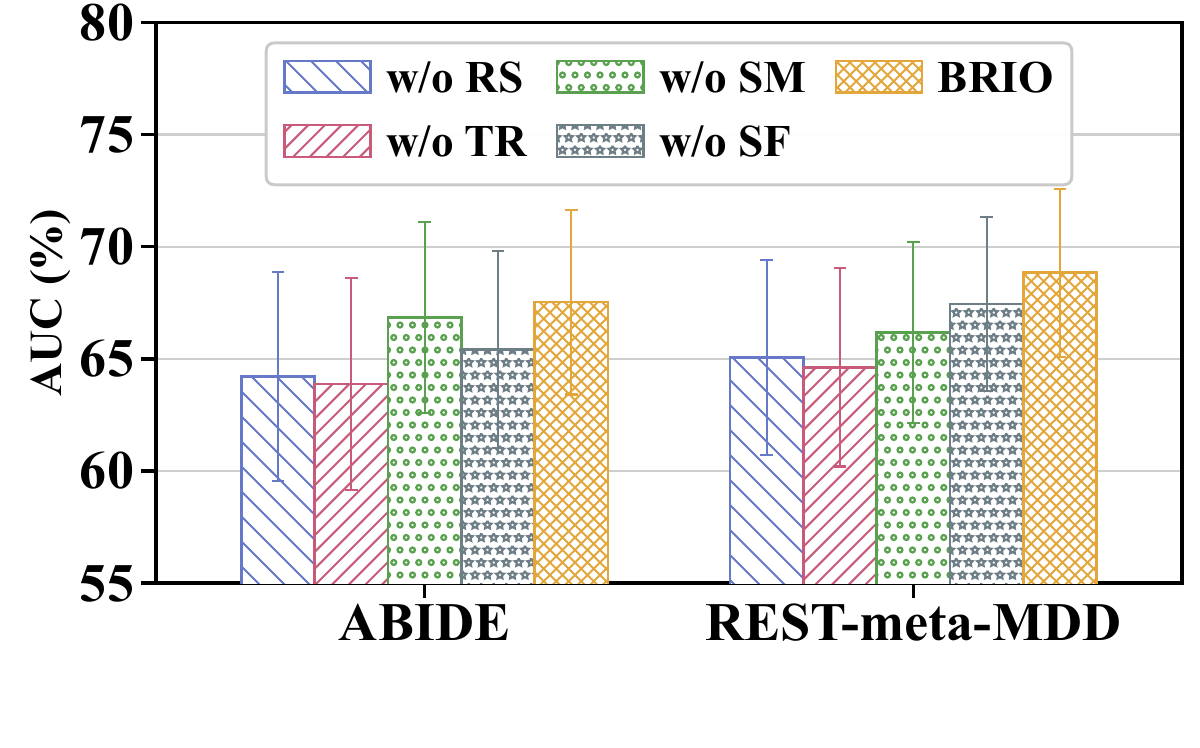}%
        }
        \caption{AUC}
    \end{subfigure}\hfill
    \begin{subfigure}[b]{0.24\linewidth}
        \centering
        \parbox[c][0.70\linewidth][c]{\linewidth}{%
            \centering
            \includegraphics[
                width=\linewidth,
                height=0.67\linewidth,
                keepaspectratio,
                trim=10bp 46bp 3bp 0bp,
                clip
            ]{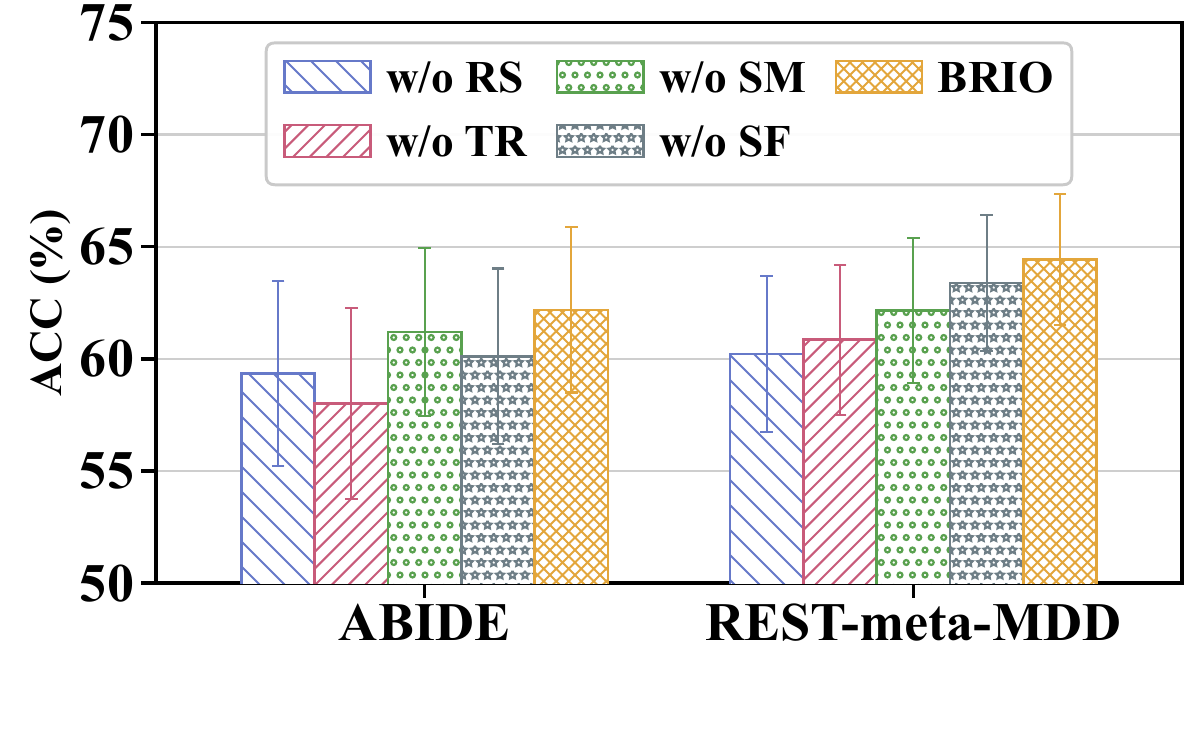}%
        }
        \caption{ACC}
    \end{subfigure}\hfill
    \begin{subfigure}[b]{0.24\linewidth}
        \centering
        \parbox[c][0.70\linewidth][c]{\linewidth}{%
            \centering
            \includegraphics[
                width=\linewidth,
                height=0.67\linewidth,
                keepaspectratio,
                trim=74bp 43bp 80bp 43bp,
                clip
            ]{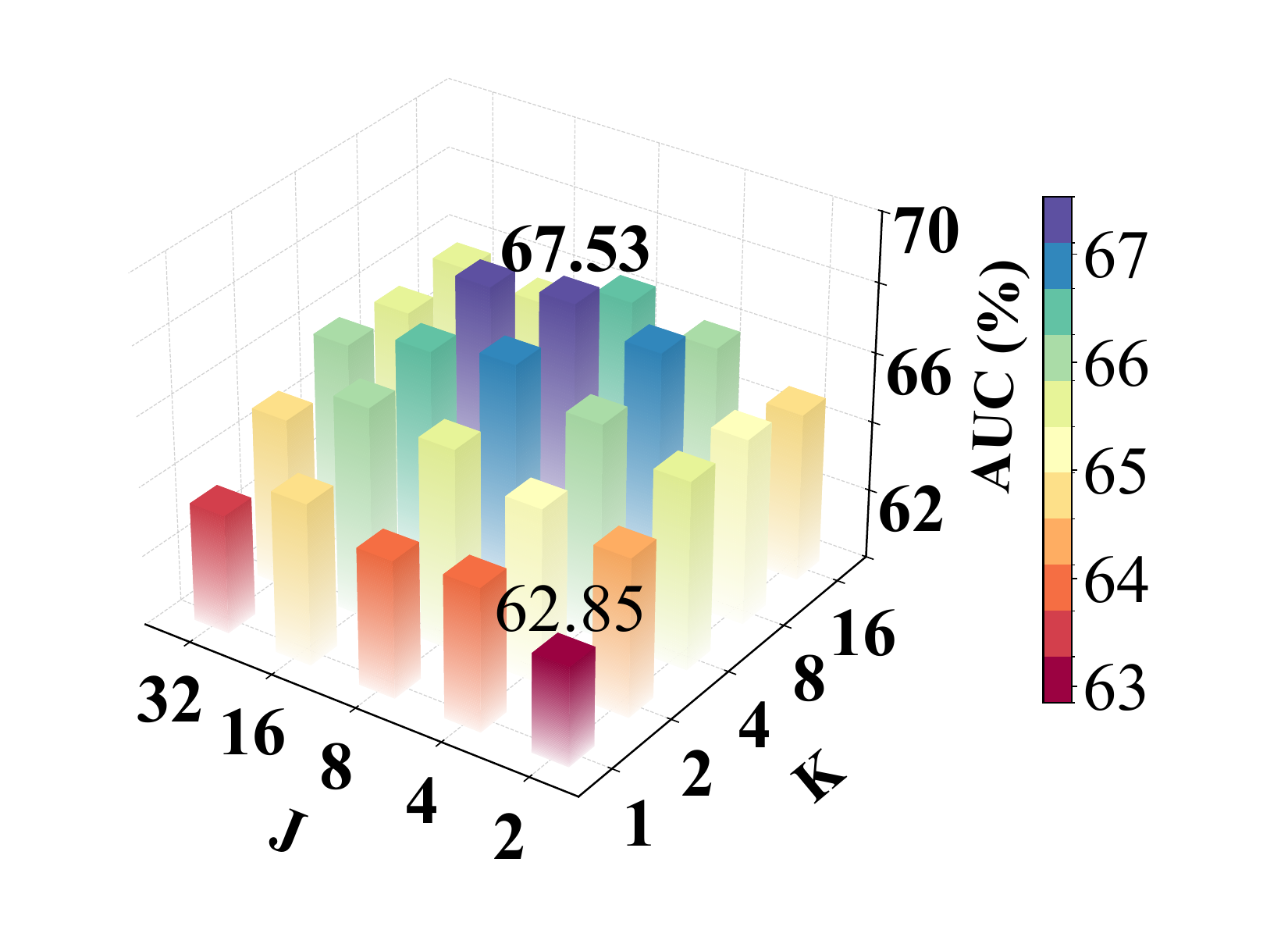}%
        }
        \caption{ABIDE}
    \end{subfigure}\hfill
    \begin{subfigure}[b]{0.24\linewidth}
        \centering
        \parbox[c][0.70\linewidth][c]{\linewidth}{%
            \centering
            \includegraphics[
                width=\linewidth,
                height=0.67\linewidth,
                keepaspectratio,
                trim=74bp 43bp 80bp 43bp,
                clip
            ]{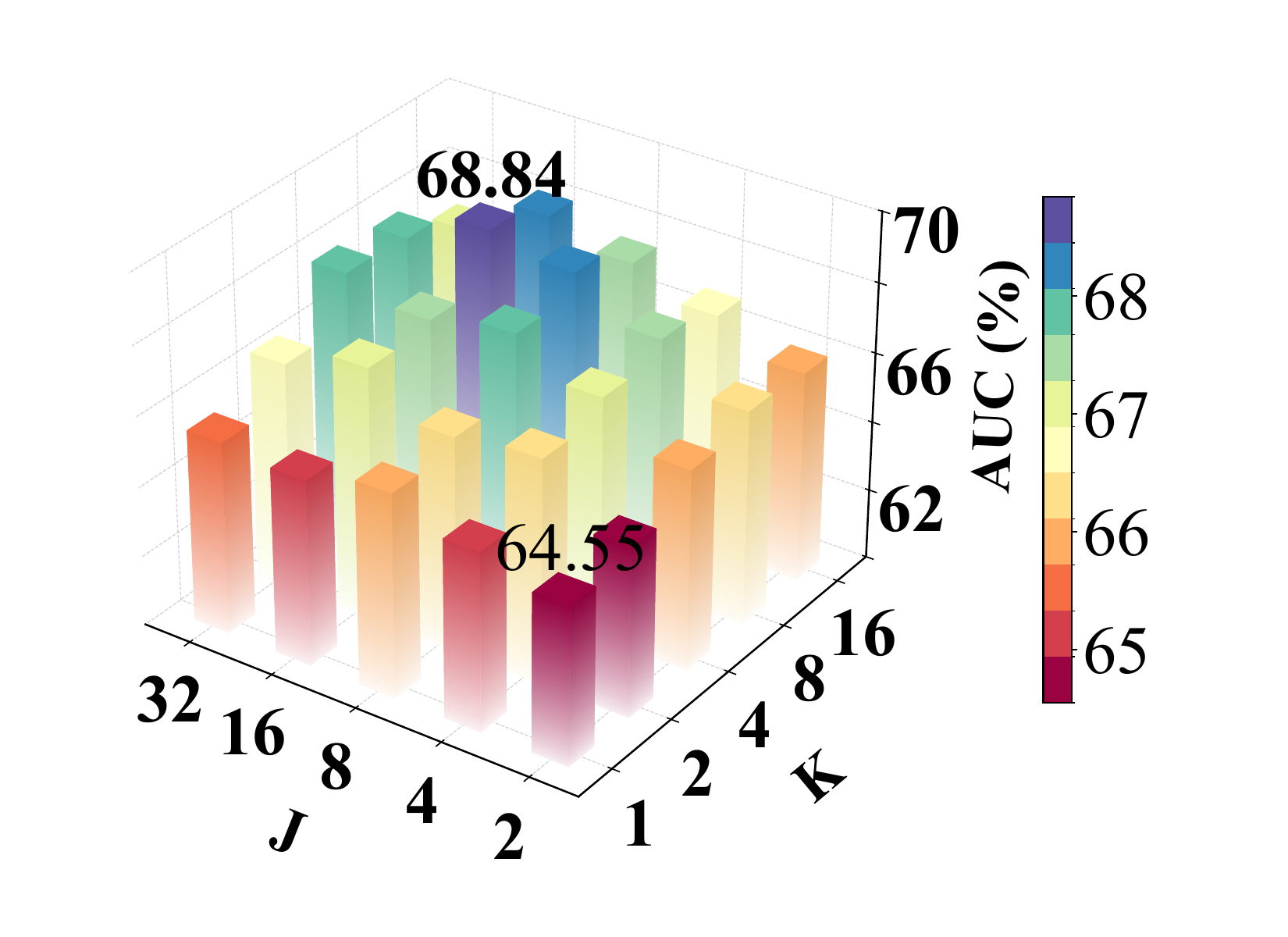}%
        }
        \caption{REST-meta-MDD}
    \end{subfigure}
    \vspace{-0.2cm}
    \caption{Ablation results in (a,b) and sensitivity results in (c,d)
    on ABIDE and REST-meta-MDD.}
    \vspace{-0.3cm}
    \label{fig:sensitivity}
\end{figure}

\begin{figure*}[t]
\centering
\captionsetup[subfigure]{
    font=small,skip=3pt,justification=centering
}
\begin{subfigure}[b]{0.25\linewidth}
    \centering
    \includegraphics[width=\linewidth]{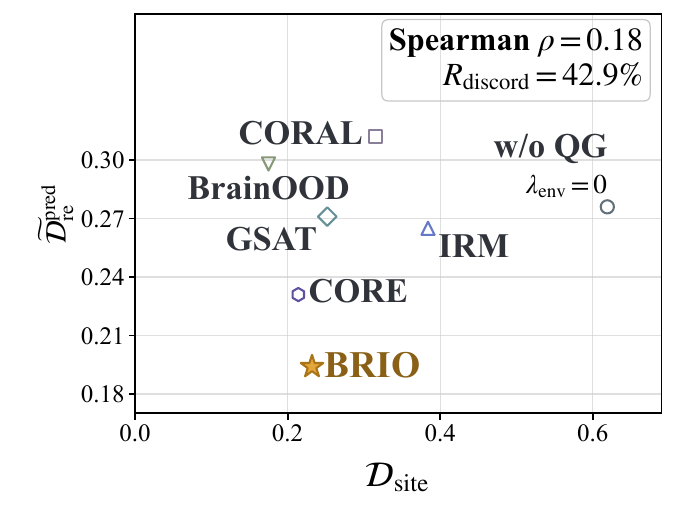}
    \caption{Cross-method}
    \label{fig:cross-method-abide}
\end{subfigure}%
\begin{subfigure}[b]{0.25\linewidth}
    \centering
    \includegraphics[width=\linewidth]{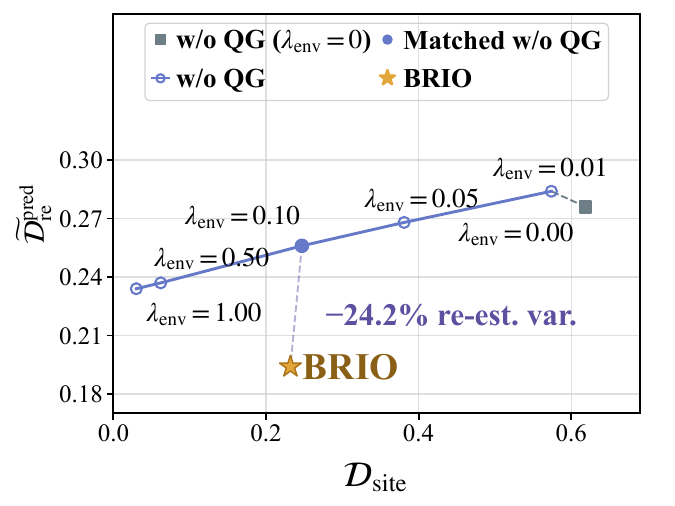}
    \caption{Alignment}
    \label{fig:alignment-abide}
\end{subfigure}%
\begin{subfigure}[b]{0.26\linewidth}
    \centering
    \includegraphics[width=\linewidth]{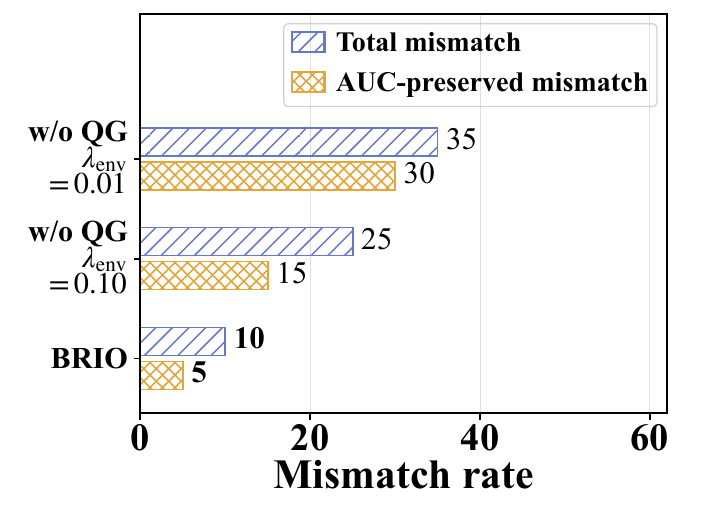}
    \caption{Mismatch}
    \label{fig:mismatch-abide}
\end{subfigure}%
\begin{subfigure}[b]{0.24\linewidth}
    \centering
    \includegraphics[width=\linewidth]{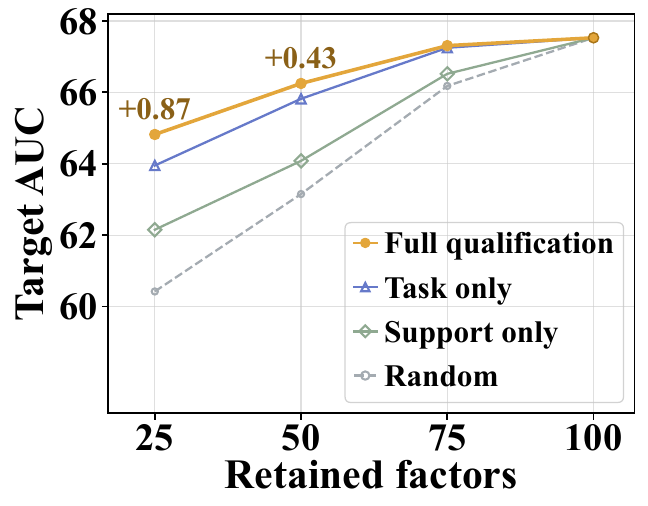}
    \caption{Factor retention}
    \label{fig:factor-retention-abide}
\end{subfigure}
\vspace{-0.6cm}
\caption{Cross-method relationships (a), alignment trajectories (b),
mismatch rates (c), and factor retention (d) on ABIDE.}
\label{fig:diagnostics-retention-abide}
\vspace{-0.2cm}
\end{figure*}

\subsection{Sensitivity Study}
\vspace{-0.1cm}

We examine the sensitivity of \method{} to the number of FC re-estimates $K$ and connectome factors $J$. Fig.~\ref{fig:sensitivity}(c,d) shows that increasing $K$ from small values generally improves AUC on ABIDE and REST-meta-MDD, whereas further increases yield no consistent gains. This suggests that additional re-estimates help characterize variation in predictive contributions, with diminishing benefits at larger budgets. We vary $J$ with the total representation dimension fixed. Intermediate values perform better than very small or large values on both datasets, suggesting a trade-off between separating connectivity patterns and preserving sufficient capacity within each factor. Too few factors may combine patterns with distinct predictive roles and re-estimation responses, whereas too many reduce each factor's dimension and may limit its discriminative capacity. More results are shown in Appendix~\ref{app:sensitivity}.

\vspace{-0.2cm}
\subsection{Case Study}\label{sec:case_study}
\textbf{Site Discrepancy and Re-estimation Variation.} To examine whether lower source-site discrepancy implies lower predictive variation under FC re-estimation, we conduct cross-method comparisons and a controlled alignment-strength sweep, the latter using \method{} w/o QG, which removes qualification guidance by assigning equal weights to all factors and disabling qualification-based strength modulation, trained independently for each tested $\lambda_{\mathrm{env}}$. We measure class-conditional full-scan site discrepancy $\mathcal D_{\mathrm{site}}$ in Eq.~\eqref{eq:app_site_discrepancy} and normalized predictive re-estimation variation $\widetilde{\mathcal D}_{\mathrm{re}}^{\mathrm{pred}}$ in Eq.~\eqref{eq:app_predictive_variation}, and summarize their relation across methods by Spearman $\rho$, the discordance rate $R_{\mathrm{discord}}$, and the total and AUC-preserved mismatch rates defined in Appendix~\ref{app:diagnostics}. Fig.~\ref{fig:diagnostics-retention-abide}(a) shows that lower $\mathcal D_{\mathrm{site}}$ does not consistently correspond to lower $\widetilde{\mathcal D}_{\mathrm{re}}^{\mathrm{pred}}$ across methods, so cross-site agreement alone does not imply stable within-subject predictions. In the controlled sweep in Fig.~\ref{fig:diagnostics-retention-abide}(b), stronger alignment reduces both quantities, while the full \method{} achieves lower $\widetilde{\mathcal D}_{\mathrm{re}}^{\mathrm{pred}}$ than the w/o QG setting at comparable $\mathcal D_{\mathrm{site}}$, indicating that qualification-guided weighting and strength modulation offer benefits beyond simply increasing alignment strength. Beyond these average trends, Fig.~\ref{fig:diagnostics-retention-abide}(c) shows that \method{} achieves lower total and AUC-preserved mismatch rates than the evaluated w/o QG settings, with fewer mismatches even when predictive performance is preserved.

\textbf{Qualification and Factor Transferability.} To examine whether source-side qualification identifies transferable factors, we conduct a factor-retention experiment on ABIDE using a fixed, trained \method{} model. We rank factors by full qualification, task relevance alone, or re-estimation support alone, with random ranking as a reference. Qualification and support scores are averaged over source-site pairs and classes before ranking. For each ranking, we retain varying proportions of the highest-ranked factors and evaluate AUC on held-out sites using the original classifier without retraining, allowing direct comparisons of factor selection strategies. Fig.~\ref{fig:diagnostics-retention-abide}(d) shows that full qualification yields the highest AUC at partial retention, with larger gains when fewer factors are retained. Its advantage over task relevance alone suggests that re-estimation support provides complementary information about factor transferability, while the weaker performance of support alone highlights the importance of considering both criteria jointly. As more factors are retained, the performance gaps narrow and disappear at full retention, where all rankings recover the original model predictions.

%% file: table/OOD.tex
\begin{table*}[t]
    \centering
    \caption{Results on ABIDE, ABIDE (CC200), REST-meta-MDD, SRPBS, and ABCD (ADHD-task)
    over LOSO sites. \textbf{Bold} results indicate the best performance.}
    \vspace{-0.3cm}
    \label{tab:main_results_ood}
    \resizebox{\textwidth}{!}{
    \begin{tabular}{cc |cc |cc |cc |cc |cc}
    \toprule
    \multirow{2}{*}{Type} & \multirow{2}{*}{Method}
    & \multicolumn{2}{c|}{ABIDE}
    & \multicolumn{2}{c|}{ABIDE (CC200)}
    & \multicolumn{2}{c|}{REST-meta-MDD}
    & \multicolumn{2}{c|}{SRPBS}
    & \multicolumn{2}{c}{ABCD (ADHD-task)} \\
    &
    & AUC & ACC
    & AUC & ACC
    & AUC & ACC
    & AUC & ACC
    & AUC & ACC \\
    \midrule

    \multirow{3}{*}{\rotatebox{90}{\fontsize{9}{12}\selectfont GNN}}
    & GCN
    & 61.91$_{\pm 10.32}$ & 56.12$_{\pm 9.66}$
    & 50.21$_{\pm 8.86}$ & 48.38$_{\pm 8.55}$
    & 60.58$_{\pm 4.18}$ & 56.26$_{\pm 2.87}$
    & 65.24$_{\pm 19.47}$ & 61.68$_{\pm 19.17}$
    & 71.75$_{\pm 6.52}$ & 63.14$_{\pm 9.15}$ \\

    & GAT
    & 58.21$_{\pm 8.69}$ & 52.31$_{\pm 6.01}$
    & 57.68$_{\pm 9.37}$ & 54.69$_{\pm 7.74}$
    & 59.97$_{\pm 9.38}$ & 56.17$_{\pm 6.72}$
    & 66.68$_{\pm 15.58}$ & 60.86$_{\pm 14.38}$
    & 70.37$_{\pm 11.32}$ & 63.32$_{\pm 11.20}$ \\

    & GIN
    & 57.73$_{\pm 5.21}$ & 52.08$_{\pm 3.10}$
    & 52.40$_{\pm 8.83}$ & 47.84$_{\pm 10.23}$
    & 56.55$_{\pm 6.72}$ & 53.46$_{\pm 4.72}$
    & 63.74$_{\pm 21.68}$ & 60.19$_{\pm 18.08}$
    & 68.97$_{\pm 8.84}$ & 60.73$_{\pm 7.03}$ \\

    \midrule

    \multirow{2}{*}{\rotatebox{90}{\fontsize{9}{12}\selectfont OOD}}
    & CORAL
    & 55.48$_{\pm 6.12}$ & 50.23$_{\pm 5.49}$
    & 53.75$_{\pm 10.69}$ & 50.06$_{\pm 7.42}$
    & 60.37$_{\pm 8.97}$ & 57.18$_{\pm 7.78}$
    & 63.88$_{\pm 15.48}$ & 59.86$_{\pm 12.68}$
    & 69.43$_{\pm 7.70}$ & 62.55$_{\pm 10.46}$ \\

    & IRM
    & 58.56$_{\pm 6.12}$ & 52.98$_{\pm 3.85}$
    & 51.77$_{\pm 10.64}$ & 50.84$_{\pm 6.99}$
    & 58.56$_{\pm 4.38}$ & 54.18$_{\pm 3.35}$
    & 60.98$_{\pm 21.87}$ & 59.27$_{\pm 16.78}$
    & 70.37$_{\pm 6.60}$ & 59.06$_{\pm 7.57}$ \\

    \midrule

    \multirow{4}{*}{
        \rotatebox{90}{
            \shortstack{\fontsize{9}{12}\selectfont Graph\\OOD}
        }
    }
    & GSAT
    & 58.48$_{\pm 7.62}$ & 53.61$_{\pm 7.98}$
    & 50.81$_{\pm 7.05}$ & 49.05$_{\pm 6.27}$
    & 57.38$_{\pm 5.77}$ & 54.76$_{\pm 4.52}$
    & 62.76$_{\pm 18.97}$ & 59.18$_{\pm 15.78}$
    & 71.27$_{\pm 8.73}$ & 61.16$_{\pm 7.48}$ \\

    & DisC
    & 56.46$_{\pm 7.85}$ & 53.62$_{\pm 8.68}$
    & 52.85$_{\pm 6.48}$ & 50.75$_{\pm 6.41}$
    & 56.19$_{\pm 4.42}$ & 53.27$_{\pm 3.88}$
    & 65.18$_{\pm 19.75}$ & 61.46$_{\pm 16.79}$
    & 69.59$_{\pm 7.70}$ & 58.83$_{\pm 9.45}$ \\

    & CEPG
    & 50.83$_{\pm 8.99}$ & 47.51$_{\pm 6.12}$
    & 44.82$_{\pm 9.74}$ & 45.45$_{\pm 8.93}$
    & 53.89$_{\pm 5.27}$ & 52.96$_{\pm 4.46}$
    & 55.34$_{\pm 8.64}$ & 50.58$_{\pm 6.28}$
    & 63.11$_{\pm 11.06}$ & 57.07$_{\pm 9.59}$ \\

    & DiSCO
    & 56.57$_{\pm 6.72}$ & 52.96$_{\pm 4.83}$
    & 51.37$_{\pm 8.45}$ & 49.35$_{\pm 6.21}$
    & 57.92$_{\pm 3.73}$ & 54.08$_{\pm 2.84}$
    & 65.68$_{\pm 18.16}$ & 59.12$_{\pm 15.46}$
    & 67.51$_{\pm 9.83}$ & 61.98$_{\pm 7.44}$ \\

    \midrule

    \multirow{7}{*}{
        \rotatebox{90}{
            \shortstack{\fontsize{9}{12}\selectfont Brain\\Networks}
        }
    }
    & BrainNetTF
    & 62.64$_{\pm 6.20}$ & 58.38$_{\pm 7.06}$
    & 60.48$_{\pm 8.12}$ & 54.25$_{\pm 7.14}$
    & 61.48$_{\pm 7.38}$ & 58.17$_{\pm 5.58}$
    & 53.04$_{\pm 17.02}$ & 51.89$_{\pm 10.78}$
    & 68.28$_{\pm 4.54}$ & 56.91$_{\pm 10.43}$ \\

    & XG-GNN
    & 60.20$_{\pm 7.73}$ & 51.61$_{\pm 4.09}$
    & 55.94$_{\pm 9.85}$ & 53.64$_{\pm 6.42}$
    & 60.08$_{\pm 7.23}$ & 56.24$_{\pm 6.49}$
    & 73.28$_{\pm 15.19}$ & 66.17$_{\pm 17.27}$
    & 71.20$_{\pm 5.46}$ & 55.66$_{\pm 9.78}$ \\

    & AGMGC
    & 58.47$_{\pm 8.68}$ & 48.97$_{\pm 5.42}$
    & 59.32$_{\pm 6.54}$ & 53.17$_{\pm 5.08}$
    & 59.06$_{\pm 5.07}$ & 54.74$_{\pm 4.12}$
    & 74.98$_{\pm 15.31}$ & 67.34$_{\pm 16.85}$
    & 70.86$_{\pm 8.67}$ & 63.02$_{\pm 11.29}$ \\

    & FC-HGNN
    & 55.05$_{\pm 7.90}$ & 49.71$_{\pm 4.58}$
    & 58.76$_{\pm 7.21}$ & 54.82$_{\pm 5.16}$
    & 58.69$_{\pm 6.69}$ & 56.78$_{\pm 5.71}$
    & 48.78$_{\pm 10.58}$ & 45.28$_{\pm 6.38}$
    & 71.08$_{\pm 5.74}$ & 59.10$_{\pm 12.34}$ \\

    & BrainOOD
    & 63.61$_{\pm 6.73}$ & 57.14$_{\pm 9.70}$
    & 66.15$_{\pm 4.12}$ & 58.21$_{\pm 3.84}$
    & 63.22$_{\pm 6.29}$ & 58.45$_{\pm 5.20}$
    & 81.48$_{\pm 7.31}$ & 69.18$_{\pm 15.21}$
    & 72.77$_{\pm 8.07}$ & 64.36$_{\pm 7.24}$ \\

    & DeCI
    & 55.81$_{\pm 8.91}$ & 58.26$_{\pm 2.89}$
    & 62.18$_{\pm 3.75}$ & 59.48$_{\pm 3.52}$
    & 54.38$_{\pm 2.95}$ & 56.18$_{\pm 2.89}$
    & 82.05$_{\pm 12.28}$ & 76.69$_{\pm 13.66}$
    & 55.03$_{\pm 6.79}$ & 62.81$_{\pm 5.46}$ \\

    & CORE
    & 65.32$_{\pm 6.88}$ & 60.97$_{\pm 5.90}$
    & 68.42$_{\pm 6.35}$ & 59.65$_{\pm 5.42}$
    & 67.44$_{\pm 8.20}$ & 62.81$_{\pm 6.04}$
    & 82.61$_{\pm 8.25}$ & 77.65$_{\pm 9.84}$
    & 74.31$_{\pm 8.34}$ & 67.26$_{\pm 7.53}$ \\

    \midrule

    & \method{}
    & \textbf{67.53$_{\pm 4.12}$} & \textbf{62.18$_{\pm 3.68}$}
    & \textbf{70.86$_{\pm 4.87}$} & \textbf{61.92$_{\pm 3.94}$}
    & \textbf{68.84$_{\pm 3.75}$} & \textbf{64.42$_{\pm 2.91}$}
    & \textbf{83.91$_{\pm 5.63}$} & \textbf{78.82$_{\pm 5.28}$}
    & \textbf{75.98$_{\pm 4.39}$} & \textbf{68.74$_{\pm 3.82}$} \\

    \bottomrule
    \end{tabular}
    }
    \vspace{-0.4cm}
\end{table*}

%% file: main/6_conclusion.tex
\vspace{-0.2cm}
\section{Conclusion}
\vspace{-0.2cm}

To address finite-sample FC estimation variability in cross-site brain network generalization, we proposed \method{}. \method{} qualifies predictive evidence under within-scan FC re-estimation before aligning it across sites. Consistently indexed connectome factors enable comparison of predictive contributions across estimates. Calibrated changes in these contributions, combined with predictive relevance, determine which factors to align and how strongly. Experiments on four real-world datasets show consistent gains over competitive baselines. These findings support estimation stability as a criterion for what to align rather than only how much to align, and motivate future research on adaptive re-estimation budgets and dynamic or multimodal connectomes.

%% file: main/7_appendix.tex
\section{Notation Summary}

\input{table/notations}

As shown in the Table~\ref{tab:notation}, we  summarize the key notations of this paper.

\section{Proof of Proposition~\ref{prop:reestimation_contrast_stability}}
\label{app:reestimation_contrast_stability}

\textbf{Proposition~\ref{prop:reestimation_contrast_stability} (Class-Contrast Stability under FC Re-estimation)} \textit{Fix the model snapshot, a source site $e$, and a factor $j$, with $N_{e,0},N_{e,1}>0$ and $K\geq1$. Let $\Delta_{e,j}^{\mathrm{re}}$ be the class contrast of contributions averaged over subjects and FC re-estimates, and let $\kappa_{e,c,j}=\Delta_{e,j}^2/\big(4(V_{e,c,j}^{\mathrm{sub}}+\epsilon_q)\big)$ be the full-scan separation ratio of class $c$. Then
\begin{equation}
    \left|\Delta_{e,j}^{\mathrm{re}}-\Delta_{e,j}\right|
    \leq \sum_{c\in\{0,1\}}
    \sqrt{\left(T_{e,c,j}+\epsilon_q\right)\left(\frac{1}{q_{e,c,j}^{\mathrm{re}}}-1\right)},
\end{equation}
and $\Delta_{e,j}^{\mathrm{re}}$ has the same sign as $\Delta_{e,j}$ whenever $q_{e,c,j}^{\mathrm{re}}>(1+\kappa_{e,c,j})/(1+2\kappa_{e,c,j})$ for both $c\in\{0,1\}$.}

\textbf{\textit{Proof.}}

For each class $c\in\{0,1\}$, the mean re-estimated contribution and the resulting class contrast are
\begin{equation}
\begin{aligned}
    \mu_{e,c,j}^{\mathrm{re}}
    = \frac{1}{N_{e,c}K}
      \sum_{i\in\mathcal I_{e,c}}\sum_{k=1}^{K}a_{i,j}^{e,(k)},
    \quad
    \Delta_{e,j}^{\mathrm{re}}
    = \mu_{e,1,j}^{\mathrm{re}}-\mu_{e,0,j}^{\mathrm{re}}.
\end{aligned}
\label{eq:reestimated_class_contrast}
\end{equation}
Let $\xi_{i,j}^{e,(k)}=a_{i,j}^{e,(k)}-a_{i,j}^{e,(0)}$ denote the contribution change between the $k$-th FC re-estimate and the full-scan estimate. Repeating each full-scan contribution across the $K$ paired comparisons gives
\begin{equation}
\begin{aligned}
    \mu_{e,c,j}^{\mathrm{re}}-\mu_{e,c,j}
    = \frac{1}{N_{e,c}K}
       \sum_{i\in\mathcal I_{e,c}}\sum_{k=1}^{K}
       \xi_{i,j}^{e,(k)}.
\end{aligned}
\label{eq:app_class_mean_shift}
\end{equation}
Applying the Cauchy--Schwarz inequality to these $N_{e,c}K$ paired changes yields
\begin{equation}
\begin{aligned}
    \left|\mu_{e,c,j}^{\mathrm{re}}-\mu_{e,c,j}\right|^2
    &= \frac{1}{(N_{e,c}K)^2}
       \left(
           \sum_{i\in\mathcal I_{e,c}}\sum_{k=1}^{K}
           \xi_{i,j}^{e,(k)}
       \right)^2 \\
    &\leq \frac{1}{N_{e,c}K}
       \sum_{i\in\mathcal I_{e,c}}\sum_{k=1}^{K}
       \left(\xi_{i,j}^{e,(k)}\right)^2
     = D_{e,c,j}^{\mathrm{re}},
\end{aligned}
\label{eq:app_class_mean_shift_bound}
\end{equation}
where the final equality follows from Eq.~\eqref{eq:reestimation_variation}, and equality holds if and only if $\xi_{i,j}^{e,(k)}$ is constant over $i\in\mathcal I_{e,c}$ and $k$, that is, when re-estimation shifts all class-$c$ contributions by a common constant. Since the change in the class contrast is the difference between the two class-mean shifts, taking square roots and applying the triangle inequality gives
\begin{equation}
\begin{aligned}
    \left|\Delta_{e,j}^{\mathrm{re}}-\Delta_{e,j}\right|
    &= \left|
       \left(\mu_{e,1,j}^{\mathrm{re}}-\mu_{e,1,j}\right)
       -\left(\mu_{e,0,j}^{\mathrm{re}}-\mu_{e,0,j}\right)
       \right| \\
    &\leq
       \left|\mu_{e,1,j}^{\mathrm{re}}-\mu_{e,1,j}\right|
       +\left|\mu_{e,0,j}^{\mathrm{re}}-\mu_{e,0,j}\right|
     \leq \sum_{c\in\{0,1\}}
       \sqrt{D_{e,c,j}^{\mathrm{re}}}.
\end{aligned}
\label{eq:app_class_contrast_deviation_bound}
\end{equation}

To express this bound through re-estimation support, let $m_{e,j}=(\mu_{e,0,j}+\mu_{e,1,j})/2$ denote the midpoint of the full-scan class means, so that $\mu_{e,c,j}-m_{e,j}=(2c-1)\Delta_{e,j}/2$ for either class. Expanding the squared distance from this midpoint around the class mean yields
\begin{equation}
\begin{aligned}
    \frac{1}{N_{e,c}}\sum_{i\in\mathcal I_{e,c}}
       \left(a_{i,j}^{e,(0)}-m_{e,j}\right)^2
    &=
       \frac{1}{N_{e,c}}\sum_{i\in\mathcal I_{e,c}}
       \left[
           \left(a_{i,j}^{e,(0)}-\mu_{e,c,j}\right)
           +\left(\mu_{e,c,j}-m_{e,j}\right)
       \right]^2 \\
    &=
       V_{e,c,j}^{\mathrm{sub}}
       +\frac{1}{4}\Delta_{e,j}^{2}
     = T_{e,c,j},
\end{aligned}
\label{eq:app_task_reference_identity}
\end{equation}
where the cross term vanishes because $\sum_{i\in\mathcal I_{e,c}}(a_{i,j}^{e,(0)}-\mu_{e,c,j})=0$. Thus $T_{e,c,j}\geq0$ is the empirical mean squared distance of class-$c$ full-scan contributions from the midpoint of the two class means. Together with $D_{e,c,j}^{\mathrm{re}}\geq0$ and $\epsilon_q>0$, this ensures $0<q_{e,c,j}^{\mathrm{re}}\leq1$, and rearranging Eq.~\eqref{eq:reestimation_support} gives
\begin{equation}
\begin{aligned}
    \left(T_{e,c,j}+\epsilon_q\right)
    \left(\frac{1}{q_{e,c,j}^{\mathrm{re}}}-1\right)
    = \left(T_{e,c,j}+\epsilon_q\right)
      \frac{D_{e,c,j}^{\mathrm{re}}}{T_{e,c,j}+\epsilon_q}
    = D_{e,c,j}^{\mathrm{re}}.
\end{aligned}
\label{eq:app_support_deviation_identity}
\end{equation}
Substituting Eq.~\eqref{eq:app_support_deviation_identity} into Eq.~\eqref{eq:app_class_contrast_deviation_bound} establishes Eq.~\eqref{eq:class_contrast_stability}.

For sign preservation, we first show that the support threshold is equivalent to a bound on the re-estimation variation. Since $4T_{e,c,j}=4V_{e,c,j}^{\mathrm{sub}}+\Delta_{e,j}^2$ by Eq.~\eqref{eq:app_task_reference_identity},
\begin{equation}
\begin{aligned}
    \frac{1+\kappa_{e,c,j}}{1+2\kappa_{e,c,j}}
    = \frac{4\left(V_{e,c,j}^{\mathrm{sub}}+\epsilon_q\right)+\Delta_{e,j}^2}
           {4\left(V_{e,c,j}^{\mathrm{sub}}+\epsilon_q\right)+2\Delta_{e,j}^2}
    = \frac{T_{e,c,j}+\epsilon_q}
           {T_{e,c,j}+\epsilon_q+\Delta_{e,j}^2/4},
\end{aligned}
\label{eq:app_threshold_identity}
\end{equation}
so by Eq.~\eqref{eq:reestimation_support},
\begin{equation}
\begin{aligned}
    q_{e,c,j}^{\mathrm{re}}>\frac{1+\kappa_{e,c,j}}{1+2\kappa_{e,c,j}}
    \iff
    \frac{T_{e,c,j}+\epsilon_q}{T_{e,c,j}+\epsilon_q+D_{e,c,j}^{\mathrm{re}}}
    >\frac{T_{e,c,j}+\epsilon_q}{T_{e,c,j}+\epsilon_q+\Delta_{e,j}^2/4}
    \iff
    D_{e,c,j}^{\mathrm{re}}<\frac{\Delta_{e,j}^2}{4}.
\end{aligned}
\label{eq:app_support_threshold}
\end{equation}
Suppose the threshold condition holds for both classes. Then $\sqrt{D_{e,c,j}^{\mathrm{re}}}<|\Delta_{e,j}|/2$ for each $c$, hence $\sum_{c}\sqrt{D_{e,c,j}^{\mathrm{re}}}<|\Delta_{e,j}|$ and $\Delta_{e,j}\neq0$. Using Eq.~\eqref{eq:app_class_contrast_deviation_bound}, we obtain
\begin{equation}
\begin{aligned}
    \Delta_{e,j}\Delta_{e,j}^{\mathrm{re}}
    &= \Delta_{e,j}^{2}
       +\Delta_{e,j}
       \left(\Delta_{e,j}^{\mathrm{re}}-\Delta_{e,j}\right) \\
    &\geq \Delta_{e,j}^{2}
       -|\Delta_{e,j}|
        \left|\Delta_{e,j}^{\mathrm{re}}-\Delta_{e,j}\right|
     \geq |\Delta_{e,j}|
       \Big(|\Delta_{e,j}|-\sum_{c\in\{0,1\}}\sqrt{D_{e,c,j}^{\mathrm{re}}}\Big)>0.
\end{aligned}
\label{eq:app_class_contrast_sign_preservation}
\end{equation}
Hence $\Delta_{e,j}^{\mathrm{re}}$ and $\Delta_{e,j}$ have the same sign. The threshold $(1+\kappa_{e,c,j})/(1+2\kappa_{e,c,j})$ lies in $(1/2,1]$ and decreases in $\kappa_{e,c,j}$, so strongly separated factors preserve the sign of their class contrast under moderate support, whereas weakly separated factors require support close to one.

\section{Proof of Proposition~\ref{prop:pairwise_qualification_variational}}
\label{app:pairwise_qualification_variational}

\textbf{Proposition~\ref{prop:pairwise_qualification_variational} (Variational Form of Pairwise Qualification)} \textit{For a probability vector $\bm\pi$ over $J$ factors and supports $q_j\in(0,1]$, define $\alpha_j=\pi_jq_j$, $g=\sum_{j=1}^{J}\alpha_j$, and $\overline\alpha_j=\alpha_j/g$. Let $\mathcal V_{\pi}$ be the set of probability vectors $\bm v$ with $v_j=0$ whenever $\pi_j=0$, and define
\begin{equation}
    \mathcal J(\bm v)
    = \operatorname{KL}\!\left(\bm v\,\middle\|\,\bm\pi\right)
    + \sum_{j=1}^{J}v_j\log\frac{1}{q_j}.
\end{equation}
Then $\overline{\bm\alpha}$ is the unique minimizer of $\mathcal J$ over $\mathcal V_{\pi}$, with minimum value $-\log g$, and
\begin{equation}
    \overline\alpha_j
    = \frac{\partial\log g}{\partial\log q_j},
    \quad
    -\log g
    \leq \sum_{j=1}^{J}\pi_j\log\frac{1}{q_j},
\end{equation}
where equality holds if and only if $q_j$ is constant over the factors with $\pi_j>0$.}

\textbf{\textit{Proof.}}

Let $\mathcal I_{\pi}=\{j\in\{1,\ldots,J\}:\pi_j>0\}$ denote the support of $\bm\pi$, which is nonempty because $\bm\pi$ is a probability vector. Every $\bm v\in\mathcal V_{\pi}$ is supported on $\mathcal I_{\pi}$ and satisfies $\sum_{j\in\mathcal I_{\pi}}v_j=1$. Throughout the proof, we use the convention $0\log(0/b)=0$ for $b>0$.

Since $q_j\in(0,1]$, $\alpha_j$ is strictly positive on $\mathcal I_{\pi}$ and zero outside it. Moreover,
\begin{equation}
    0<g
    = \sum_{j\in\mathcal I_{\pi}}\pi_jq_j
    \leq \sum_{j\in\mathcal I_{\pi}}\pi_j
    = 1.
\label{eq:app_pairwise_qualification_normalizer}
\end{equation}
Thus, $\overline{\bm\alpha}$ is well defined, belongs to $\mathcal V_{\pi}$, and has strictly positive entries on $\mathcal I_{\pi}$.

For every $\bm v\in\mathcal V_{\pi}$, combining the two terms of $\mathcal J$ and substituting $\alpha_j=g\,\overline\alpha_j$ gives
\begin{equation}
\begin{aligned}
    \mathcal J(\bm v)
    &= \sum_{j\in\mathcal I_{\pi}}v_j\log\frac{v_j}{\pi_j}
       +\sum_{j\in\mathcal I_{\pi}}v_j\log\frac{1}{q_j}
     = \sum_{j\in\mathcal I_{\pi}}v_j\log\frac{v_j}{\alpha_j} \\
    &= \sum_{j\in\mathcal I_{\pi}}v_j\log\frac{v_j}{\overline\alpha_j}
       -\left(\sum_{j\in\mathcal I_{\pi}}v_j\right)\log g
     = \operatorname{KL}\!\left(\bm v\,\middle\|\,\overline{\bm\alpha}\right)-\log g,
\end{aligned}
\label{eq:app_qualification_kl_decomposition}
\end{equation}
where the last equality uses $\sum_{j\in\mathcal I_{\pi}}v_j=1$. The objective is therefore a KL divergence to $\overline{\bm\alpha}$ plus a term independent of $\bm v$.

Let $\mathcal I_{\bm v}=\{j\in\mathcal I_{\pi}:v_j>0\}$. Applying $\log t\leq t-1$ for $t>0$ gives
\begin{equation}
\begin{aligned}
    \operatorname{KL}\!\left(\bm v\,\middle\|\,\overline{\bm\alpha}\right)
    &= -\sum_{j\in\mathcal I_{\bm v}}v_j\log\frac{\overline\alpha_j}{v_j}
     \geq \sum_{j\in\mathcal I_{\bm v}}\left(v_j-\overline\alpha_j\right)
     = \sum_{j\in\mathcal I_{\pi}\setminus\mathcal I_{\bm v}}\overline\alpha_j
     \geq0.
\end{aligned}
\label{eq:app_qualification_kl_nonnegativity}
\end{equation}
If $\mathcal I_{\bm v}\neq\mathcal I_{\pi}$, the final sum is strictly positive because $\overline\alpha_j>0$ on $\mathcal I_{\pi}$. Otherwise, equality holds precisely when $\overline\alpha_j/v_j=1$ for every $j\in\mathcal I_{\pi}$, by the equality condition of $\log t\leq t-1$. Hence the KL divergence vanishes if and only if $\bm v=\overline{\bm\alpha}$, and Eq.~\eqref{eq:app_qualification_kl_decomposition} yields
\begin{equation}
    \mathcal J(\bm v)
    \geq-\log g
    =\mathcal J\!\left(\overline{\bm\alpha}\right),
    \qquad \bm v\in\mathcal V_{\pi},
\label{eq:app_qualification_optimal_value}
\end{equation}
with equality if and only if $\bm v=\overline{\bm\alpha}$. Since $\overline{\bm\alpha}$ is feasible, it is the unique minimizer, and the minimum value is $-\log g$.

For the sensitivity identity, $g=\sum_{j'\in\mathcal I_{\pi}}\pi_{j'}q_{j'}$ is linear in each $q_j$ with $\partial g/\partial q_j=\pi_j$, so
\begin{equation}
    \frac{\partial\log g}{\partial\log q_j}
    = \frac{q_j}{g}\,\frac{\partial g}{\partial q_j}
    = \frac{\pi_jq_j}{g}
    = \overline\alpha_j.
\label{eq:app_qualification_sensitivity}
\end{equation}

For the inequality, $\bm\pi\in\mathcal V_{\pi}$ and $\operatorname{KL}(\bm\pi\|\bm\pi)=0$, so Eq.~\eqref{eq:app_qualification_optimal_value} evaluated at $\bm v=\bm\pi$ gives
\begin{equation}
    -\log g
    \leq \mathcal J(\bm\pi)
    = \sum_{j\in\mathcal I_{\pi}}\pi_j\log\frac{1}{q_j},
\label{eq:app_qualification_relevance_only}
\end{equation}
with equality if and only if $\bm\pi=\overline{\bm\alpha}$, that is, $\pi_j=\pi_jq_j/g$ for every $j\in\mathcal I_{\pi}$, which holds if and only if $q_j=g$ for every $j\in\mathcal I_{\pi}$. Equivalently, Eq.~\eqref{eq:app_qualification_relevance_only} is Jensen's inequality $\log\sum_{j}\pi_jq_j\geq\sum_{j}\pi_j\log q_j$ for the concave logarithm, with the same equality condition.
\hfill$\square$

With $\bm\pi=\bm\pi^{\mathrm{task}}$ and $q_j=q^{\mathrm{pair}}_{e,e',c,j}$, the objective specializes to pairwise qualification. By Eq.~\eqref{eq:reestimation_support}, the penalty expands as
\begin{equation}
    \log\frac{1}{q^{\mathrm{pair}}_{e,e',c,j}}
    =
    \frac{1}{2}\log\left(
        1+\frac{D_{e,c,j}^{\mathrm{re}}}{T_{e,c,j}+\epsilon_q}
    \right)
    +\frac{1}{2}\log\left(
        1+\frac{D_{e',c,j}^{\mathrm{re}}}{T_{e',c,j}+\epsilon_q}
    \right),
\label{eq:app_pairwise_support_penalty_calibration}
\end{equation}
so $\mathcal J_{e,e',c}$ trades proximity to $\bm\pi^{\mathrm{task}}$ against re-estimation deviations measured on the local reference scales of both sites. Eq.~\eqref{eq:app_qualification_sensitivity} becomes
\begin{equation}
    \frac{\partial\log g_{e,e',c}}{\partial\log q^{\mathrm{pair}}_{e,e',c,j}}
    = \overline\alpha_{e,e',c,j},
    \quad
    \mathrm{d}\log g_{e,e',c}
    = \sum_{j=1}^{J}\overline\alpha_{e,e',c,j}\,
      \mathrm{d}\log q^{\mathrm{pair}}_{e,e',c,j},
\label{eq:app_alignment_strength_differential}
\end{equation}
so a relative change in the support of factor $j$ moves the alignment strength by its normalized weight times that change. Evaluating Eq.~\eqref{eq:app_qualification_kl_decomposition} at $\bm v=\bm\pi^{\mathrm{task}}$ gives the exact reduction in support penalty obtained by reweighting,
\begin{equation}
    \sum_{j=1}^{J}\pi_j^{\mathrm{task}}\log\frac{1}{q^{\mathrm{pair}}_{e,e',c,j}}
    -\left(-\log g_{e,e',c}\right)
    = \operatorname{KL}\!\left(\bm\pi^{\mathrm{task}}\,\middle\|\,\overline{\bm\alpha}_{e,e',c}\right)
    \geq0,
\label{eq:app_reweighting_gain}
\end{equation}
which is strictly positive unless $q^{\mathrm{pair}}_{e,e',c,j}$ is constant over the factors with $\pi_j^{\mathrm{task}}>0$.

\section{Debiased Sinkhorn Divergence}
\label{app:sinkhorn}

Optimal transport compares empirical distributions by minimizing a ground cost over couplings that preserve their probability masses~\cite{gabriel2019computational}. Entropic regularization enables computation through Sinkhorn scaling, and a self-transport correction removes the regularization bias at identical inputs~\cite{cuturi2013sinkhorn,feydy2019interpolating}. This part gives the construction for the qualification-weighted distributions in Eq.~\eqref{eq:qualification_guided_alignment} and derives how the qualifications enter the alignment gradient.

Fix a source-site pair $(e,e')$, class $c$, and the associated qualification profile. Write $\mu=\widehat P_{e,c}^{Q[e,e']}$ and $\nu=\widehat P_{e',c}^{Q[e,e']}$ for the empirical distributions in Eq.~\eqref{eq:qualified_site_distributions}, and let $\bm x_i$ and $\bm y_h$ denote the weighted representations from Eq.~\eqref{eq:pair_specific_representation}. Then
\begin{equation}
\begin{aligned}
    \mu
    &=\sum_{i=1}^{n}\eta_i\delta_{\bm x_i},
    \quad
    \eta_i=\frac{1}{n},
    \quad
    n=|\mathcal B_{e,c}|, \\
    \nu
    &=\sum_{h=1}^{n'}\zeta_h\delta_{\bm y_h},
    \quad
    \zeta_h=\frac{1}{n'},
    \quad
    n'=|\mathcal B_{e',c}|,
\end{aligned}
\label{eq:app_sinkhorn_empirical_measures}
\end{equation}
with $n,n'>0$, so the marginal vectors $\bm\eta$ and $\bm\zeta$ have positive entries and unit total mass. Both sites use the same normalized qualifications, so the squared Euclidean ground cost coincides with the qualification-weighted dissimilarity in Eq.~\eqref{eq:qualified_pairwise_distance}:
\begin{equation}
    C_{ih}=\|\bm x_i-\bm y_h\|_2^2,
    \quad
    \|\bm x_i\|_2^2=\|\bm y_h\|_2^2=\frac12,
    \quad
    0\leq C_{ih}\leq2.
\label{eq:app_sinkhorn_ground_cost}
\end{equation}

Let $\bm C=[C_{ih}]\in\mathbb R^{n\times n'}$. The admissible couplings are
\begin{equation}
    \mathcal U(\bm\eta,\bm\zeta)
    =
    \left\{
        \bm\Gamma\in\mathbb R_+^{n\times n'}:
        \bm\Gamma\bm1_{n'}=\bm\eta,
        \quad
        \bm\Gamma^\top\bm1_n=\bm\zeta
    \right\},
\label{eq:app_sinkhorn_couplings}
\end{equation}
and the unregularized transport cost is
\begin{equation}
    \operatorname{OT}_0(\mu,\nu)
    =
    \min_{\bm\Gamma\in\mathcal U(\bm\eta,\bm\zeta)}
    \langle\bm\Gamma,\bm C\rangle,
    \quad
    \langle\bm\Gamma,\bm C\rangle
    =
    \sum_{i=1}^{n}\sum_{h=1}^{n'}\Gamma_{ih}C_{ih},
\label{eq:app_sinkhorn_unregularized_ot}
\end{equation}
which equals the squared Wasserstein-2 distance $W_2^2(\mu,\nu)$ for the squared Euclidean ground cost~\cite{gabriel2019computational}.

We regularize the coupling relative to the independent coupling $\bm\eta\bm\zeta^\top$:
\begin{equation}
    \operatorname{OT}_{\varepsilon}(\mu,\nu)
    =
    \min_{\bm\Gamma\in\mathcal U(\bm\eta,\bm\zeta)}
    \left\{
        \langle\bm\Gamma,\bm C\rangle
        +
        \varepsilon\operatorname{KL}\!\left(
            \bm\Gamma\,\middle\|\,\bm\eta\bm\zeta^\top
        \right)
    \right\},
    \quad \varepsilon>0,
\label{eq:app_sinkhorn_regularized_ot}
\end{equation}
where the generalized KL divergence is
\begin{equation}
    \operatorname{KL}\!\left(
        \bm\Gamma\,\middle\|\,\bm\eta\bm\zeta^\top
    \right)
    =
    \sum_{i,h}
    \left[
        \Gamma_{ih}\log\frac{\Gamma_{ih}}{\eta_i\zeta_h}
        -\Gamma_{ih}+\eta_i\zeta_h
    \right],
\label{eq:app_sinkhorn_kl}
\end{equation}
with $0\log(0/b)=0$ for $b>0$. For feasible couplings, the linear terms cancel and the marginal constraints give
\begin{equation}
    \operatorname{KL}\!\left(
        \bm\Gamma\,\middle\|\,\bm\eta\bm\zeta^\top
    \right)
    =
    \sum_{i,h}\Gamma_{ih}\log\Gamma_{ih}
    -\sum_i\eta_i\log\eta_i
    -\sum_h\zeta_h\log\zeta_h,
\label{eq:app_sinkhorn_entropy_expansion}
\end{equation}
so this formulation and the negative-entropy formulation share the same optimal coupling and differ by marginal-dependent constants. The feasible set is compact and the regularizer strictly convex, so the minimizer exists, is unique, and has strictly positive entries~\cite{cuturi2013sinkhorn}.

Introduce dual potentials $\bm\phi\in\mathbb R^n$ and $\bm\psi\in\mathbb R^{n'}$ for the marginal constraints. The Lagrangian is
\begin{equation}
\begin{aligned}
    \mathcal L_{\mathrm{OT}}(\bm\Gamma,\bm\phi,\bm\psi)
    ={}
    \langle\bm\Gamma,\bm C\rangle
    +\varepsilon\operatorname{KL}\!\left(
        \bm\Gamma\,\middle\|\,\bm\eta\bm\zeta^\top
    \right) 
    +\bm\phi^\top(\bm\eta-\bm\Gamma\bm1_{n'})
    +\bm\psi^\top(\bm\zeta-\bm\Gamma^\top\bm1_n),
\end{aligned}
\label{eq:app_sinkhorn_lagrangian}
\end{equation}
and stationarity in each coupling entry yields
\begin{equation}
    C_{ih}
    +\varepsilon\log\frac{\Gamma_{ih}^{\star}}{\eta_i\zeta_h}
    -\phi_i^\star-\psi_h^\star
    =0,
    \quad
    \Gamma_{ih}^{\star}
    =
    \eta_i\zeta_h
    \exp\!\left(
        \frac{\phi_i^\star+\psi_h^\star-C_{ih}}{\varepsilon}
    \right).
\label{eq:app_sinkhorn_stationarity}
\end{equation}
Minimizing the Lagrangian over $\bm\Gamma$ for fixed potentials gives the dual problem
\begin{equation}
    \operatorname{OT}_{\varepsilon}(\mu,\nu)
    =
    \max_{\bm\phi,\bm\psi}
    \Bigg\{
        \bm\eta^\top\bm\phi+\bm\zeta^\top\bm\psi
        -\varepsilon\sum_{i,h}\eta_i\zeta_h
        \left[
            \exp\!\left(
                \frac{\phi_i+\psi_h-C_{ih}}{\varepsilon}
            \right)-1
        \right]
    \Bigg\},
\label{eq:app_sinkhorn_dual}
\end{equation}
where strong duality holds because the strictly positive coupling $\bm\eta\bm\zeta^\top$ is feasible. At optimality, the exponential term recovers the optimal coupling, whose total mass is one, so
\begin{equation}
    \operatorname{OT}_{\varepsilon}(\mu,\nu)
    =
    \bm\eta^\top\bm\phi^\star+\bm\zeta^\top\bm\psi^\star.
\label{eq:app_sinkhorn_dual_value}
\end{equation}

The exponential form leads to Sinkhorn scaling. With
\begin{equation}
    (\bm K_\varepsilon)_{ih}=\exp(-C_{ih}/\varepsilon),
    \quad
    u_i=\eta_i\exp(\phi_i/\varepsilon),
    \quad
    v_h=\zeta_h\exp(\psi_h/\varepsilon),
\label{eq:app_sinkhorn_kernel_scaling}
\end{equation}
the coupling is $\bm\Gamma=\operatorname{diag}(\bm u)\bm K_\varepsilon\operatorname{diag}(\bm v)$, and alternately enforcing the two marginal constraints gives
\begin{equation}
    \bm u^{(t+1)}
    =\bm\eta\oslash(\bm K_\varepsilon\bm v^{(t)}),
    \quad
    \bm v^{(t+1)}
    =\bm\zeta\oslash(\bm K_\varepsilon^\top\bm u^{(t+1)}),
\label{eq:app_sinkhorn_iterations}
\end{equation}
where $\oslash$ denotes elementwise division. For a positive kernel and positive marginals, positive initialization converges to the unique regularized optimum~\cite{sinkhorn1967concerning,cuturi2013sinkhorn}. Substituting Eq.~\eqref{eq:app_sinkhorn_stationarity} into the marginal constraints expresses the same updates in the dual potentials,
\begin{equation}
\begin{aligned}
    \phi_i^{(t+1)}
    =
    -\varepsilon\operatorname{LSE}_{h}
    \left(
        \log\zeta_h+\frac{\psi_h^{(t)}-C_{ih}}{\varepsilon}
    \right), \quad
    \psi_h^{(t+1)}
    =
    -\varepsilon\operatorname{LSE}_{i}
    \left(
        \log\eta_i+\frac{\phi_i^{(t+1)}-C_{ih}}{\varepsilon}
    \right),
\end{aligned}
\label{eq:app_sinkhorn_log_updates}
\end{equation}
where $\operatorname{LSE}_k(z_k)=\log\sum_k\exp(z_k)$. These are alternating maximizations of the dual objective and are evaluated with stable log-sum-exp reductions~\cite{feydy2019interpolating}.

Entropic regularization makes the self-transport cost positive. For $\mu$ with at least two distinct support points, the two terms of the self-transport objective vanish at different couplings,
\begin{equation}
\begin{gathered}
    \langle\bm\Gamma,\bm C^{xx}\rangle=0
    \iff
    \bm\Gamma=\operatorname{diag}(\bm\eta),
    \quad
    \operatorname{KL}\!\left(\bm\Gamma\,\middle\|\,\bm\eta\bm\eta^\top\right)=0
    \iff
    \bm\Gamma=\bm\eta\bm\eta^\top, \\
    \operatorname{KL}\!\left(\operatorname{diag}(\bm\eta)\,\middle\|\,\bm\eta\bm\eta^\top\right)
    =-\sum_i\eta_i\log\eta_i>0,
\end{gathered}
\label{eq:app_sinkhorn_self_bias}
\end{equation}
where $C_{i\ell}^{xx}=\|\bm x_i-\bm x_\ell\|_2^2$, so $\operatorname{OT}_{\varepsilon}(\mu,\mu)>0$. Subtracting half of each self-transport cost removes this bias and defines the debiased Sinkhorn divergence~\cite{feydy2019interpolating},
\begin{equation}
    \mathcal S_{\varepsilon}(\mu,\nu)
    =
    \operatorname{OT}_{\varepsilon}(\mu,\nu)
    -\frac12\operatorname{OT}_{\varepsilon}(\mu,\mu)
    -\frac12\operatorname{OT}_{\varepsilon}(\nu,\nu),
\label{eq:app_sinkhorn_divergence}
\end{equation}
where all three terms use the same $\varepsilon$ and the same weighted representation map, and $C_{hk}^{yy}=\|\bm y_h-\bm y_k\|_2^2$. By Eq.~\eqref{eq:app_sinkhorn_entropy_expansion}, with $H(\bm\eta)=-\sum_i\eta_i\log\eta_i$ and $\operatorname{OT}_{\varepsilon}^{\mathrm{ent}}$ the negative-entropy formulation,
\begin{equation}
    \operatorname{OT}_{\varepsilon}(\mu,\nu)
    =\operatorname{OT}_{\varepsilon}^{\mathrm{ent}}(\mu,\nu)
    +\varepsilon H(\bm\eta)+\varepsilon H(\bm\zeta),
    \quad
    \operatorname{OT}_{\varepsilon}(\mu,\mu)
    =\operatorname{OT}_{\varepsilon}^{\mathrm{ent}}(\mu,\mu)
    +2\varepsilon H(\bm\eta),
\label{eq:app_sinkhorn_constant_shift}
\end{equation}
so the constants cancel in Eq.~\eqref{eq:app_sinkhorn_divergence} and both formulations yield the same $\mathcal S_{\varepsilon}$. Transposition exchanges the marginals of a coupling without changing its cost or penalty, which gives
\begin{equation}
    \operatorname{OT}_{\varepsilon}(\mu,\nu)=\operatorname{OT}_{\varepsilon}(\nu,\mu),
    \quad
    \mathcal S_{\varepsilon}(\mu,\nu)=\mathcal S_{\varepsilon}(\nu,\mu),
    \quad
    \mathcal S_{\varepsilon}(\mu,\mu)=0.
\label{eq:app_sinkhorn_symmetry}
\end{equation}

For nonnegativity and definiteness, we use the optimal self-transport potentials~\cite{feydy2019interpolating}. Let $\mathcal D_{\mu\mu}$ denote the self-transport dual objective in Eq.~\eqref{eq:app_sinkhorn_dual} with $\nu=\mu$, which is concave and satisfies $\mathcal D_{\mu\mu}(\bm\phi,\bm\psi)=\mathcal D_{\mu\mu}(\bm\psi,\bm\phi)$. For an optimal pair $(\bm\phi^\star,\bm\psi^\star)$, the average $\bm\phi^\mu=\tfrac12(\bm\phi^\star+\bm\psi^\star)$ satisfies
\begin{equation}
    \mathcal D_{\mu\mu}(\bm\phi^\mu,\bm\phi^\mu)
    \geq
    \tfrac12\mathcal D_{\mu\mu}(\bm\phi^\star,\bm\psi^\star)
    +\tfrac12\mathcal D_{\mu\mu}(\bm\psi^\star,\bm\phi^\star)
    =\operatorname{OT}_{\varepsilon}(\mu,\mu),
\label{eq:app_sinkhorn_symmetric_potentials}
\end{equation}
so $(\bm\phi^\mu,\bm\phi^\mu)$ is optimal; define $\bm\phi^\nu$ likewise. Let $\{\bm z_\ell\}_{\ell=1}^{n_z}$ be the distinct points in the union of the two supports, and define
\begin{equation}
\begin{aligned}
    \Omega_{\ell r}
    =
    \exp\!\left(
        -\frac{\|\bm z_\ell-\bm z_r\|_2^2}{\varepsilon}
    \right), \quad
    \xi_\ell
    =
    \sum_{\{i:\bm x_i=\bm z_\ell\}}
    \eta_i\exp(\phi_i^\mu/\varepsilon),
    \quad
    \chi_\ell
    =
    \sum_{\{h:\bm y_h=\bm z_\ell\}}
    \zeta_h\exp(\phi_h^\nu/\varepsilon),
\end{aligned}
\label{eq:app_sinkhorn_common_support}
\end{equation}
where empty sums are zero. The Gaussian Gram matrix $\bm\Omega$ is strictly positive definite on distinct points. Let $\bm\Gamma^{xx,\star}$ and $\bm\Gamma^{yy,\star}$ be the optimal self-couplings. Their unit total masses and Eq.~\eqref{eq:app_sinkhorn_dual_value} give
\begin{equation}
\begin{aligned}
    \bm\xi^\top\bm\Omega\bm\xi
    &=\sum_{i,\ell}\Gamma_{i\ell}^{xx,\star}=1,
    &\quad
    \bm\chi^\top\bm\Omega\bm\chi
    &=\sum_{h,k}\Gamma_{hk}^{yy,\star}=1, \\
    \operatorname{OT}_{\varepsilon}(\mu,\mu)
    &=2\bm\eta^\top\bm\phi^\mu,
    &\quad
    \operatorname{OT}_{\varepsilon}(\nu,\nu)
    &=2\bm\zeta^\top\bm\phi^\nu.
\end{aligned}
\label{eq:app_sinkhorn_self_normalization}
\end{equation}
Evaluating the cross-transport dual at $(\bm\phi^\mu,\bm\phi^\nu)$ gives
\begin{equation}
\begin{aligned}
    \operatorname{OT}_{\varepsilon}(\mu,\nu)
    &\geq
    \bm\eta^\top\bm\phi^\mu+\bm\zeta^\top\bm\phi^\nu
    -\varepsilon
    \left[
        \sum_{i,h}\eta_i\zeta_h
        \exp\!\left(
            \frac{\phi_i^\mu+\phi_h^\nu-C_{ih}}{\varepsilon}
        \right)-1
    \right] \\
    &=
    \bm\eta^\top\bm\phi^\mu+\bm\zeta^\top\bm\phi^\nu
    +\varepsilon(1-\bm\xi^\top\bm\Omega\bm\chi),
\end{aligned}
\label{eq:app_sinkhorn_cross_lower_bound}
\end{equation}
and subtracting half of each self-cost from Eq.~\eqref{eq:app_sinkhorn_self_normalization} yields
\begin{equation}
\begin{aligned}
    \mathcal S_{\varepsilon}(\mu,\nu)
    &\geq
    \varepsilon(1-\bm\xi^\top\bm\Omega\bm\chi)
    =
    \frac{\varepsilon}{2}
    \left(
        \bm\xi^\top\bm\Omega\bm\xi
        +\bm\chi^\top\bm\Omega\bm\chi
        -2\bm\xi^\top\bm\Omega\bm\chi
    \right) \\
    &=
    \frac{\varepsilon}{2}
    (\bm\xi-\bm\chi)^\top\bm\Omega(\bm\xi-\bm\chi)
    \geq0.
\end{aligned}
\label{eq:app_sinkhorn_positivity}
\end{equation}
If $\mathcal S_{\varepsilon}(\mu,\nu)=0$, strict positive definiteness of $\bm\Omega$ gives $\bm\xi=\bm\chi$, and the marginal constraints of the self-couplings recover the masses at each distinct point,
\begin{equation}
    \mu(\{\bm z_\ell\})
    =
    \sum_{\{i:\bm x_i=\bm z_\ell\}}\eta_i
    =\xi_\ell(\bm\Omega\bm\xi)_\ell,
    \quad
    \nu(\{\bm z_\ell\})
    =
    \sum_{\{h:\bm y_h=\bm z_\ell\}}\zeta_h
    =\chi_\ell(\bm\Omega\bm\chi)_\ell,
\label{eq:app_sinkhorn_recover_masses}
\end{equation}
so $\bm\xi=\bm\chi$ implies $\mu=\nu$. Together with Eq.~\eqref{eq:app_sinkhorn_symmetry}, this establishes
\begin{equation}
    \mathcal S_{\varepsilon}(\mu,\nu)\geq0,
    \quad
    \mathcal S_{\varepsilon}(\mu,\nu)=0
    \iff
    \mu=\nu.
\label{eq:app_sinkhorn_separation}
\end{equation}

The same formulation provides the gradients used for representation learning. With fixed marginal weights, the envelope theorem gives $\partial\operatorname{OT}_{\varepsilon}/\partial C_{ih}=\Gamma_{ih}^{xy,\star}$ for the optimal cross-coupling~\cite{gabriel2019computational}, so
\begin{equation}
    \nabla_{\bm x_i}\operatorname{OT}_{\varepsilon}(\mu,\nu)
    =
    2\sum_h\Gamma_{ih}^{xy,\star}(\bm x_i-\bm y_h).
\label{eq:app_sinkhorn_cross_gradient}
\end{equation}
For self-transport, transposition preserves feasibility and objective, so uniqueness makes the optimal self-coupling symmetric; since each representation appears in both arguments of the self-cost,
\begin{equation}
\begin{aligned}
    \nabla_{\bm x_i}\operatorname{OT}_{\varepsilon}(\mu,\mu)
    &=
    2\sum_\ell\Gamma_{i\ell}^{xx,\star}(\bm x_i-\bm x_\ell)
    +2\sum_\ell\Gamma_{\ell i}^{xx,\star}(\bm x_i-\bm x_\ell) \\
    &=
    4\sum_\ell\Gamma_{i\ell}^{xx,\star}(\bm x_i-\bm x_\ell).
\end{aligned}
\label{eq:app_sinkhorn_self_gradient}
\end{equation}
Combining the two gives, at the converged transport solutions~\cite{feydy2019interpolating},
\begin{equation}
\begin{aligned}
    \nabla_{\bm x_i}\mathcal S_{\varepsilon}(\mu,\nu)
    &=
    2\sum_h\Gamma_{ih}^{xy,\star}(\bm x_i-\bm y_h)
    -2\sum_\ell\Gamma_{i\ell}^{xx,\star}(\bm x_i-\bm x_\ell), \\
    \nabla_{\bm y_h}\mathcal S_{\varepsilon}(\mu,\nu)
    &=
    2\sum_i\Gamma_{ih}^{xy,\star}(\bm y_h-\bm x_i)
    -2\sum_k\Gamma_{hk}^{yy,\star}(\bm y_h-\bm y_k).
\end{aligned}
\label{eq:app_sinkhorn_representation_gradients}
\end{equation}

Finally, the alignment contribution for the fixed source-site pair and class is $\ell_{e,e',c}^{Q}=g_{e,e',c}\mathcal S_{\varepsilon}(\mu,\nu)$. By Eq.~\eqref{eq:pair_specific_representation}, the $j$-th block of $\bm x_i$ is $\sqrt{\overline\alpha_{e,e',c,j}/2}\,\bm r_{i,j}^{e,(0)}$, so the $j$-th block of Eq.~\eqref{eq:app_sinkhorn_representation_gradients} equals $\sqrt{\overline\alpha_{e,e',c,j}/2}$ times the transport residual
\begin{equation}
    \bm\Delta_{i,j}^{e,e',c}
    =
    \sum_h\Gamma_{ih}^{xy,\star}\big(\bm r_{i,j}^{e,(0)}-\bm r_{h,j}^{e',(0)}\big)
    -\sum_\ell\Gamma_{i\ell}^{xx,\star}\big(\bm r_{i,j}^{e,(0)}-\bm r_{\ell,j}^{e,(0)}\big).
\label{eq:app_sinkhorn_transport_residual}
\end{equation}
Applying the chain rule with the qualifications held fixed gives
\begin{equation}
\begin{gathered}
    \nabla_{\bm r_{i,j}^{e,(0)}}\ell_{e,e',c}^{Q}
    =
    g_{e,e',c}\,\overline\alpha_{e,e',c,j}\,\bm\Delta_{i,j}^{e,e',c}
    =
    \alpha_{e,e',c,j}\,\bm\Delta_{i,j}^{e,e',c}, \\
    \alpha_{e,e',c,j}
    =\pi_j^{\mathrm{task}}q^{\mathrm{pair}}_{e,e',c,j},
    \quad
    \sum_{j=1}^{J}\alpha_{e,e',c,j}=g_{e,e',c}.
\end{gathered}
\label{eq:app_sinkhorn_factor_gradient}
\end{equation}
The transport plans are shared across factors and set by the normalized qualifications through the ground cost, while the gradient reaching factor $j$ scales with its unnormalized qualification, and the scales sum to the overall qualification. Averaging over unordered pairs and classes recovers Eq.~\eqref{eq:qualification_guided_alignment}:
\begin{equation}
    \mathcal L_{\mathrm{env}}^{Q}
    =
    \frac{2}{E(E-1)}
    \sum_{e<e'}\frac12\sum_{c\in\{0,1\}}
    \ell_{e,e',c}^{Q}.
\label{eq:app_sinkhorn_loss_recovery}
\end{equation}

\section{Datasets and Preprocessing}~\label{app:data}
\vspace{-0.5cm}

\subsection{Dataset description}

We evaluate \method{} on four multi-site rs-fMRI datasets, retaining samples with diagnostic labels, site information, and usable ROI-level time series. Table~\ref{tab:dataset_statistics} summarizes the final analysis cohorts, with class counts listed in the order of the corresponding task labels.

\begin{table*}[t]
    \centering
    \small
    \setlength{\tabcolsep}{5pt}
    \renewcommand{\arraystretch}{1.10}
    \caption{Statistics of the datasets.}
    \label{tab:dataset_statistics}
    \vspace{-0.15cm}
    \resizebox{0.85\textwidth}{!}{
    \begin{tabular}{l |ccc |cc}
        \toprule
        Dataset & Task & Samples & Class counts & Parcellation & Time points \\
        \midrule
        ABIDE
        & ASD vs.\ TD
        & 1,025
        & 488 / 537
        & AAL116; CC200
        & $194.51 \pm 58.58$ \\

        REST-meta-MDD
        & MDD vs.\ HC
        & 2,428
        & 1,300 / 1,128
        & AAL116
        & $207.54 \pm 29.29$ \\

        SRPBS
        & MDD vs.\ HC
        & 1,000
        & 500 / 500
        & AAL116
        & 107--284 \\

        ABCD
        & ADHD vs.\ HC
        & 425
        & 213 / 212
        & AAL116
        & 383 \\
        \bottomrule
    \end{tabular}
    }
    \vspace{-0.2cm}
\end{table*}

\textbf{ABIDE.}
The Autism Brain Imaging Data Exchange (ABIDE)~\cite{abide} provides multi-site rs-fMRI data for studying autism spectrum disorder (ASD). We use the quality-controlled ABIDE I cohort with available phenotypic information and ROI-level time series. The final cohort contains 1,025 subjects, including 488 individuals with ASD and 537 typically developing (TD) controls from 20 sites. Acquisition sites differ in demographic composition and imaging protocols. Primary experiments use the AAL116 atlas~\cite{aal}, while an additional evaluation uses the CC200 parcellation~\cite{cc200} to examine performance under an alternative ROI definition.

\textbf{REST-meta-MDD.}
The REST-meta-MDD consortium~\cite{restmeatmdd} provides multi-site rs-fMRI data for studying major depressive disorder (MDD), collected across clinical centers in China. The final cohort comprises 2,428 participants from 25 acquisition sites, including 1,300 individuals with MDD and 1,128 healthy controls (HC). Each retained site includes both diagnostic groups. Site sizes range from 24 to 533 subjects, with the largest site accounting for approximately 22.0\% of the cohort. Among participants with available age information, the mean age is $36.24 \pm 15.07$ years, ranging from 12 to 82 years, with group means of $36.23 \pm 14.62$ years for MDD and $36.25 \pm 15.58$ years for HC. Records with available sex information include 1,454 females and 925 males. All subjects are represented using AAL116, with variable numbers of available time points.

\textbf{SRPBS.}
The Strategic Research Program for Brain Sciences (SRPBS) dataset~\cite{srpbs} is a multi-site neuroimaging resource collected in Japan, with demographic, diagnostic, and acquisition information. We use the MDD cohort for binary classification against HC. After quality control and cohort balancing, the analysis set contains 1,000 rs-fMRI subjects, comprising 500 MDD and 500 HC subjects from 11 acquisition environments. Ages range from 18 to 80 years, with an overall mean of $42.93 \pm 13.59$ years and group means of $42.66 \pm 12.09$ years for MDD and $43.21 \pm 14.95$ years for HC. The cohort includes 541 females and 459 males: 237 females and 263 males in the MDD group, and 304 females and 196 males in the HC group. ROI sequences use AAL116, with lengths ranging from 107 to 284 time points across acquisition environments.

\textbf{ABCD.}
The Adolescent Brain Cognitive Development (ABCD) Study~\cite{abcd} is a multi-site longitudinal study of brain development and mental health. We use baseline rs-fMRI data from \texttt{baseline\_year\_1\_arm\_1} for binary classification of attention-deficit/hyperactivity disorder (ADHD) versus HC. After quality control and site filtering, the final cohort contains 425 subjects from 10 imaging sites, including 213 individuals with ADHD and 212 HC. The mean age is $9.47 \pm 0.50$ years. All retained subjects are represented using AAL116 and have 383 retained time points.

\subsection{Data Preprocessing}
\label{app:preprocessing}

We apply a common FC estimation procedure across datasets and methods that take FC graphs as input. For each subject, frames containing non-finite values in any ROI are removed jointly across ROIs, and each retained ROI signal is standardized within subject. From the resulting sequence $\bm X_i^e\in\mathbb R^{T_i^e\times P}$, we compute pairwise Pearson correlations and apply the Fisher-$z$ transformation to obtain
\begin{equation}
    \bm A_i^{e,(0)}
    =
    \Psi\!\left(\bm X_i^e\right)
    \in\mathbb R^{P\times P},
\label{eq:app_full_scan_fc}
\end{equation}
where $T_i^e$ denotes the number of retained time points. Diagonal entries are set to zero. We use $P=116$ for AAL116 and $P=200$ for CC200.

To encode these graphs over a common set of ROI pairs, \method{} and variants sharing its backbone use a binary propagation support $\bm S$ constructed exclusively from training-site full-scan FC matrices. For each ROI pair, we compute the median absolute FC weight across subjects within each training site, followed by the median across training sites. Each ROI selects the $m=\lceil\rho_g(P-1)\rceil$ strongest off-diagonal connections, with $\rho_g=0.20$. We symmetrize the support by retaining an undirected connection whenever either endpoint selects the other. Constructed separately within each LOSO fold, $\bm S$ remains fixed across training, validation, and test subjects and all FC re-estimates. Each graph retains its own weights on this support, with positive weights and negative-weight magnitudes processed separately.

While full-scan graphs are used for parameter updates and inference, \method{} additionally constructs FC re-estimates from training subjects for qualification evaluation. We apply joint circular moving-block resampling with identical temporal indices across ROIs, preserving cross-ROI temporal correspondence and local temporal dependence within sampled blocks. Each resampled sequence $\bm X_i^{e,(k)}$ contains $T_i^e$ time points and is processed using the same Pearson-correlation and Fisher-$z$ procedure:
\begin{equation}
    \bm A_i^{e,(k)}
    =
    \Psi\!\left(\bm X_i^{e,(k)}\right),
    \qquad k=1,\ldots,K.
\label{eq:app_reestimated_fc}
\end{equation}
The block length is estimated once per LOSO fold from training subjects only. We draw a site- and class-balanced subset of at most 128 training subjects and, for each subject, randomly select up to 24 ROI signals and up to 48 pairwise ROI products. For each series, the autocorrelation horizon is the smallest lag at which the absolute autocorrelation stays below $\tau_{\mathrm{ac}}=0.10$ for two consecutive lags, searched over lags up to $\min(30,\lfloor T_i^e/2\rfloor)$. Horizons are aggregated by taking the $0.75$ quantile over series within each subject, the median over subjects within each site, and the median over training sites, so that every source site contributes equally to the fold-level value $\ell_b$. The block length applied to sequence $i$ is
\begin{equation}
    \ell_{b,i}^{e}
    =
    \operatorname{clip}\!\Big(
        \operatorname{round}(\ell_b),\;
        \ell_{\min},\;
        \min\!\big(\ell_{\max},\lfloor T_i^e/2\rfloor\big)
    \Big),
    \qquad
    \ell_{\min}=2,\quad \ell_{\max}=30.
\label{eq:app_block_length_range}
\end{equation}
The same estimation and clipping rules are applied across datasets.

\section{Baselines}~\label{app:baseline}

We compare \method{} with different competitive baselines across four categories:

\textbf{General GNNs.}
\begin{itemize}
    \item \textbf{GCN}~\citep{kipf2016gcn} updates node representations through degree-normalized neighborhood propagation, combining local features with information from adjacent nodes.
    \item \textbf{GAT}~\citep{petar2018gat} computes feature-dependent attention coefficients for neighboring nodes and combines multiple attention heads to learn node representations.
    \item \textbf{GIN}~\citep{xu2018gin} applies MLPs to summed neighborhood features and pools node embeddings to capture structural differences between graphs.
\end{itemize}

\textbf{General OOD methods.}
\begin{itemize}
    \item \textbf{CORAL}~\citep{sun2016coral} reduces covariance differences between learned representations. In our source-only setting, this regularization is applied across training sites.
    \item \textbf{IRM}~\citep{arjovsky2019irm} encourages a representation for which a shared classifier is simultaneously optimal across training environments, reducing reliance on environment-dependent associations.
\end{itemize}

\textbf{Graph OOD methods.}
\begin{itemize}
    \item \textbf{GSAT}~\citep{miao2022gsat} learns probabilistic edge selection through an information-bottleneck objective, suppressing graph information that is unnecessary for prediction. The resulting edge probabilities also provide interpretable subgraph explanations.
    \item \textbf{DisC}~\citep{fan2022disc} separately encodes causal and bias subgraphs, then recombines their representations to weaken spurious associations during training, particularly when graph labels correlate strongly with bias substructures.
    \item \textbf{CEPG}~\citep{wang2026cepg} leverages environmental information to guide invariant graph representation learning and improve prediction under distribution shifts by emphasizing predictive features shared across different training environments.
    \item \textbf{DiSCO}~\citep{sun2026disco} trains diverse subgraph experts and uses sparse gating to select their contributions for each graph, accommodating heterogeneous predictive structures. Different experts capture complementary causal patterns across graph instances.
\end{itemize}

\textbf{Brain network methods.}
\begin{itemize}
    \item \textbf{BrainNetTF}~\citep{kan2022brainnttf} learns interactions among ROI connectivity profiles through Transformer attention and forms graph embeddings using orthonormal clustering readout, which groups ROIs according to their learned functional representations.
    \item \textbf{XG-GNN}~\citep{qiu2024towards} constructs task-oriented nonlinear functional networks and uses meta-learning with explanation regularization to preserve diagnostic patterns across acquisition sites, jointly considering diagnostic accuracy and the consistency of explanations.
    \item \textbf{AGMGC}~\citep{noman2025agmgc} combines learned connectivity with correlation-based graphs, integrating spline and multi-graph convolutions with contrastive regularization to capture local connectivity patterns and broader interregional dependencies.
    \item \textbf{FC-HGNN}~\citep{gu2025fchgnn} first encodes individual connectomes using hemisphere-aware convolutions, then combines imaging and non-imaging information through heterogeneous population-graph aggregation. This connects individual connectivity patterns with relationships among subjects.
    \item \textbf{BrainOOD}~\citep{xu2025brainood} filters node features and regularizes extracted graph structures through information-bottleneck and structure-consistency objectives for cross-site prediction. The consistency objective encourages similar structure selection across subjects.
    \item \textbf{DeCI}~\citep{yu2026deci} separates ROI time series into oscillatory and slowly varying components, processes each ROI independently, and aggregates the resulting predictions, allowing temporal dynamics to inform diagnosis beyond static correlations.
    \item \textbf{CORE}~\citep{wang2026brain} extracts a reproducible connectivity scaffold after site-specific deconfounding and models pathway dynamics on its line graph using subject-adaptive gating to combine population-level connectivity priors with individual temporal characteristics.
\end{itemize}

\section{Algorithm}

The overall training and inference process of the proposed \method{} is shown in Algorithm~\ref{alg:overall}. 

\input{table/ID}

\section{Implementation Details}
\label{app:implementation}

We implement \method{} in PyTorch~\footnote{https://pytorch.org/}. The encoder is a two-layer signed GNN with hidden dimension 64 and 16-dimensional ROI identity embeddings. Positive and negative-magnitude adjacency matrices are symmetrically normalized separately. Each layer sums separate bias-free transformations of self-features and messages from both channels, followed by GELU, residual addition, and LayerNorm. We use $J=4$ connectome factors with dimension $d_t=16$. The shared ROI assignment uses 32-dimensional queries and keys, scaled dot-product similarities with temperature $0.10$, and 60 log-domain Sinkhorn iterations targeting row and column masses of $1/J$ and $1/P$, respectively. The factor mapping $\mathcal F_{\phi}$ consists of a bias-free $64\!\rightarrow\!16$ projection followed by a $16\!\rightarrow\!32\!\rightarrow\!16$ MLP with GELU between its linear layers. Both the projection and MLP outputs are $\ell_2$-normalized. We train the model using AdamW with learning rate $3\times10^{-4}$ and weight decay $10^{-4}$. Each LOSO fold reserves one site for testing and two for validation, while model fitting and estimation of training statistics use only the remaining sites. Results are reported as mean $\pm$ standard deviation across held-out test sites.

\section{Complexity Analysis}

We analyze the per-step cost of \method{} after supervised warm-up, where the support $\bm S$ and assignment $\overline{\bm\Pi}$ are fixed. Let $B$ be the batch size, $E$ the number of source sites, $N_{\mathrm{src}}=\sum_e N_e$ the number of source training subjects, $P$ the number of ROIs, $T$ the number of time points, $s=\|\bm S\|_0$ the number of retained adjacency entries, $J$ the number of factors, $L$ the number of GNN layers, and $d=\max(d_h,d_t)$. With sparse propagation, encoding and factorization cost $C_{\mathrm{enc}}=L(sd+Pd^2)+PJd+Jd^2$ per graph. For batches balanced across sites and classes, forming pair-specific representations and cost matrices takes $\mathcal O(B^2Jd)$, and $I_{\mathrm{sk}}$ Sinkhorn iterations over all cross- and self-transport problems take $\mathcal O(B^2I_{\mathrm{sk}})$. Every $M$ steps, the qualification refresh generates $K$ re-estimates per source subject at $C_{\mathrm{FC}}=\mathcal O(P^2T)$ each, encodes $K+1$ graphs per subject, and forms qualifications over site pairs in $\mathcal O(E^2J)$. The amortized time per step is therefore
$\mathcal O\!\left((B+N_{\mathrm{src}}(K+1)/M)\,C_{\mathrm{enc}}+B^2(Jd+I_{\mathrm{sk}})+(N_{\mathrm{src}}KC_{\mathrm{FC}}+E^2J)/M\right)$.
Relative to class-conditional alignment without qualification, the overhead is confined to the $1/M$ terms. Since the refresh runs without gradients and discards re-estimates after use, peak memory is that of a training batch, $\mathcal O(BL(Pd+s)+B^2)$, independent of $N_{\mathrm{src}}$ and $K$. 

\section{More results}

\begin{figure*}[t]
\centering
\captionsetup[subfigure]{
    font=small,
    skip=3pt,
    justification=centering
}

\begin{subfigure}[b]{0.245\linewidth}
    \centering
    \includegraphics[width=\linewidth]{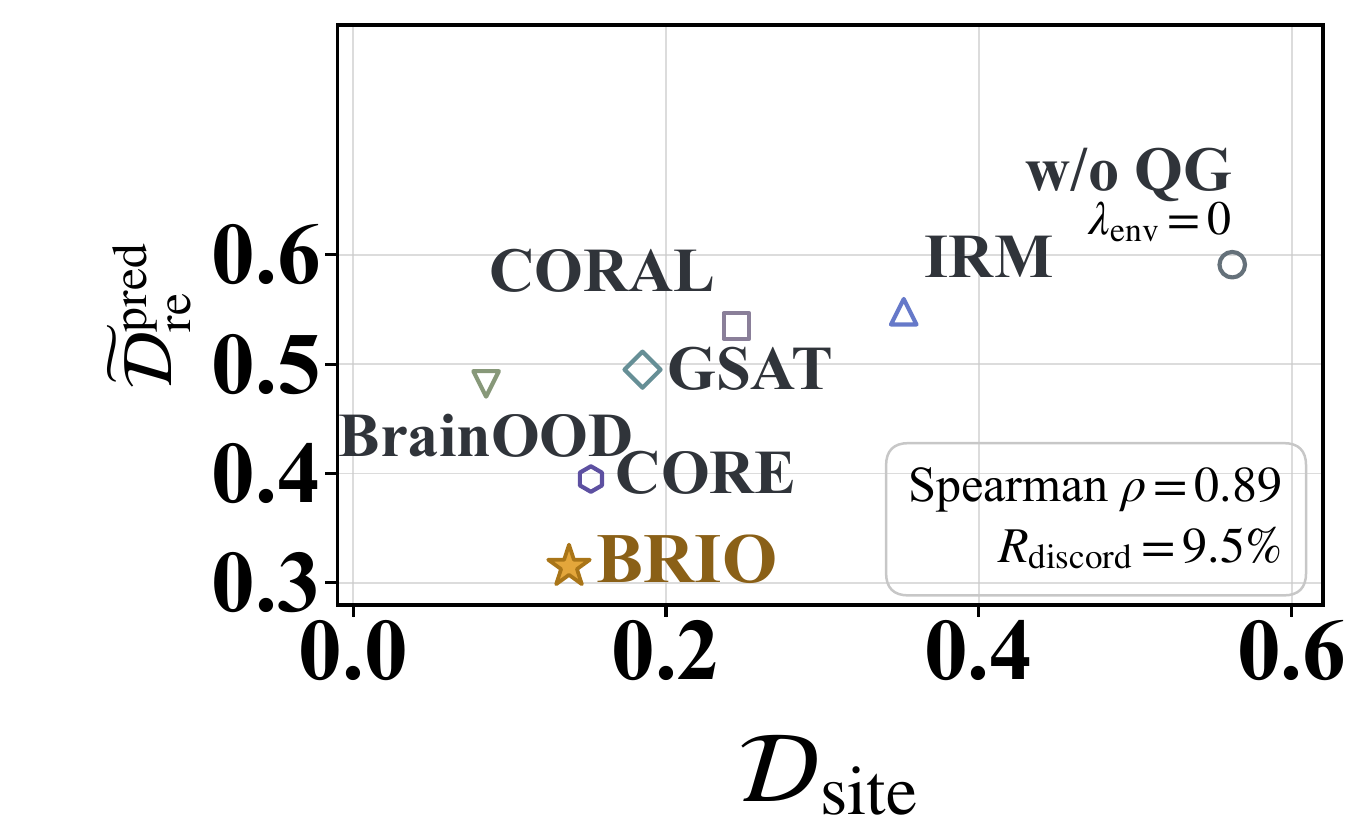}
    \caption{Cross-method}
    
\end{subfigure}\hfill
\begin{subfigure}[b]{0.245\linewidth}
    \centering
    \includegraphics[width=\linewidth]{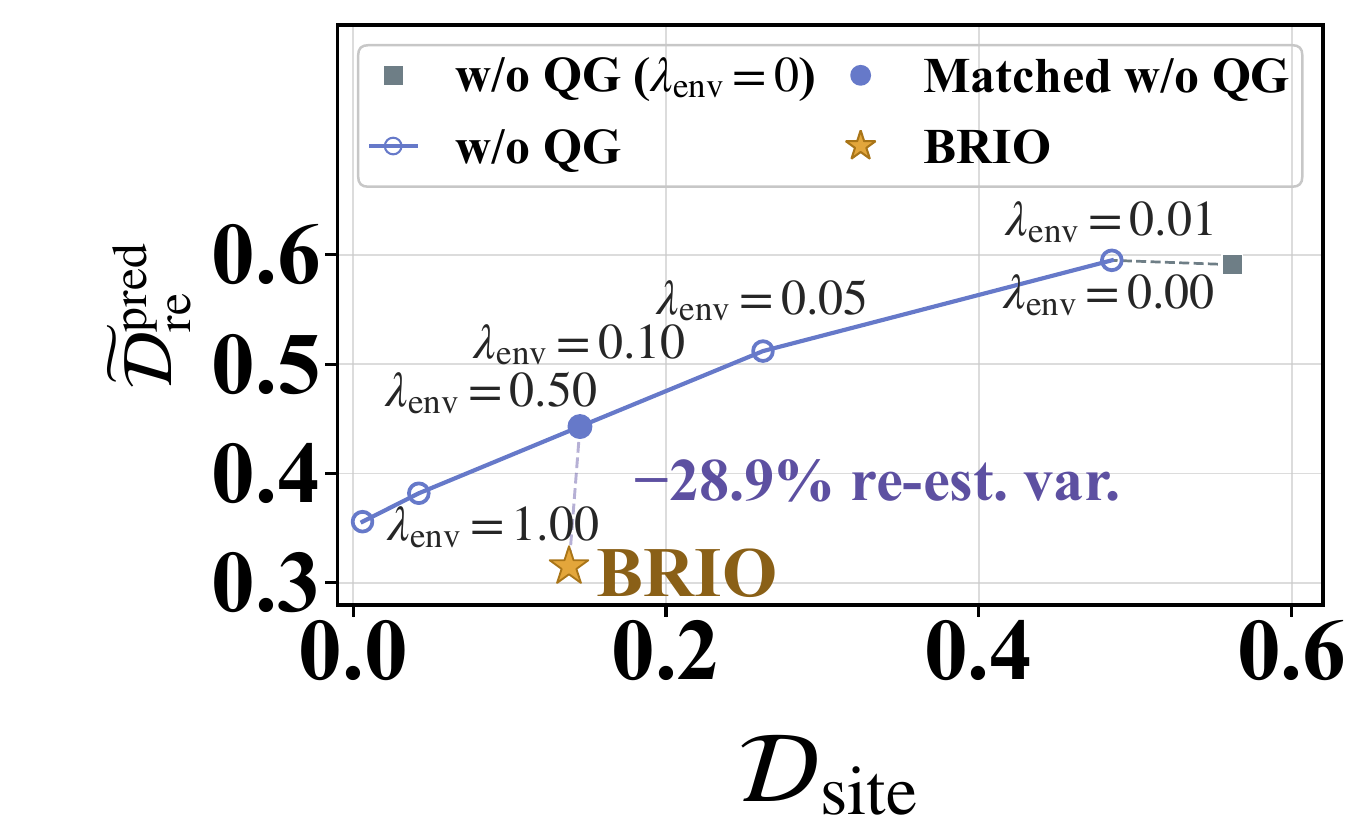}
    \caption{Alignment}
    
\end{subfigure}\hfill
\begin{subfigure}[b]{0.245\linewidth}
    \centering
    \includegraphics[width=\linewidth]{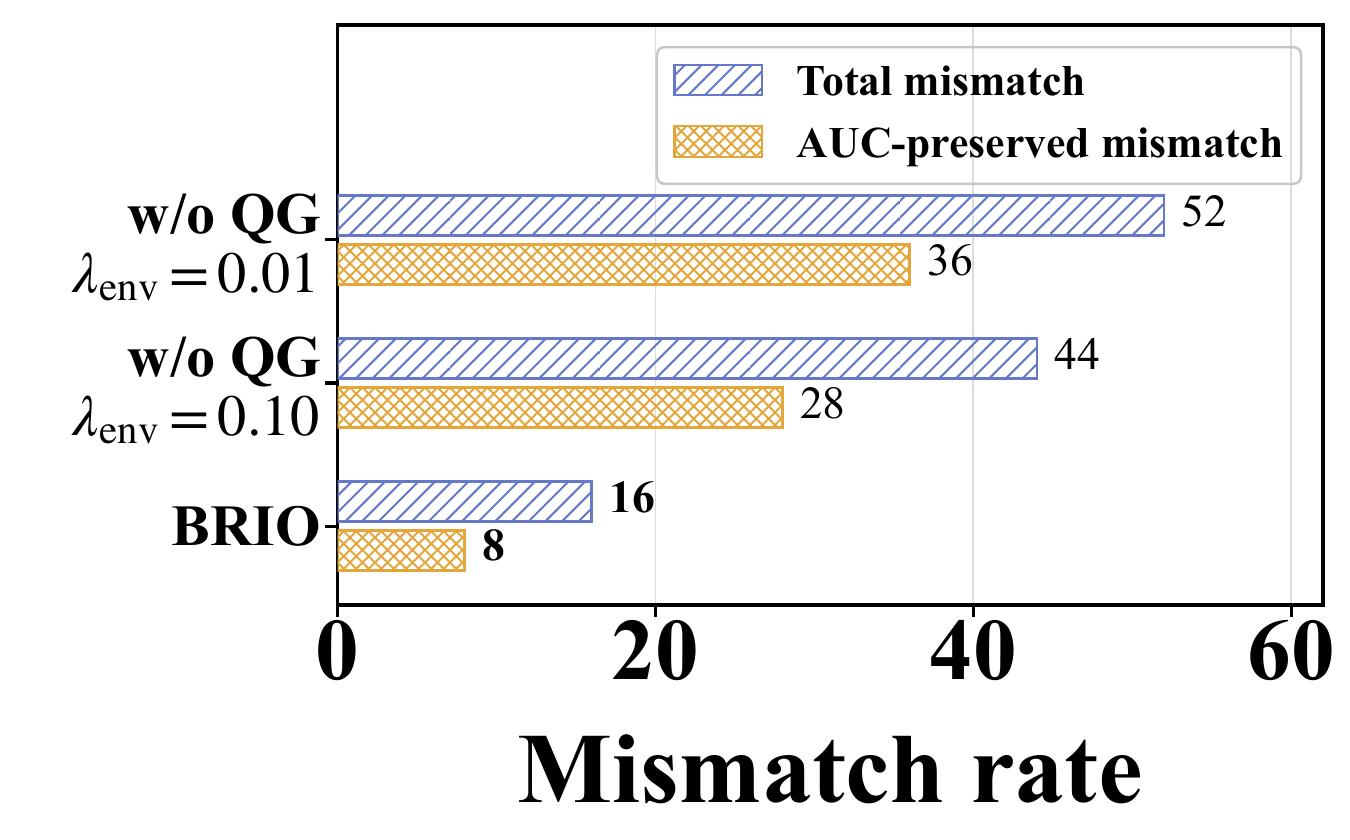}
    \caption{Mismatch}
    
\end{subfigure}\hfill
\begin{subfigure}[b]{0.245\linewidth}
    \centering
    \includegraphics[width=\linewidth]{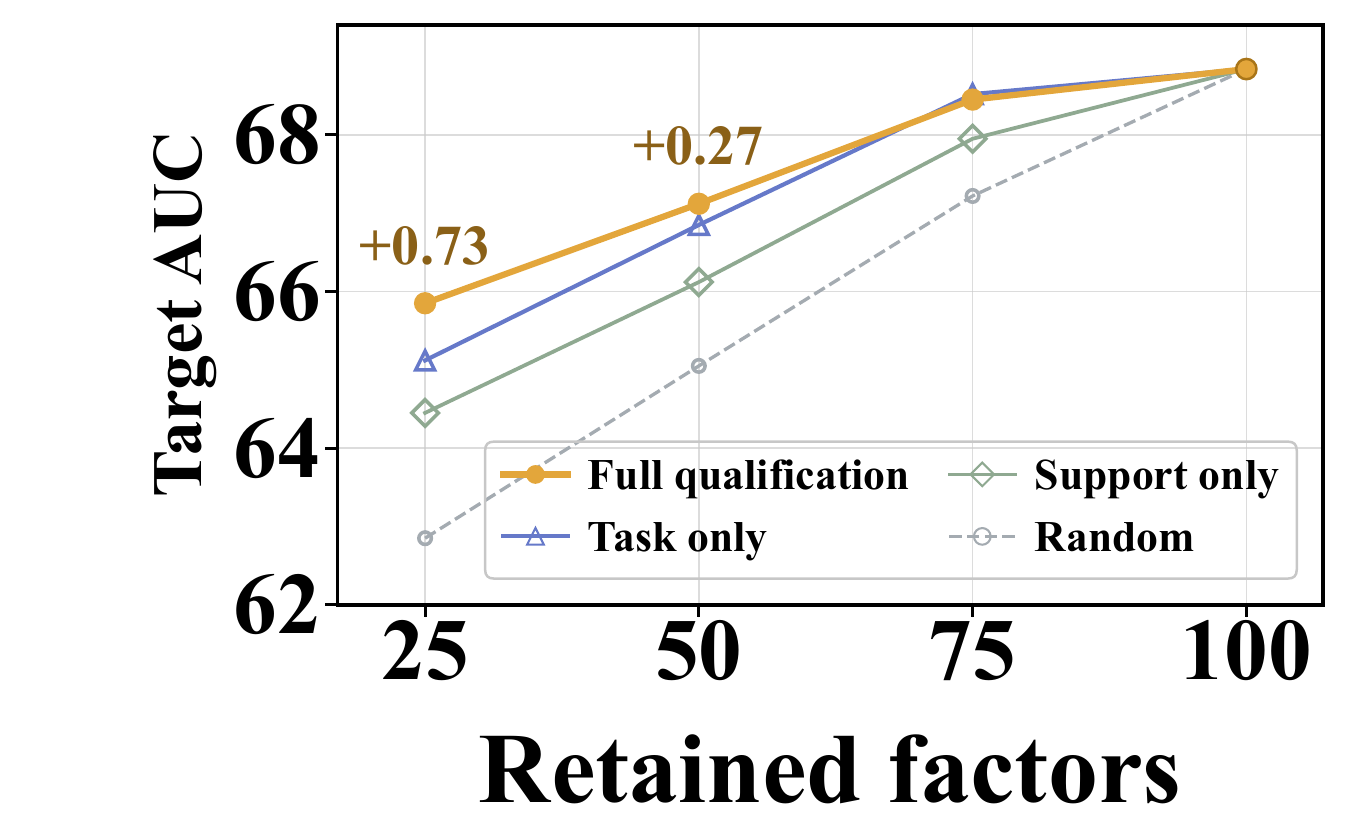}
    \caption{Factor retention}
    
\end{subfigure}
\vspace{-0.2cm}
\caption{Cross-method relationships (a), alignment trajectories (b),
mismatch rates (c), and factor retention (d) on REST-meta-MDD.}
\vspace{-0.2cm}
\label{fig:diagnostics-retention-mdd}
\end{figure*}

\begin{figure*}[t]
\centering
\captionsetup[subfigure]{
    font=small,
    skip=3pt,
    justification=centering
}

\begin{subfigure}[b]{0.245\linewidth}
    \centering
    \includegraphics[width=\linewidth]{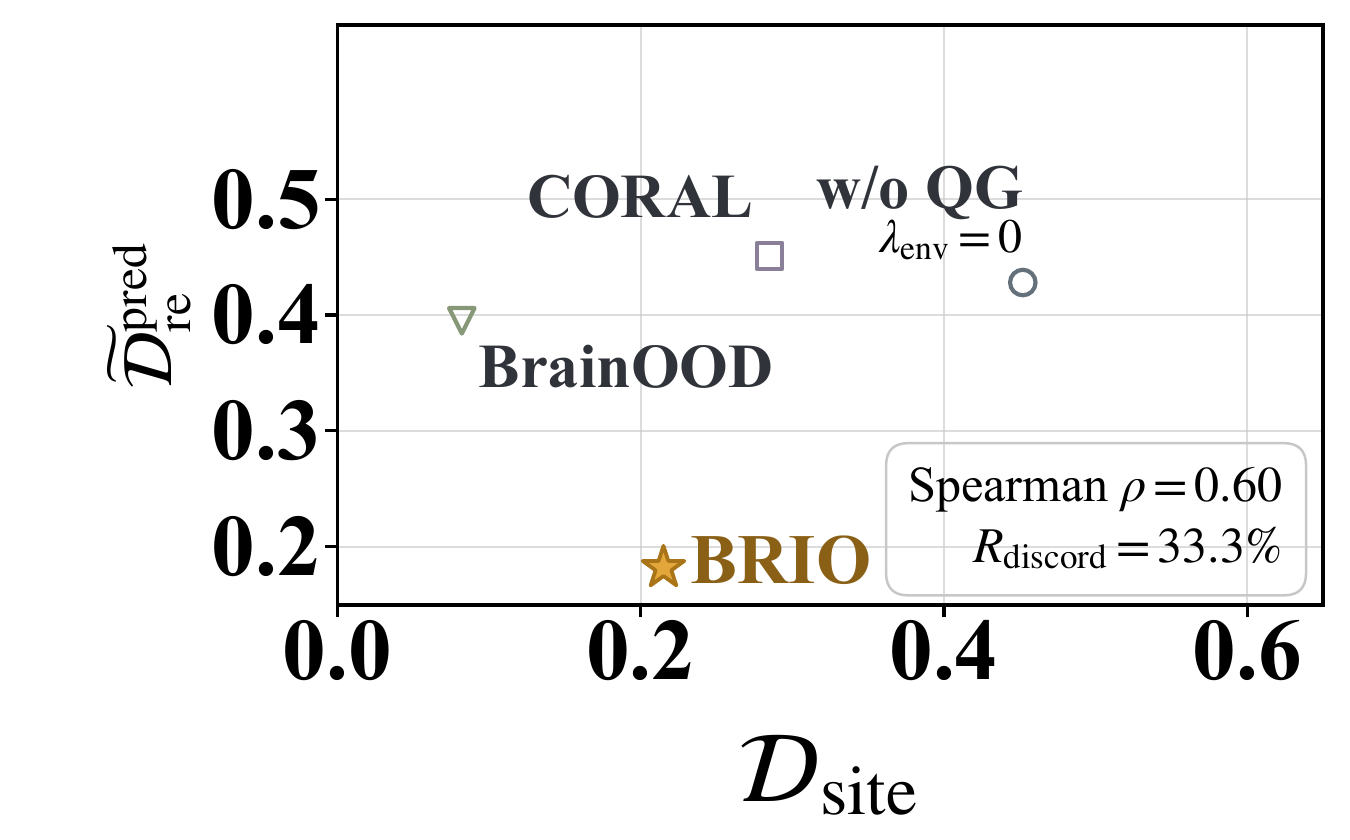}
    \caption{Cross-method}
   
\end{subfigure}\hfill
\begin{subfigure}[b]{0.245\linewidth}
    \centering
    \includegraphics[width=\linewidth]{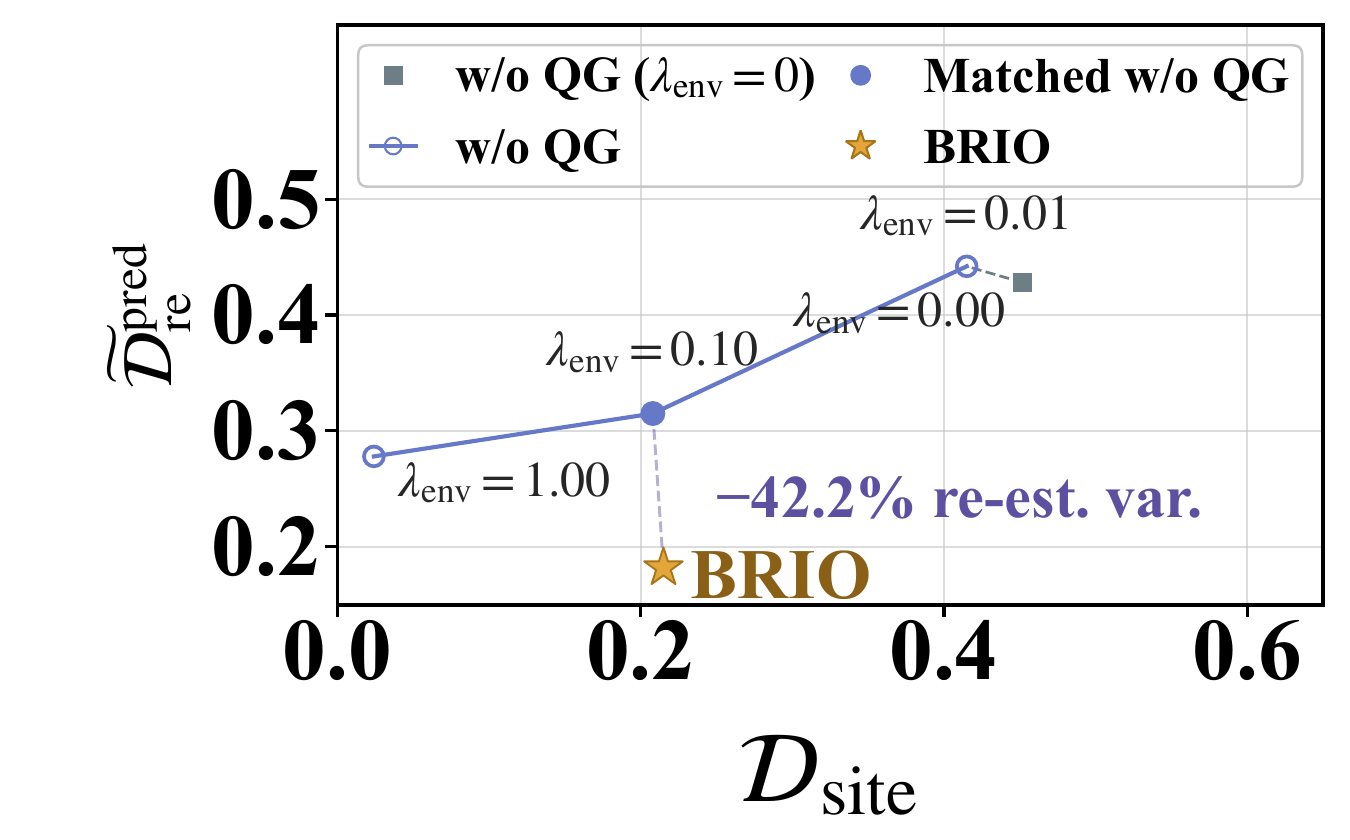}
    \caption{Alignment}
    
\end{subfigure}\hfill
\begin{subfigure}[b]{0.245\linewidth}
    \centering
    \includegraphics[width=\linewidth]{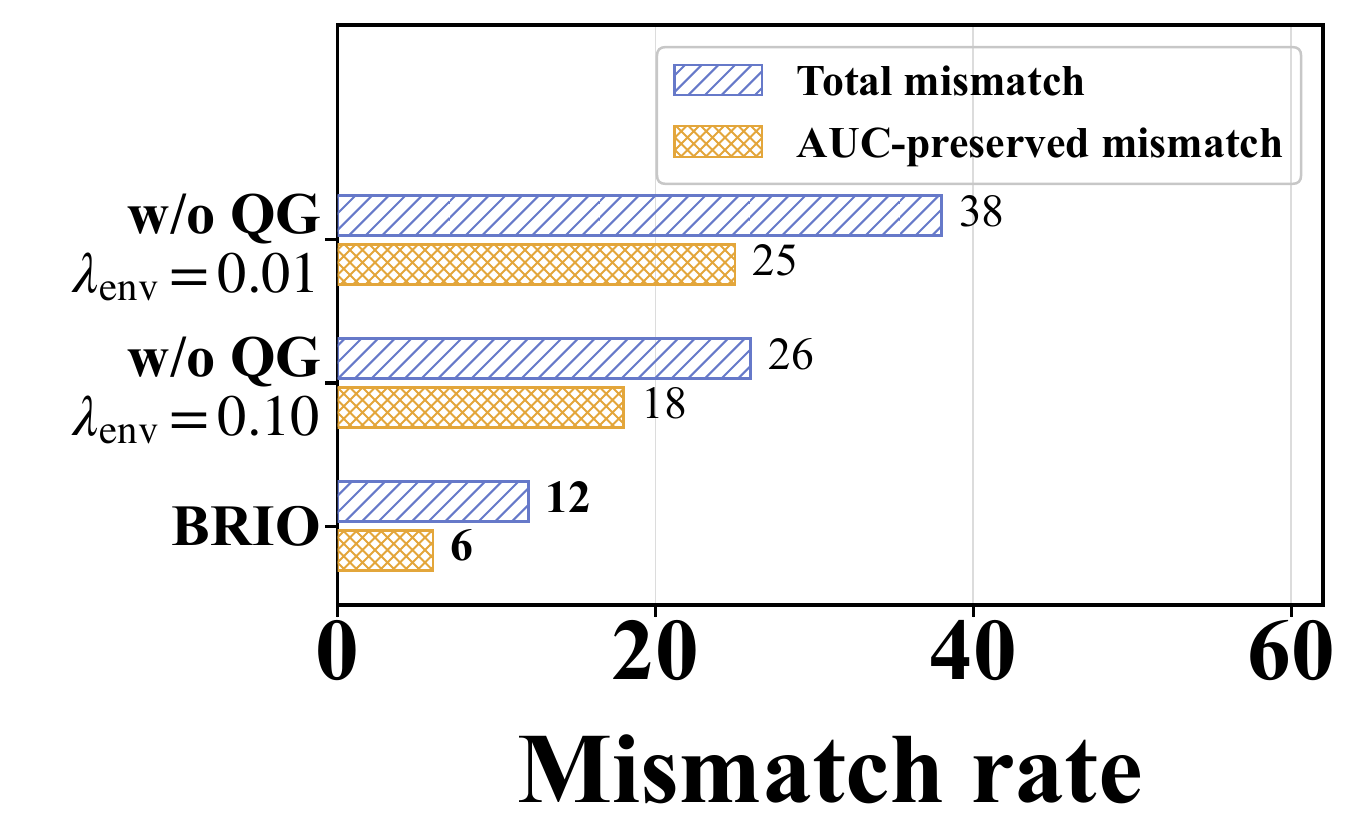}
    \caption{Mismatch}
    
\end{subfigure}\hfill
\begin{subfigure}[b]{0.245\linewidth}
    \centering
    \includegraphics[width=\linewidth]{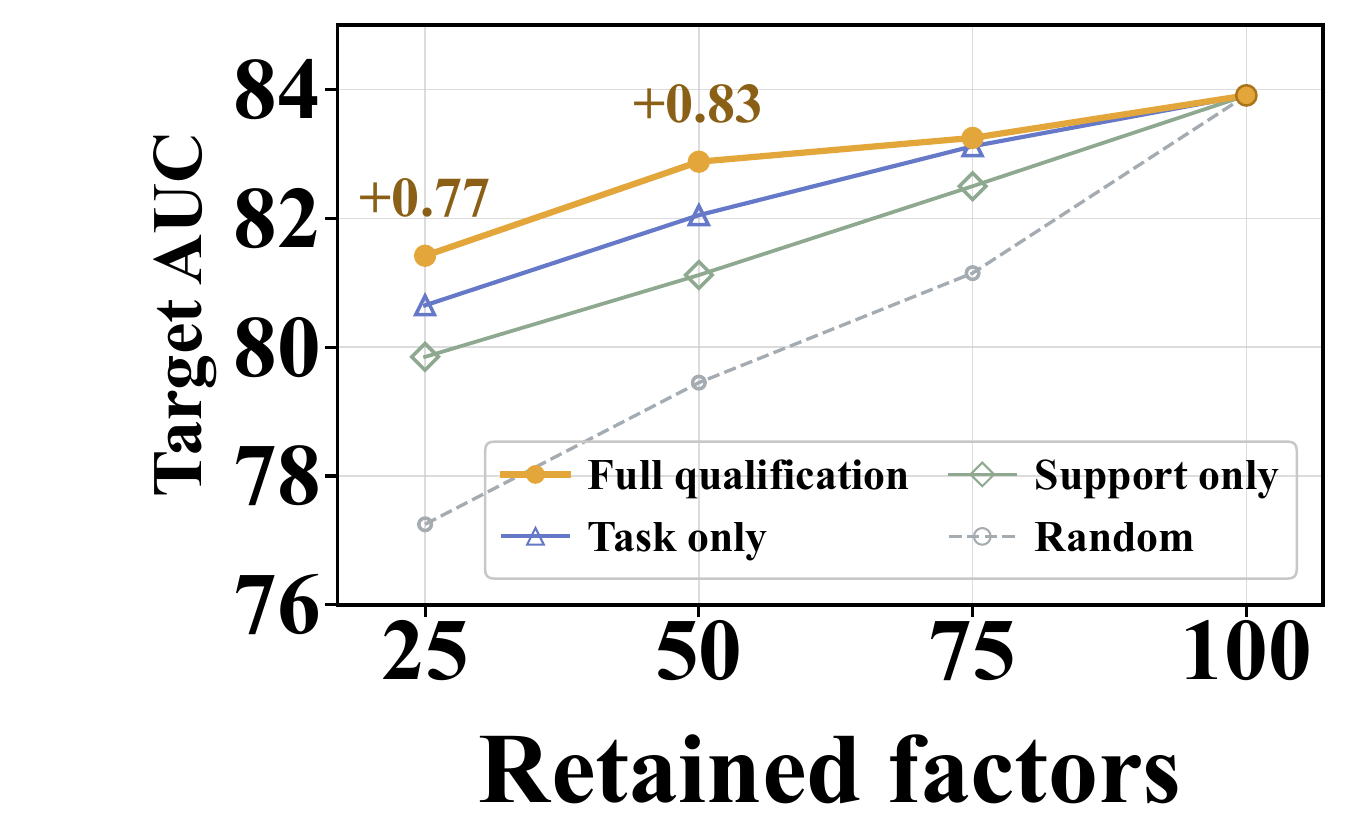}
    \caption{Factor retention}
    
\end{subfigure}
\vspace{-0.2cm}
\caption{Cross-method relationships (a), alignment trajectories (b),
mismatch rates (c), and factor retention (d) on SRPBS.}
\vspace{-0.6cm}
\label{fig:diagnostics-retention-srpbs}
\end{figure*}

\begin{figure*}[t]
\centering
\captionsetup[subfigure]{
    font=small,
    skip=3pt,
    justification=centering
}

\begin{subfigure}[b]{0.245\linewidth}
    \centering
    \includegraphics[width=\linewidth]{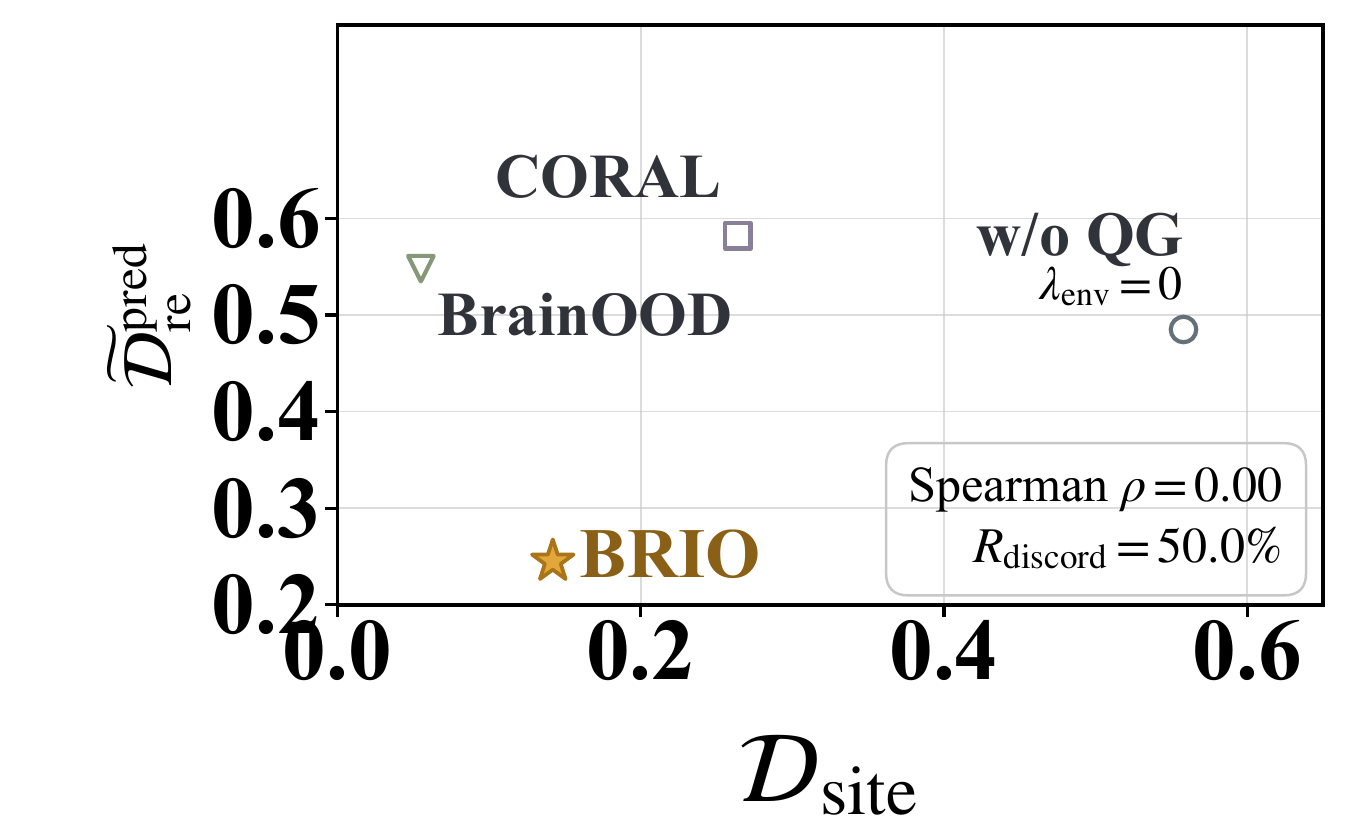}
    \caption{Cross-method}
    
\end{subfigure}\hfill
\begin{subfigure}[b]{0.245\linewidth}
    \centering
    \includegraphics[width=\linewidth]{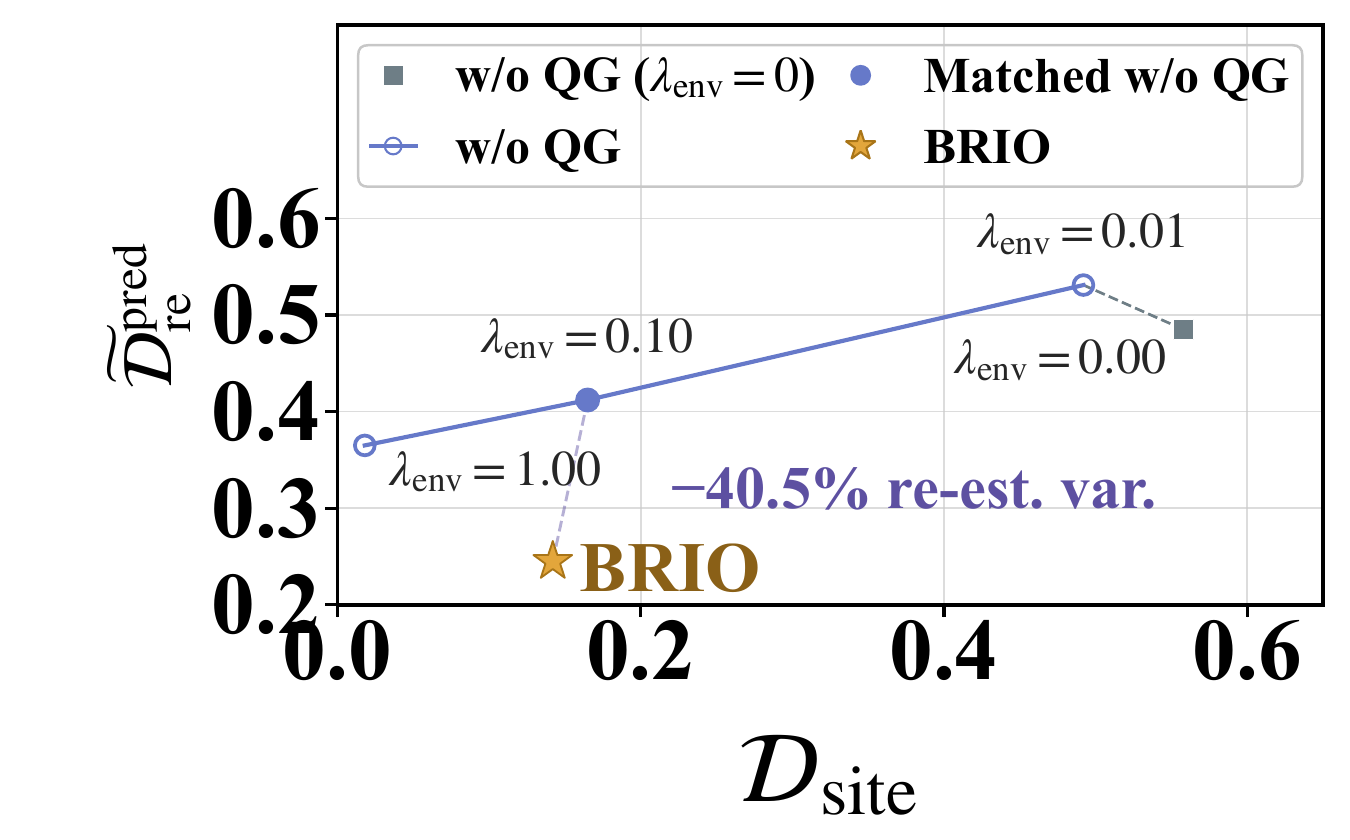}
    \caption{Alignment}

\end{subfigure}\hfill
\begin{subfigure}[b]{0.245\linewidth}
    \centering
    \includegraphics[width=\linewidth]{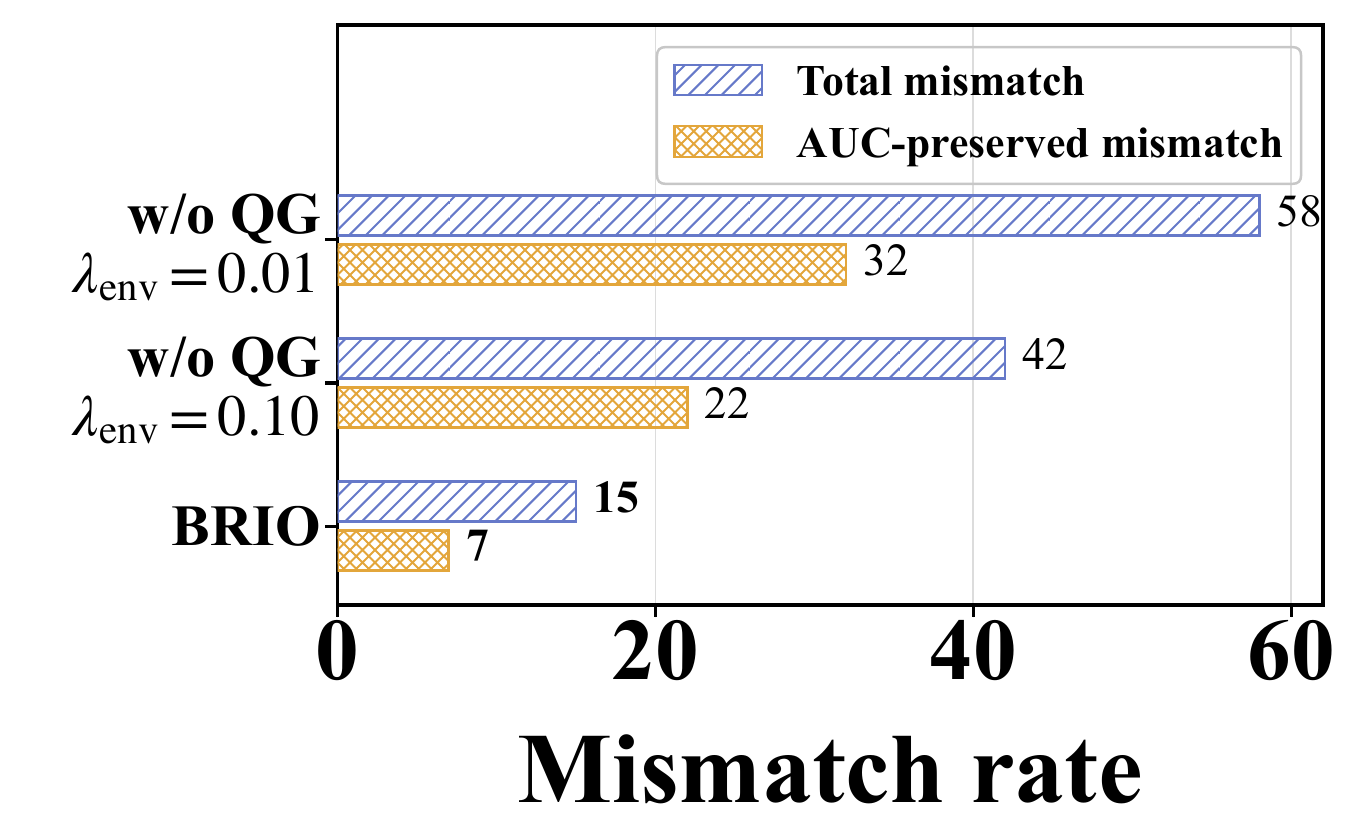}
    \caption{Mismatch}

\end{subfigure}\hfill
\begin{subfigure}[b]{0.245\linewidth}
    \centering
    \includegraphics[width=\linewidth]{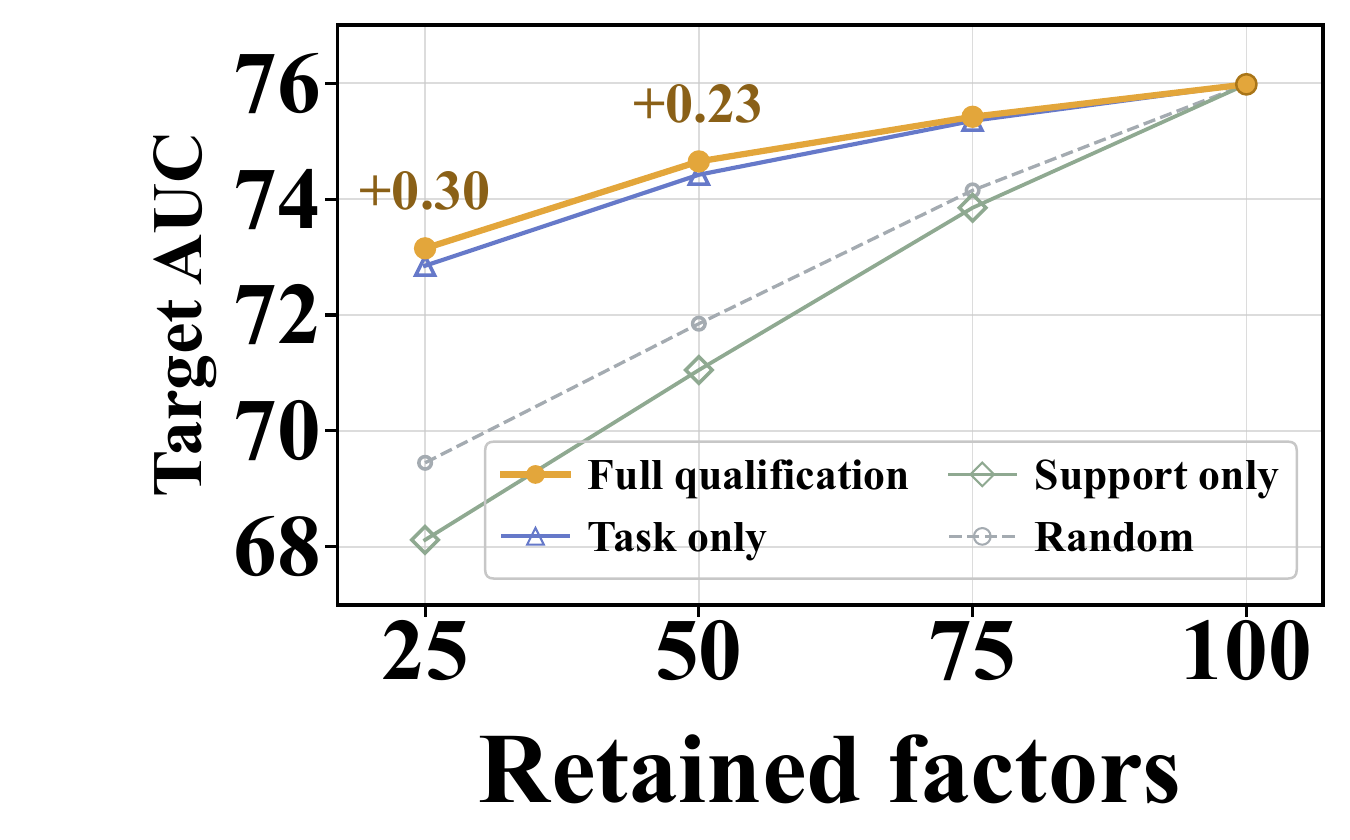}
    \caption{Factor retention}
    
\end{subfigure}
\vspace{-0.2cm}
\caption{Cross-method relationships (a), alignment trajectories (b),
mismatch rates (c), and factor retention (d) on ABCD.}
\vspace{-0.6cm}
\label{fig:diagnostics-retention-abcd}
\end{figure*}

\subsection{ID Performance Comparison}

In this part, we assess whether \method{} also performs well under in-distribution (ID) settings. Subjects from all sites are pooled and partitioned into ten folds stratified by site and class, so that every fold preserves the site and class composition of the full cohort. Each fold in turn serves as the test set, the next two folds serve as the validation set, and the remaining seven folds form the training set, giving a 7:2:1 rotation in which every subject is tested exactly once. All sites appear in training and every training site contains both classes, so source sites continue to serve as environments for alignment. As shown in Table~\ref{tab:main_results_id}, \method{} ranks first or second on both metrics across all five settings. It achieves the highest mean AUC on ABIDE, ABIDE (CC200), REST-meta-MDD, and ABCD, and the highest mean ACC on REST-meta-MDD and ABCD. BrainNetTF and DeCI obtain the highest ACC on ABIDE and ABIDE (CC200), respectively, while CORE leads both metrics on SRPBS, with \method{} closely following in each case. These results show that the predictive value of re-estimation-informed learning extends beyond unseen-site evaluation and supports competitive ID performance alongside the cross-site OOD results.

\subsection{More Site Discrepancy and Re-estimation Variation}
\label{app:diagnostics}

In this part, we define the two evaluation metrics used in Sec.~\ref{sec:case_study} and extend the cross-method comparison and alignment-strength sweep to REST-meta-MDD, SRPBS, and ABCD. The cross-method comparison includes CORAL, IRM, GSAT, BrainOOD, CORE, \method{} w/o QG with $\lambda_{\mathrm{env}}=0$, and \method{} on ABIDE and REST-meta-MDD, and CORAL, BrainOOD, \method{} w/o QG, and \method{} on SRPBS and ABCD. Each method uses the checkpoint selected in the main experiment and is evaluated with all trained parameters frozen and stochastic operations disabled. All methods receive the same source training subjects, full-scan FC graphs, and FC re-estimates, and each observation corresponds to one method, held-out site, and random seed.

For each method, let $\widehat P_{e,c}$ denote the empirical distribution of unit-normalized full-scan representations from source site $e$ and class $c$, taken from the layer preceding the classifier. The class-conditional site discrepancy is
\begin{equation}
    \mathcal D_{\mathrm{site}}
    =
    \frac{1}{E(E-1)}
    \sum_{1\leq e<e'\leq E}
    \sum_{c\in\{0,1\}}
    \mathcal S_{\varepsilon}
    \left(\widehat P_{e,c},\widehat P_{e',c}\right),
    \label{eq:app_site_discrepancy}
\end{equation}
where $E$ is the number of source sites. The debiased Sinkhorn divergence $\mathcal S_{\varepsilon}$ uses squared Euclidean distances and identical solver settings across methods, with the same representation in cross- and self-transport terms and without qualification-based weighting. Lower values indicate closer full-scan representations across source sites under a common geometry.

To measure predictive changes, let $\ell_i^{(k)}$ be the binary logit margin, given by the pre-sigmoid logit or the difference between the two softmax logits; a common shift of two logits leaves the predicted probability unchanged and is thus excluded. With $k=0$ denoting the full-scan prediction, the normalized predictive re-estimation variation is
\begin{equation}
    \widetilde{\mathcal D}_{\mathrm{re}}^{\mathrm{pred}}
    =
    \frac{1}{2E}
    \sum_{e=1}^{E}\sum_{c\in\{0,1\}}
    \frac{
        \widehat{\mathbb E}_{i,k\mid e,c}
        \left[
            \left(\ell_i^{(k)}-\ell_i^{(0)}\right)^2
        \right]
    }{
        V_{e,c}^{\mathrm{pred}}
        +\frac14
        \left(\mu_{e,1}^{\mathrm{pred}}-\mu_{e,0}^{\mathrm{pred}}\right)^2
        +\epsilon
    },
    \label{eq:app_predictive_variation}
\end{equation}
where the empirical expectation averages over source subjects and their re-estimates with $k\geq1$, $\mu_{e,c}^{\mathrm{pred}}$ and $V_{e,c}^{\mathrm{pred}}$ are the full-scan logit mean and variance within each site and class, and $\epsilon>0$ ensures numerical stability. Numerator and denominator scale identically under a global rescaling of the logits, so the ratio measures re-estimation change relative to the within-site spread and class separation of full-scan predictions.

This output-level variation and the factor-level re-estimation support in Sec.~\ref{sec:qualification} capture different aspects of stability. Since $z_i=J^{-1/2}\sum_j a_{i,j}+b$, a re-estimation change of the logit satisfies
\begin{equation}
    \mathbb E\!\left[(\delta z_i)^2\right]
    =
    \frac{1}{J}\Big[
        \sum_{j}\mathbb E\!\left[(\delta a_{i,j})^2\right]
        +2\sum_{j<l}\mathbb E\!\left[\delta a_{i,j}\,\delta a_{i,l}\right]
    \Big],
\end{equation}
so changes of individual factors may add or cancel at the output, and the two quantities are reported separately.

The cross-method plots show method-level means, with Spearman $\rho$ computed across individual runs. $R_{\mathrm{discord}}$ is the percentage of method pairs within the same fold and seed for which smaller $\mathcal D_{\mathrm{site}}$ corresponds to larger $\widetilde{\mathcal D}_{\mathrm{re}}^{\mathrm{pred}}$, excluding ties. A run is a mismatch when its $\mathcal D_{\mathrm{site}}$ falls below and its $\widetilde{\mathcal D}_{\mathrm{re}}^{\mathrm{pred}}$ falls above the pooled dataset medians. The total mismatch rate is the percentage of mismatches among all evaluated runs, and the AUC-preserved mismatch rate counts only mismatches whose source-validation AUC is no more than a tolerance $\delta_{\mathrm{AUC}}$ below that of \method{} w/o QG with $\lambda_{\mathrm{env}}=0$, again over all evaluated runs.

\begin{figure*}
\begin{subfigure}[b]{0.33\linewidth}
    \centering
    \includegraphics[width=\linewidth]{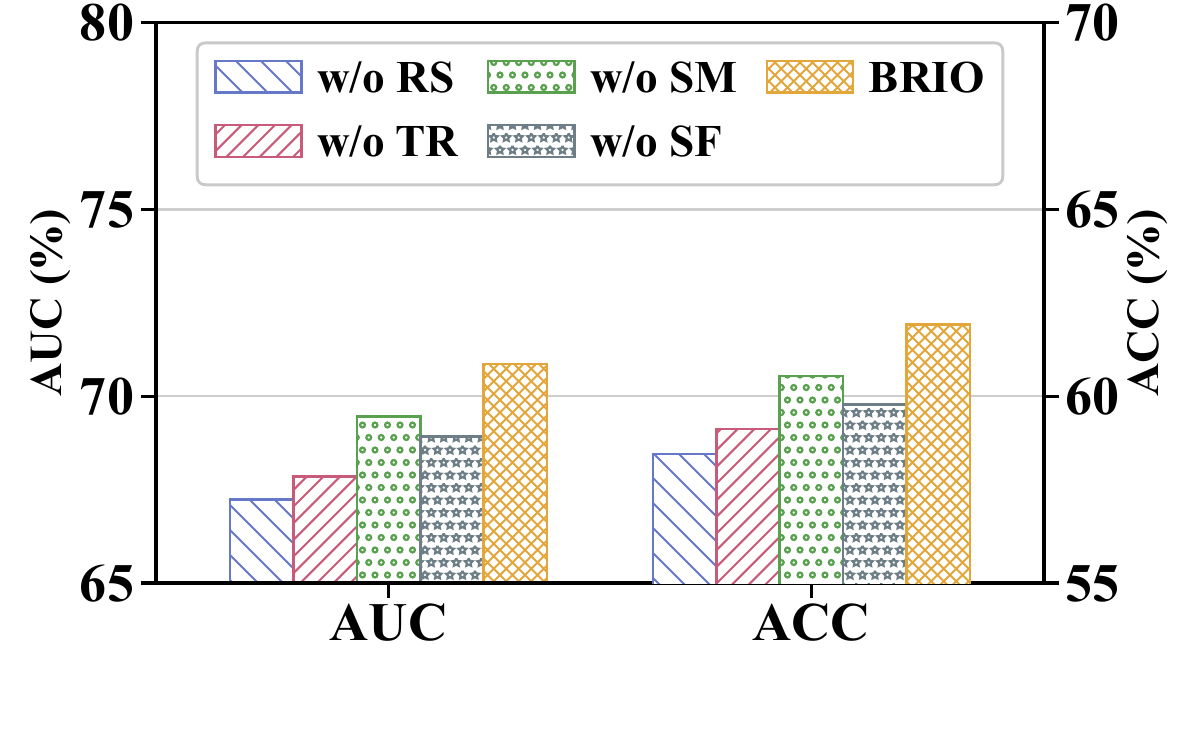}

    \caption{ABIDE (CC200)}
    \label{fig:ablation_cc200}
\end{subfigure}\hfill
\begin{subfigure}[b]{0.33\linewidth}
    \centering
    \includegraphics[width=\linewidth]{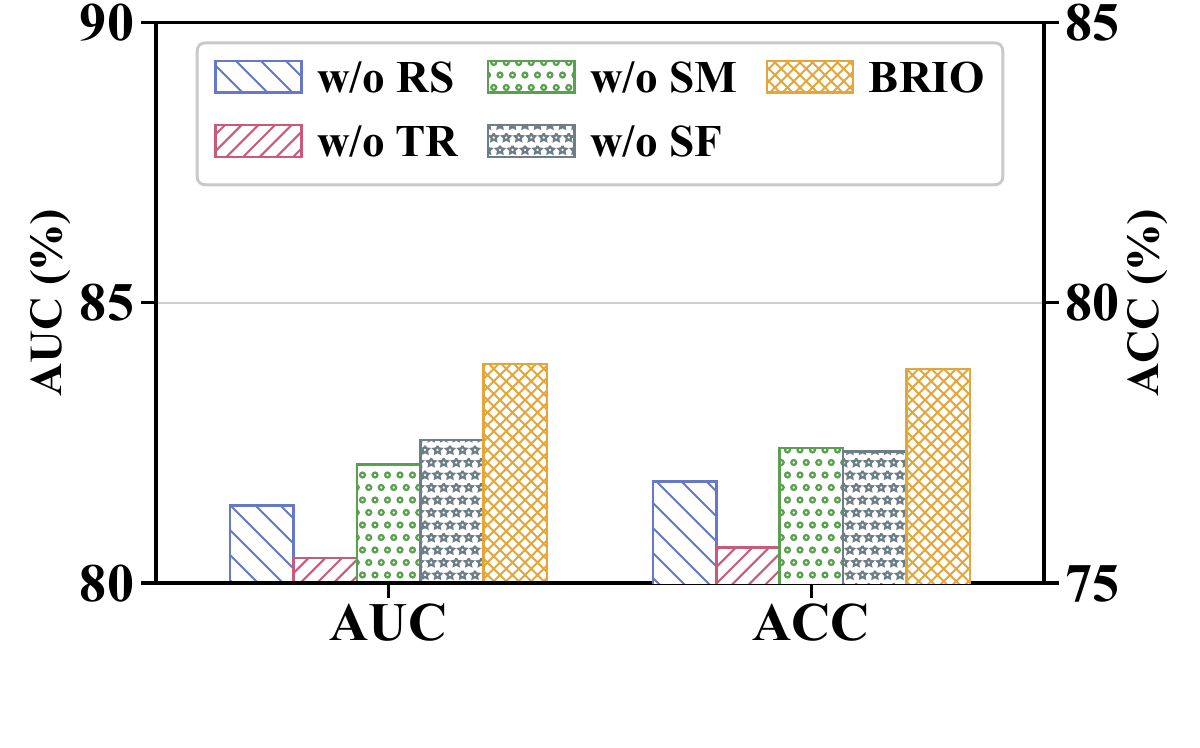}

    \caption{SRPBS}
    \label{fig:ablation_srbps}
\end{subfigure}\hfill
\begin{subfigure}[b]{0.33\linewidth}
    \centering
    \includegraphics[width=\linewidth]{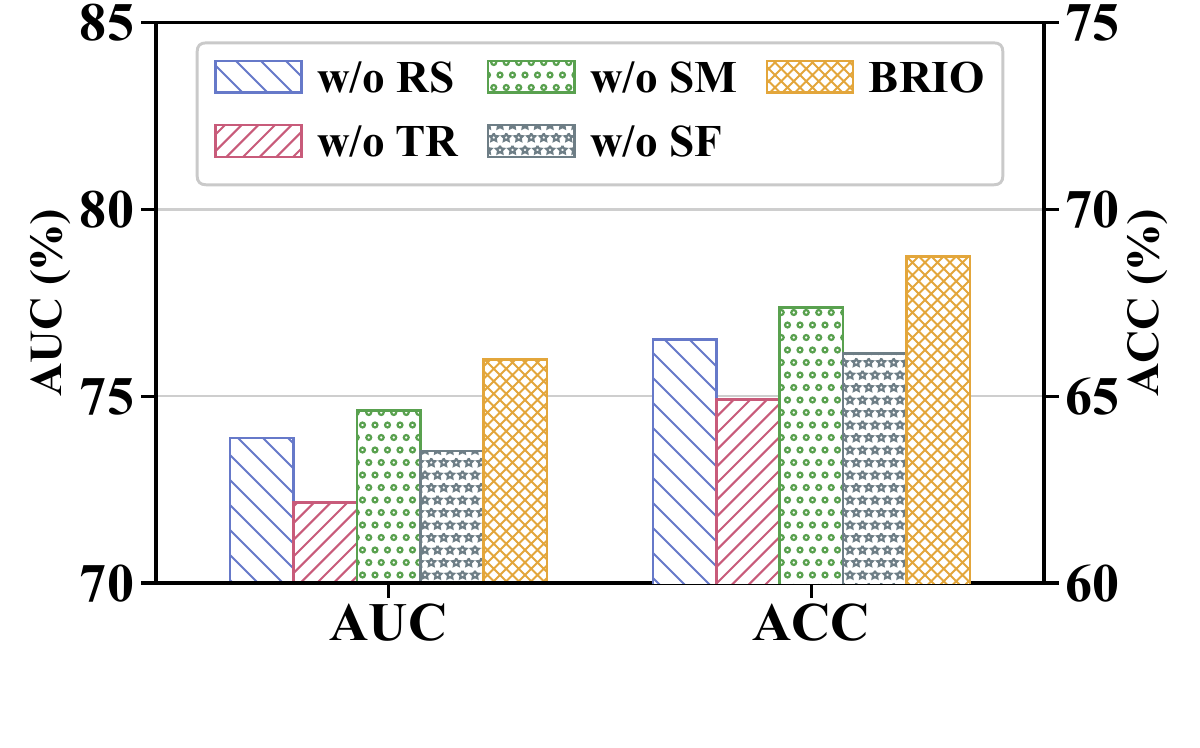}

    \caption{ABCD}
    \label{fig:ablation_abcd}
\end{subfigure}
\vspace{-0.2cm}
\caption{Ablation study on ABIDE (CC200), SRPBS, and ABCD.}
\vspace{-0.6cm}
\label{fig:more_ablation}
\end{figure*}

\begin{figure*}[t]
\begin{subfigure}[t]{0.33\linewidth}
    \centering
    \includegraphics[width=\linewidth]{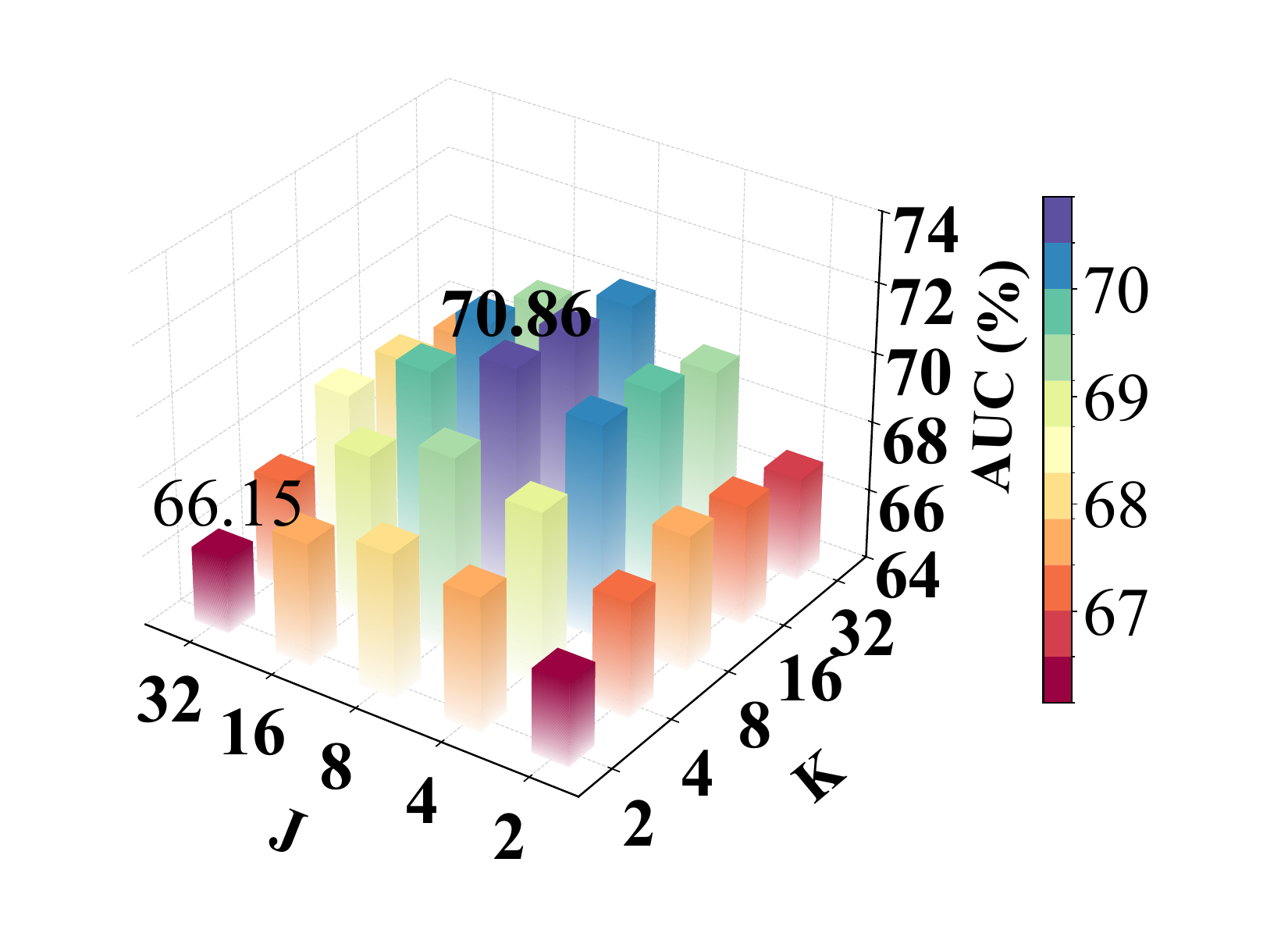}
    \caption{ABIDE (CC200)}
    \label{fig:sen_cc200}
\end{subfigure}\hfill
\begin{subfigure}[t]{0.33\linewidth}
    \centering
    \includegraphics[width=\linewidth]{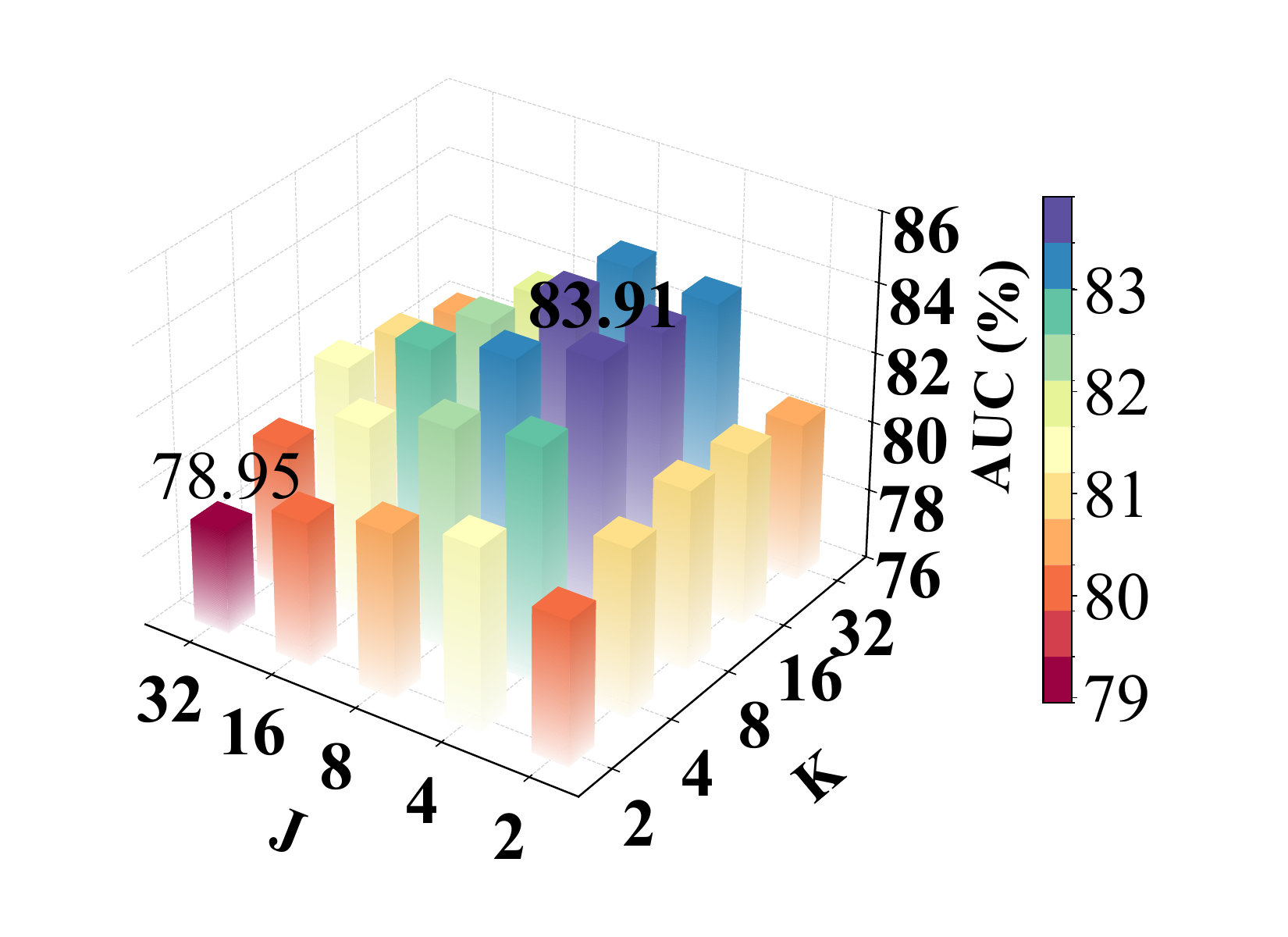}
    \caption{SRPBS}
    \label{fig:sen_srbps}
\end{subfigure}\hfill
\begin{subfigure}[t]{0.33\linewidth}
    \centering
    \includegraphics[width=\linewidth]{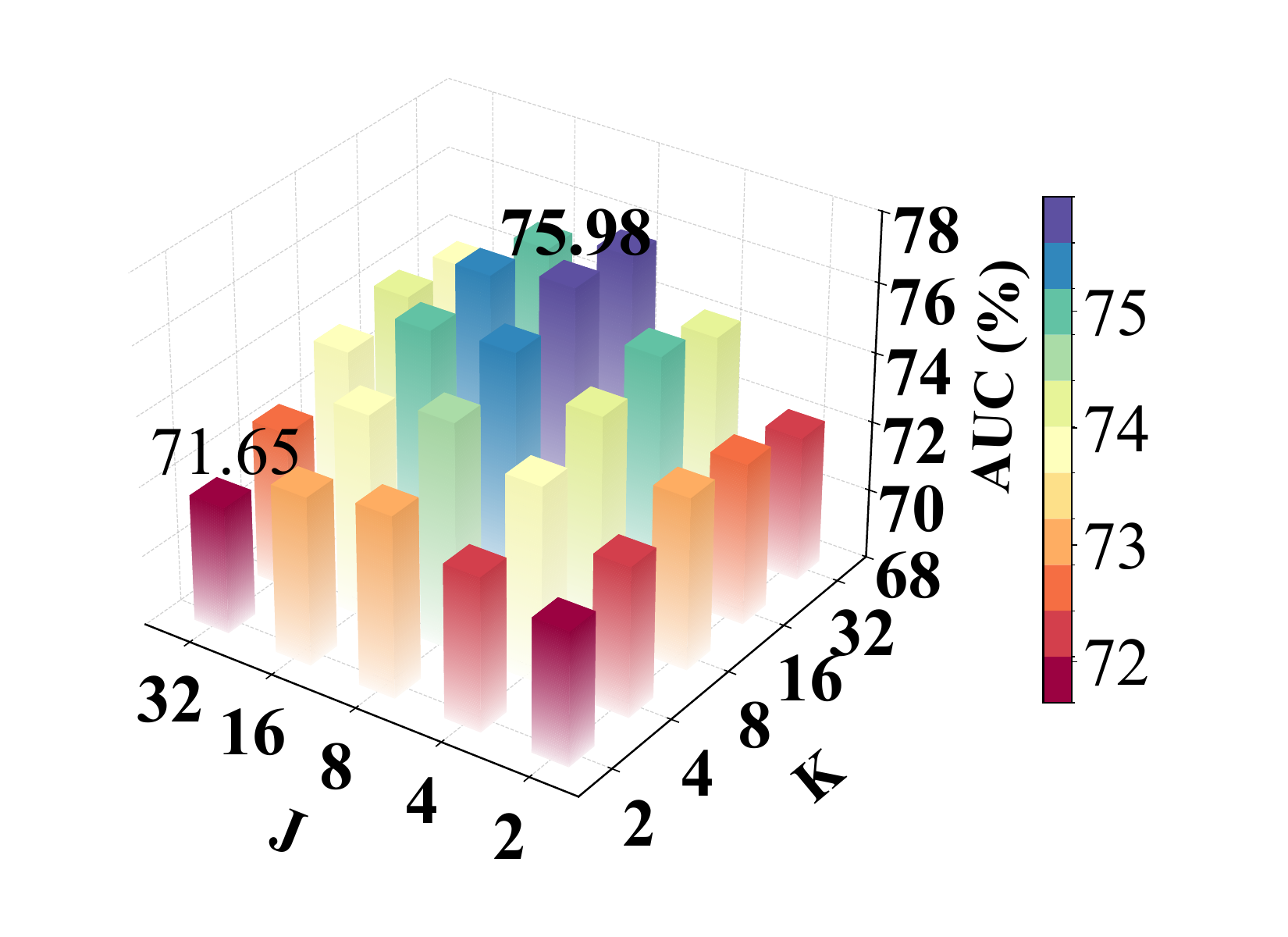}
    \caption{ABCD}
    \label{fig:sen_abcd}
\end{subfigure}
\vspace{-0.2cm}
\caption{Sensitivity study on ABIDE (CC200), SRPBS, and ABCD.}
\vspace{-0.3cm}
\label{fig:more_sen}
\end{figure*}

For the controlled sweep, \method{} w/o QG shares the graph encoder, factor representation, classifier, and training procedure of \method{}, and removes qualification guidance by assigning equal weights to all factors and disabling qualification-based strength modulation. We train this variant independently for each tested value of $\lambda_{\mathrm{env}}$ and select its checkpoint by the same source-validation rule as the main experiment. Runs are paired by fold and seed, and the trajectories show means connected in increasing $\lambda_{\mathrm{env}}$. The full \method{} is added as a separate reference point, and Matched w/o QG denotes the setting with positive $\lambda_{\mathrm{env}}$ selected by source validation.

Panels (a) of Figs.~\ref{fig:diagnostics-retention-mdd}, \ref{fig:diagnostics-retention-srpbs}, and \ref{fig:diagnostics-retention-abcd} show positive rank correlations between the two metrics on REST-meta-MDD and SRPBS but zero rank correlation on ABCD, and $R_{\mathrm{discord}}$ is nonzero on every dataset, so lower site discrepancy does not consistently imply lower predictive variation under FC re-estimation. The controlled sweeps in panels (b) show that stronger alignment reduces both quantities, while the full \method{} achieves lower variation than \method{} w/o QG at comparable site discrepancy, indicating that qualification-guided weighting and strength modulation offer benefits beyond simply increasing alignment strength. Panels (c) further show that \method{} achieves lower total and AUC-preserved mismatch rates than the evaluated w/o QG settings on all three datasets, indicating fewer mismatches even when predictive performance is preserved.

\vspace{-0.2cm}
\subsection{More Qualification and Factor Transferability}
\vspace{-0.2cm}

In this part, we extend the factor-retention analysis to REST-meta-MDD, SRPBS, and ABCD. As shown in panels (d) of Figs.~\ref{fig:diagnostics-retention-mdd}, \ref{fig:diagnostics-retention-srpbs}, and \ref{fig:diagnostics-retention-abcd}, full qualification outperforms the other rankings when retaining 25\% or 50\% of factors across all three datasets. Its advantage over task relevance alone shows that re-estimation support provides complementary information for prioritizing factors with predictive value on unseen sites. Both rankings outperform support alone at partial retention, and support alone falls below random ordering on ABCD, so re-estimation support identifies transferable factors only together with task relevance. Performance gaps narrow as more factors are retained and vanish at full retention, where all rankings recover the original predictions.

\begin{table*}[t]
    \centering
    \small
    \setlength{\tabcolsep}{4pt}
    \renewcommand{\arraystretch}{1.1}
    \caption{Sensitivity of \method{} to moving-block length $\ell_b$. \textbf{Bold} indicates the highest mean.}
    \vspace{-0.15cm}
    \label{tab:block_length}
    \resizebox{\textwidth}{!}{
    \begin{tabular}{l |ccc |ccc |ccc}
        \toprule
        \multirow{2}{*}{Dataset}
        & \multicolumn{3}{c|}{$0.5\times$}
        & \multicolumn{3}{c|}{$1\times$ (Default)}
        & \multicolumn{3}{c}{$2\times$} \\
        & $\ell_b$ & AUC & ACC
        & $\ell_b$ & AUC & ACC
        & $\ell_b$ & AUC & ACC \\
        \midrule

        ABIDE
        & 3 (6.0)
        & 65.84$_{\pm 4.67}$
        & 60.25$_{\pm 4.15}$
        & 6 (12.0)
        & \textbf{67.53}$_{\pm 4.12}$
        & \textbf{62.18}$_{\pm 3.68}$
        & 12 (24.0)
        & 66.71$_{\pm 4.28}$
        & 61.54$_{\pm 3.92}$ \\

        ABIDE (CC200)
        & 3 (6.0)
        & 69.45$_{\pm 5.25}$
        & 60.12$_{\pm 4.58}$
        & 6 (12.0)
        & \textbf{70.86}$_{\pm 4.87}$
        & \textbf{61.92}$_{\pm 3.94}$
        & 12 (24.0)
        & 68.72$_{\pm 5.15}$
        & 59.88$_{\pm 4.25}$ \\

        REST-meta-MDD
        & 4 (8.0)
        & 67.85$_{\pm 4.22}$
        & 63.38$_{\pm 3.42}$
        & 7 (14.0)
        & \textbf{68.84}$_{\pm 3.75}$
        & \textbf{64.42}$_{\pm 2.91}$
        & 14 (28.0)
        & 67.62$_{\pm 3.82}$
        & 63.85$_{\pm 3.18}$ \\

        SRPBS
        & 3 (6.0)
        & 81.35$_{\pm 6.35}$
        & 76.54$_{\pm 5.82}$
        & 6 (12.0)
        & \textbf{83.91}$_{\pm 5.63}$
        & \textbf{78.82}$_{\pm 5.28}$
        & 12 (24.0)
        & 83.72$_{\pm 5.58}$
        & 78.45$_{\pm 5.15}$ \\

        ABCD (ADHD-task)
        & 7 (5.6)
        & 72.15$_{\pm 5.45}$
        & 65.12$_{\pm 4.85}$
        & 15 (12.0)
        & \textbf{75.98}$_{\pm 4.39}$
        & \textbf{68.74}$_{\pm 3.82}$
        & 30 (24.0)
        & 75.45$_{\pm 4.52}$
        & 67.92$_{\pm 3.95}$ \\

        \bottomrule
    \end{tabular}
    }
    \vspace{-0.2cm}
\end{table*}

\subsection{More ablation studies}\label{app:ablation}

In this part, we extend the ablation study to ABIDE (CC200), SRPBS, and ABCD using the same four variants as in Sec.\ref{sec:ablation}. As shown in Fig.~\ref{fig:more_ablation}, \method{} achieves the highest AUC and ACC across all three settings. \method{} w/o RS shows the largest performance drop on ABIDE (CC200), whereas \method{} w/o TR yields the lowest performance on SRPBS and ABCD. These differences suggest that the relative contributions of re-estimation support and task relevance vary across datasets, while their joint use consistently benefits cross-site prediction. \method{} w/o SM and \method{} w/o SF also underperform the full model on both metrics, supporting the complementary roles of modulating overall alignment strength and prioritizing supported factors in cross-site matching.

\subsection{More sensitivity analysis}\label{app:sensitivity}

First, we extend the sensitivity analysis of the number of FC re-estimates $K$ and connectome factors $J$ to ABIDE (CC200), SRPBS, and ABCD. As shown in Fig.~\ref{fig:more_sen}, increasing $K$ from small values generally improves AUC, whereas further increases yield no consistent gains. This suggests that additional re-estimates help characterize variability in predictive contributions, with diminishing benefits at larger budgets. Intermediate values of $J$ also tend to outperform very small or large factor counts, indicating that finer factorization does not necessarily improve cross-site prediction. 

Additionally, we examine the sensitivity of \method{} to the temporal block length $\ell_b$ used for within-scan FC re-estimation. Whereas $K$ controls the number of re-estimates, $\ell_b$ determines the length of contiguous temporal segments retained during resampling. We compare shorter, default, and longer blocks at nominal scales of $0.5\times$, $1\times$, and $2\times$: the $1\times$ setting is the fold-level estimate rounded to the nearest integer, and the $0.5\times$ and $2\times$ settings scale the unrounded estimate before rounding and are used as fixed block lengths. Table~\ref{tab:block_length} reports each block length in time points together with its duration in seconds. The default block length achieves the highest mean AUC and ACC in all five settings, and the estimated defaults correspond to a similar physical duration across datasets despite their different repetition times, indicating that the autocorrelation-based estimate captures a temporal scale that transfers across acquisition protocols. Both halving and doubling the block length degrade performance, with halving being more harmful on most datasets; blocks that are too short break the temporal dependence that FC re-estimation is meant to preserve, whereas overly long blocks reduce the diversity of re-estimates.

\subsection{Effect of Scan Length on Predictive Performance}
\label{app:scan_data}

To examine how reduced temporal data for target FC estimation affects cross-site prediction, we evaluate \method{}, \method{} w/o RS, and \method{} w/o QG on ABIDE, REST-meta-MDD, SRPBS, and ABCD. We reconstruct FC graphs using 75\% or 50\% of each subject's preprocessed BOLD sequence, keeping all source-selected checkpoints fixed. At each fraction, AUC is computed separately for beginning, middle, and end windows and then averaged across positions. We report $\Delta\mathrm{AUC}$ relative to each model's full-scan performance. As shown in Fig.~\ref{fig:scan_degradation}, all three models exhibit larger AUC reductions when fewer time points are retained, but \method{} consistently shows the smallest decline. \method{} w/o RS incurs smaller drops than \method{} w/o QG, while remaining more sensitive than the full model. The differences in degradation widen at the lower scan fraction, suggesting that re-estimation support provides benefits beyond task relevance alone. These results support the value of re-estimation-informed learning for maintaining cross-site prediction performance when target FC graphs are estimated from fewer temporal observations.

\begin{figure*}[t]
    \centering
    \captionsetup{skip=4pt}
    \captionsetup[subfigure]{
        font=small,
        skip=3pt,
        justification=centering
    }

    \begin{subfigure}[t]{0.24\linewidth}
        \centering
        \includegraphics[width=\linewidth]{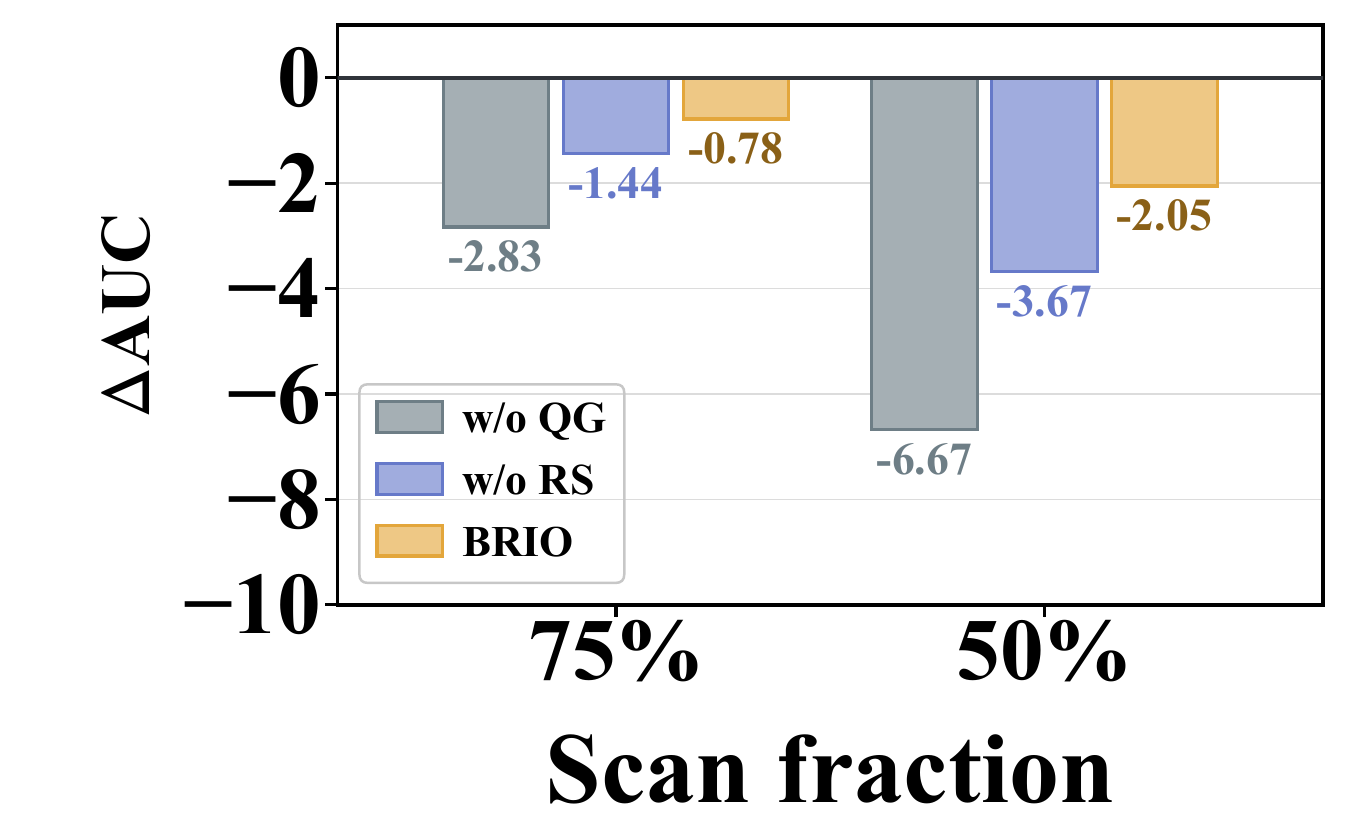}
        \caption{ABIDE}
        \label{fig:scan_degradation_abide}
    \end{subfigure}\hfill%
    \begin{subfigure}[t]{0.24\linewidth}
        \centering
        \includegraphics[width=\linewidth]{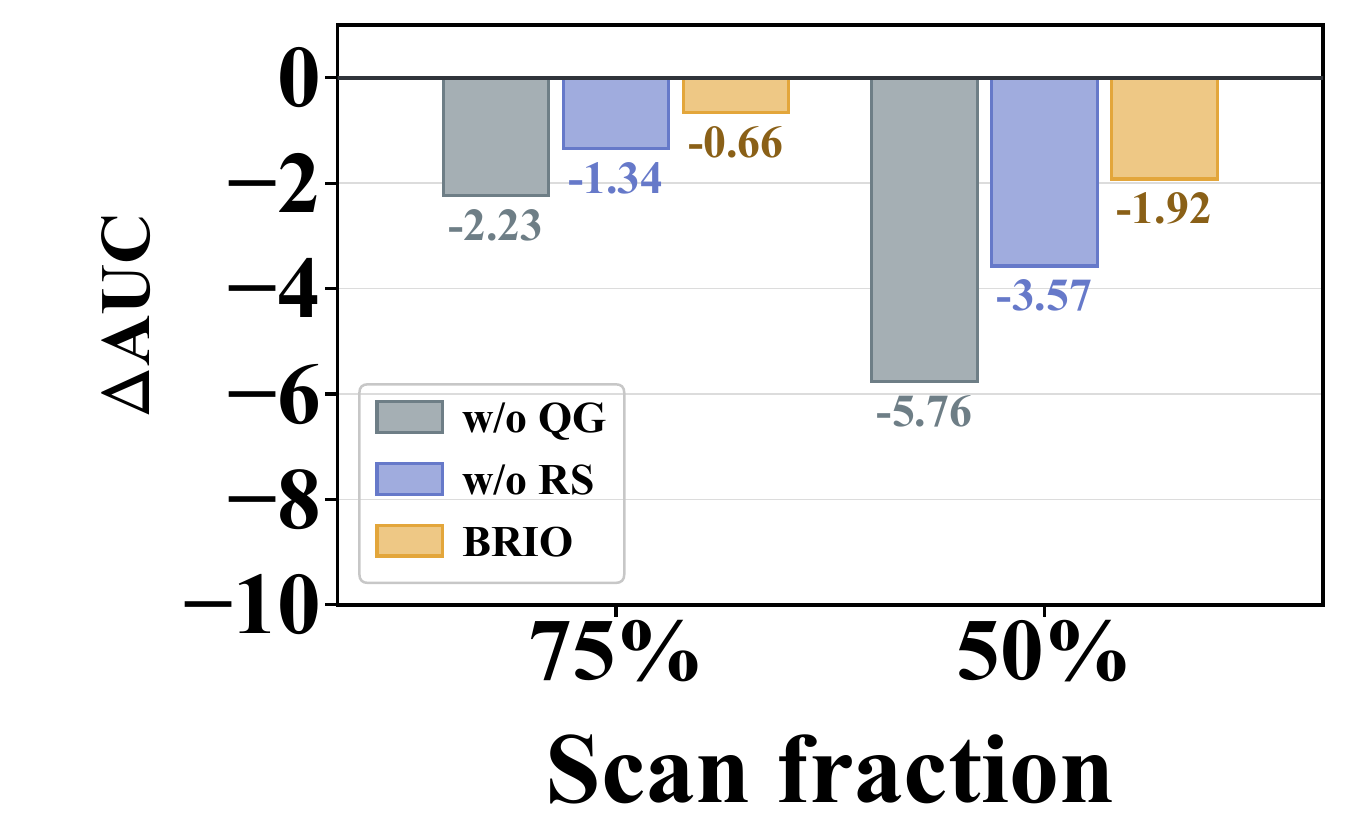}
        \caption{REST-meta-MDD}
        \label{fig:scan_degradation_mdd}
    \end{subfigure}\hfill%
    \begin{subfigure}[t]{0.24\linewidth}
        \centering
        \includegraphics[width=\linewidth]{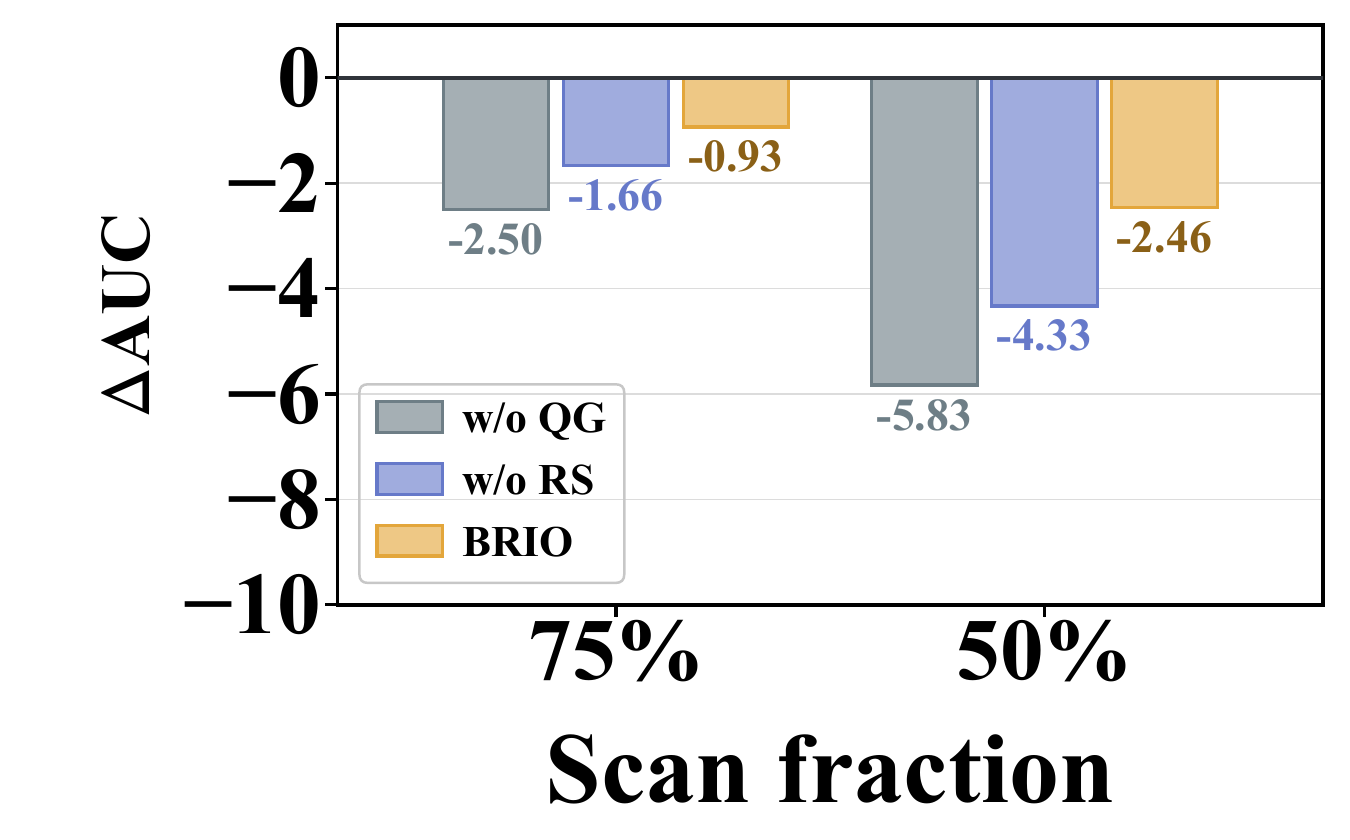}
        \caption{SRPBS}
        \label{fig:scan_degradation_srpbs}
    \end{subfigure}\hfill%
    \begin{subfigure}[t]{0.24\linewidth}
        \centering
        \includegraphics[width=\linewidth]{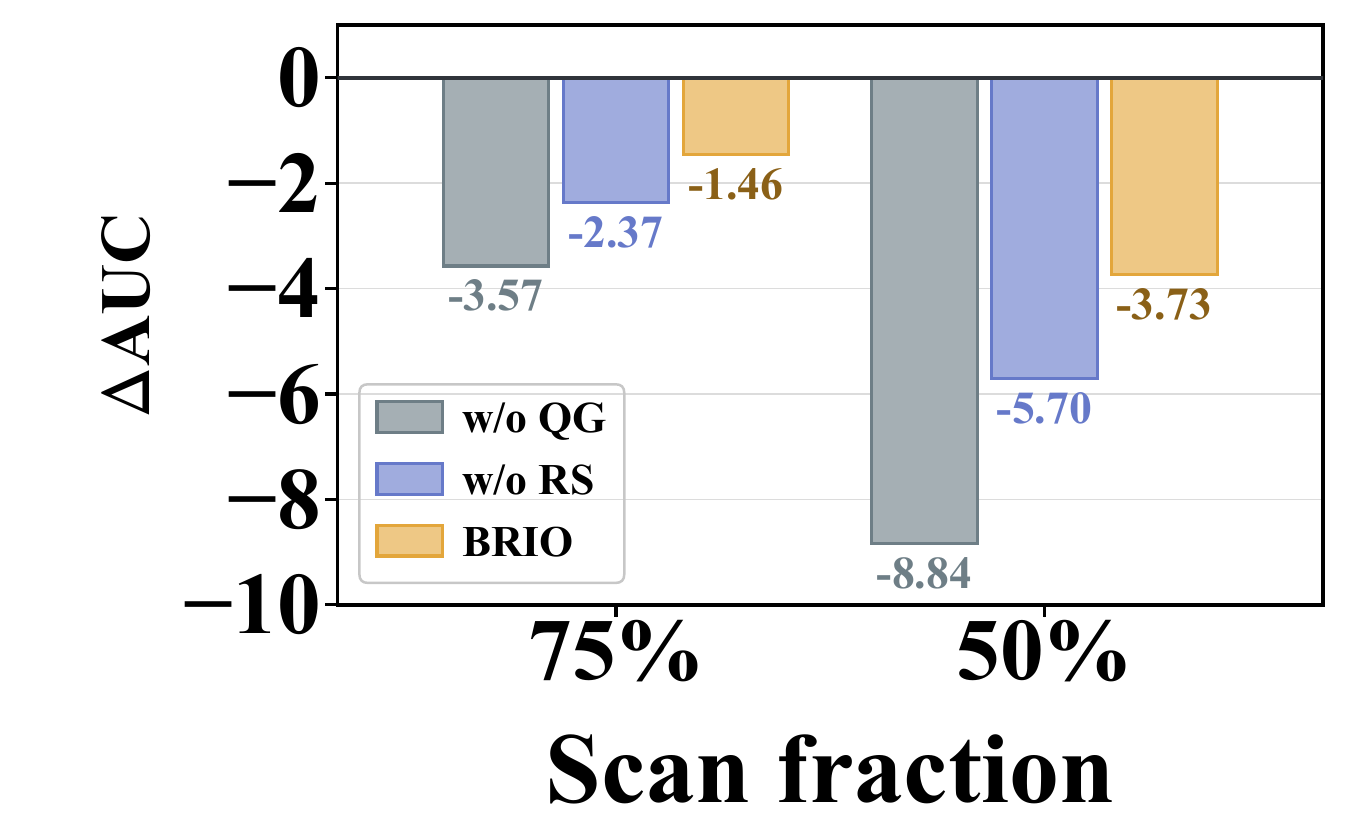}
        \caption{ABCD}
        \label{fig:scan_degradation_abcd}
    \end{subfigure}

    \caption{Performance under reduced scan data on different datasets.}
    \label{fig:scan_degradation}
    \vspace{-0.2cm}
\end{figure*}

\subsection{Training Efficiency and Resource Consumption}
\label{app:efficiency}

In this part, we compare the training efficiency of \method{} and representative baselines on four datasets. Table~\ref{tab:efficiency} reports per-epoch training time in seconds and GPU memory consumption in GB. We observe that \method{} requires more training time and GPU memory than lightweight baselines such as FC-HGNN and CORE. Compared with BrainOOD, it achieves shorter per-epoch training times while using slightly more memory, reflecting a trade-off between runtime and memory consumption. In contrast, \method{} requires less time and memory than DiSCO. These relative resource-use patterns remain consistent across all evaluated datasets.

\begin{table*}[t]
    \centering
    \small
    \setlength{\tabcolsep}{6pt}
    \renewcommand{\arraystretch}{1.1}
    \caption{Training time (in seconds) and GPU memory consumption (in GB) for each training epoch.}
    \vspace{-0.15cm}
    \label{tab:efficiency}
    \resizebox{0.8\textwidth}{!}{
    \begin{tabular}{l |cc |cc |cc |cc}
        \toprule
        \multirow{2}{*}{Method}
        & \multicolumn{2}{c|}{ABIDE}
        & \multicolumn{2}{c|}{REST-meta-MDD}
        & \multicolumn{2}{c|}{SRPBS}
        & \multicolumn{2}{c}{ABCD} \\
        & Time & Memory
        & Time & Memory
        & Time & Memory
        & Time & Memory \\
        \midrule
        GSAT
        & 0.68 & 2.1
        & 1.52 & 3.3
        & 0.48 & 2.0
        & 0.34 & 1.7 \\

        DisC
        & 1.05 & 2.4
        & 2.38 & 3.8
        & 0.74 & 2.2
        & 0.51 & 1.9 \\

        CEPG
        & 0.58 & 2.0
        & 1.34 & 3.1
        & 0.41 & 1.9
        & 0.28 & 1.6 \\

        DiSCO
        & 1.65 & 2.9
        & 3.85 & 4.6
        & 1.18 & 2.7
        & 0.78 & 2.3 \\

        BrainNetTF
        & 0.78 & 2.0
        & 1.65 & 3.2
        & 0.49 & 1.9
        & 0.32 & 1.6 \\

        AGMGC
        & 0.71 & 1.8
        & 1.58 & 3.0
        & 0.46 & 1.8
        & 0.30 & 1.5 \\

        FC-HGNN
        & 0.34 & 1.8
        & 0.48 & 3.1
        & 0.19 & 1.8
        & 0.12 & 1.5 \\

        XG-GNN
        & 0.63 & 2.1
        & 1.31 & 3.4
        & 0.38 & 2.0
        & 0.25 & 1.7 \\

        DeCI
        & 0.88 & 2.3
        & 1.95 & 3.6
        & 0.62 & 2.1
        & 0.44 & 1.8 \\

        BrainOOD
        & 1.45 & 2.5
        & 3.32 & 4.1
        & 0.88 & 2.3
        & 0.58 & 2.0 \\

        CORE
        & 0.35 & 2.0
        & 0.52 & 3.7
        & 0.22 & 1.9
        & 0.12 & 1.6 \\
        \midrule

        \textbf{\method{}}
        & 1.22 & 2.6
        & 2.85 & 4.2
        & 0.86 & 2.4
        & 0.56 & 2.1 \\
        \bottomrule
    \end{tabular}
    }
    \vspace{-0.2cm}
\end{table*}

\section{Interpretability Analysis}
\label{app:interpretability}

To examine how qualification-based factor weighting translates into regional pooling emphasis, we analyze the models trained on ABIDE and REST-meta-MDD using AAL116. For each model, we first summarize the normalized qualifications $\overline{\alpha}_{e,e',c,j}$ over all unordered source-site pairs and both classes:
\begin{equation}
    \overline{\alpha}_{j}^{\mathrm{global}}
    =
    \frac{1}{E(E-1)}
    \sum_{\substack{e,e'\in\mathcal E_{\mathrm{tr}}\\e<e'}}
    \sum_{c\in\{0,1\}}
    \overline{\alpha}_{e,e',c,j},
    \qquad
    \sum_{j=1}^{J}
    \overline{\alpha}_{j}^{\mathrm{global}}
    =1.
\label{eq:app_global_factor_qualification}
\end{equation}
This equal-weight summary captures relative factor allocation across source-site pairs and classes, separately from the overall alignment strength controlled by $g_{e,e',c}$. We then project these qualifications through the factor--ROI assignment $\overline{\bm\Pi}$ retained after supervised warm-up. For ROI $u$, the resulting score and its normalization are
\begin{equation}
    S_u
    =
    \sum_{j=1}^{J}
    \overline{\alpha}_{j}^{\mathrm{global}}
    \overline{\Pi}_{j,u},
    \qquad
    \widehat S_u
    =
    \frac{S_u}{\sum_{v=1}^{P}S_v}
    =
    JS_u.
\label{eq:app_qualification_roi_score}
\end{equation}
The assignment row masses of $1/J$ give $\sum_uS_u=1/J$, so $\widehat S_u$ defines a normalized ROI profile. Higher scores indicate greater pooling emphasis in factors with higher average source qualification. Under uniform factor weights, the column-mass constraint yields $\widehat S_u=1/P$ for every ROI. Regional differences therefore reflect how qualification-based factor weighting interacts with the learned ROI assignment.

To summarize these profiles across LOSO folds and random seeds, we compute $\widehat S_u^{(r,s)}$ separately for each fold $r$ and seed $s$, using the qualifications and retained assignment of that model. Since factor indices need not correspond across independently trained models, we aggregate the scores in the common AAL116 ROI space:
\begin{equation}
    S_u^{\mathrm{dataset}}
    =
    \frac{1}{|\mathcal R||\mathcal S|}
    \sum_{r\in\mathcal R}
    \sum_{s\in\mathcal S}
    \widehat S_u^{(r,s)},
    \qquad
    \sum_{u=1}^{P}S_u^{\mathrm{dataset}}=1,
\label{eq:app_dataset_roi_score}
\end{equation}
where $\mathcal R$ and $\mathcal S$ denote the evaluated folds and seeds. This procedure preserves each model's factor--ROI correspondence during projection and yields a dataset-level summary of regional pooling emphasis.

Fig.~\ref{fig:roi_interpretability} visualizes the five ROIs with the highest dataset-averaged scores. On ABIDE, these are the left precuneus, right superior temporal gyrus, right posterior cingulate gyrus, left medial superior frontal gyrus, and right fusiform gyrus. On REST-meta-MDD, the selected regions are the left medial superior frontal gyrus, left amygdala, right posterior cingulate gyrus, right hippocampus, and right anterior cingulate gyrus. The right posterior cingulate and left medial superior frontal regions appear in both profiles, while the remaining selected ROIs differ between datasets. These shared and differing regions illustrate how the factors prioritized by source-side qualification distribute pooling emphasis across the anatomical space. The profiles complement the ablation and factor-retention analyses by connecting the relative factor weights used for alignment to their regional pooling structure.

\begin{figure*}[t]
    \centering
    \captionsetup{
        font=small,
        skip=5pt
    }
    \captionsetup[subfigure]{
        font=small,
        skip=3pt,
        justification=centering
    }

    \begin{subfigure}[t]{0.49\linewidth}
        \vspace{0pt}
        \centering
        \includegraphics[width=\linewidth]{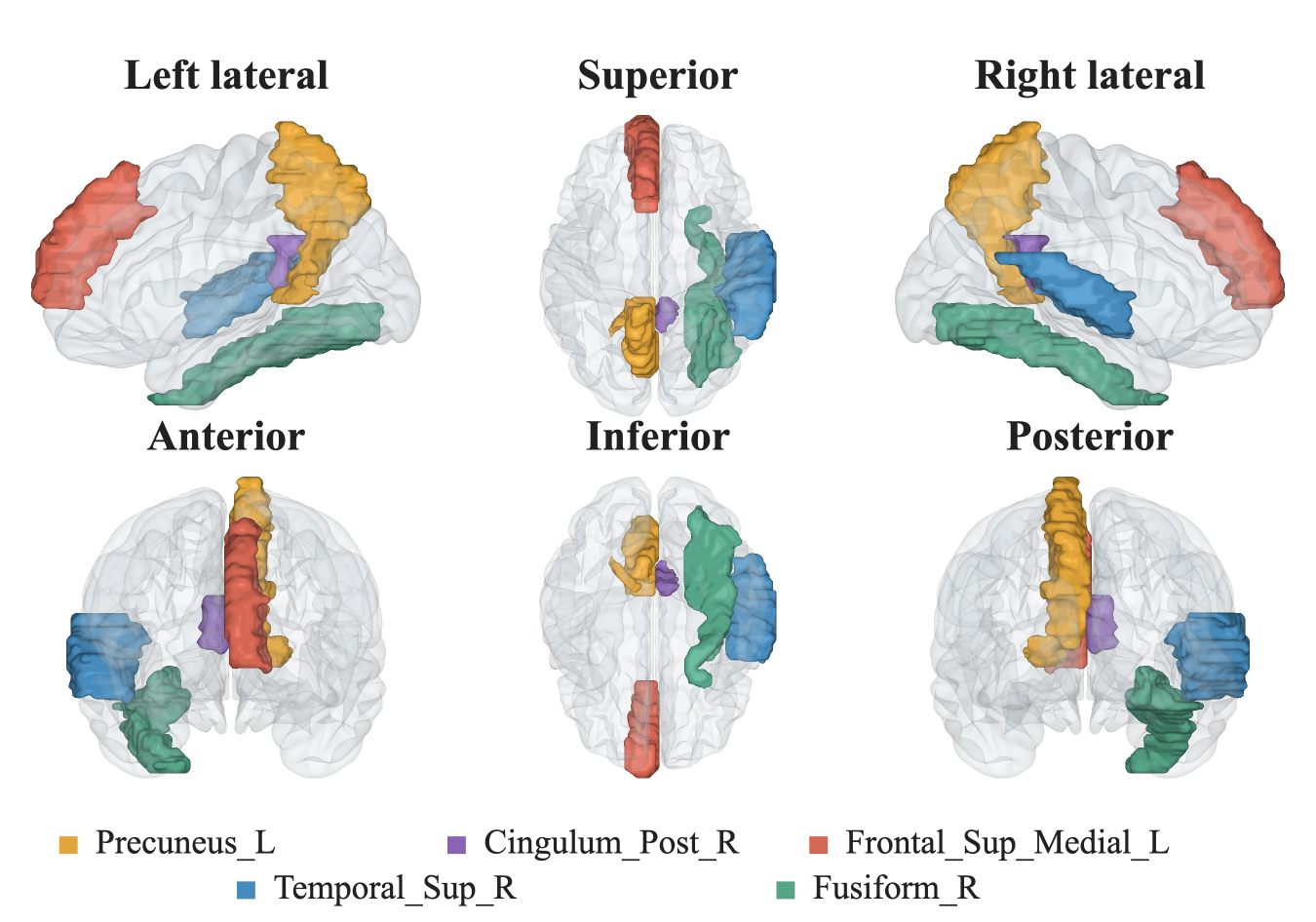}
        \caption{ABIDE}
        \label{fig:roi_abide}
    \end{subfigure}\hfill%
    \begin{subfigure}[t]{0.49\linewidth}
        \vspace{0pt}
        \centering
        \includegraphics[width=\linewidth]{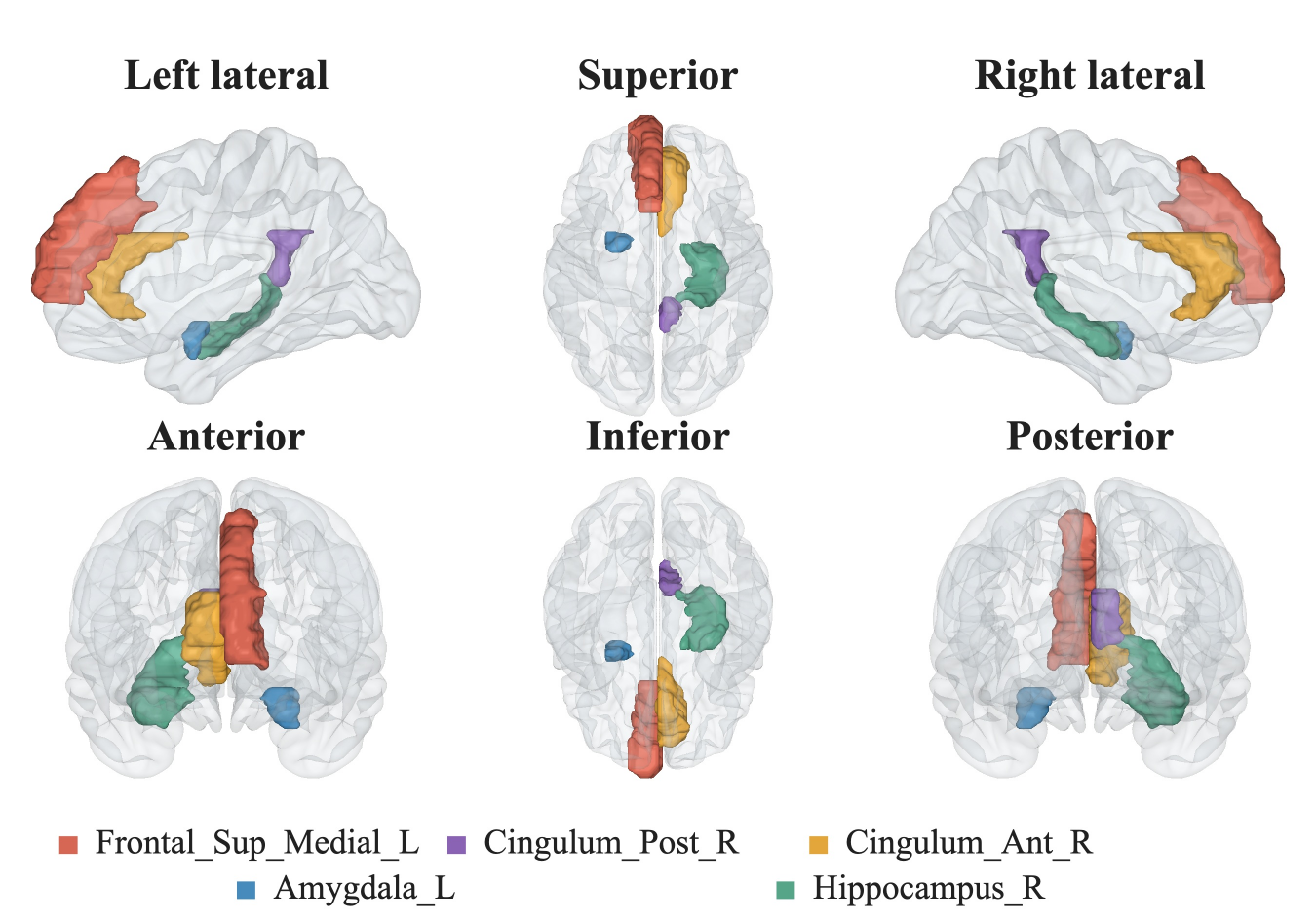}
        \caption{REST-meta-MDD}
        \label{fig:roi_mdd}
    \end{subfigure}

    \caption{Anatomical distribution of qualification-weighted ROI profiles on ABIDE and REST-meta-MDD.}
    \label{fig:roi_interpretability}
    \vspace{-0.2cm}
\end{figure*}



\input{table/algorithm}

%% file: table/notations.tex
\begin{table}[t]
    \centering
    \caption{Summary of key notations.}
    \label{tab:notation}
    \small
    \setlength{\tabcolsep}{6pt}
    \renewcommand{\arraystretch}{1.10}
    \begin{tabularx}{0.98\linewidth}{@{}lX@{}}
        \toprule
        \textbf{Symbol} & \textbf{Description} \\
        \midrule

        $\mathcal E_{\mathrm{tr}},\mathcal E_{\mathrm{te}},E$
        & Source and unseen test site sets, and the number of source sites. \\

        $\bm X_i^e,Y_i^e$
        & Preprocessed BOLD sequence and binary label of subject $i$ at site $e$. \\

        $T_i^e,P$
        & Numbers of valid time points and ROIs. \\

        $\mathcal I_{e,c},N_{e,c}$
        & Subject index set for site $e$ and class $c$, and its size. \\

        $\mathcal B_{e,c}$
        & Training mini-batch subset for site $e$ and class $c$. \\

        \midrule

        $\Psi,\mathcal R_k,K$
        & FC estimator, joint circular moving-block resampling operator, and number of FC re-estimates per subject. \\

        $\ell_b,\ell_{b,i}^{e}$
        & Fold-level block length estimated from training subjects, and its per-sequence clipped value, in time points. \\

        $\tau_{\mathrm{ac}},\ell_{\min},\ell_{\max}$
        & Autocorrelation threshold and clipping bounds of the block-length rule. \\

        $\bm A_i^{e,(0)},\bm A_i^{e,(k)}$
        & Full-scan FC graph and its $k$-th re-estimate, with $k\geq1$. \\

        $\bm S,\rho_g$
        & Fixed source-derived binary propagation support and per-ROI neighbor selection ratio. \\

        $\bm G_i^{e,(k)},\bm G_i^{e,(k),\pm}$
        & Supported FC adjacency and its positive weights ($+$) and negative-weight magnitudes ($-$). \\

        $\bm E^{\mathrm{ROI}},\bm H_i^{e,(k)}$
        & Shared ROI identity embeddings and ROI representations produced by the shared signed GNN. \\

        \midrule

        $J,\bm Q$
        & Number of connectome factors and learnable factor queries. \\

        $\bm\Pi,\overline{\bm\Pi}$
        & Shared balanced factor--ROI assignment and its cached value fixed after supervised warm-up. \\

        $\bm r_{i,j}^{e,(k)}$
        & Unit-norm representation of factor $j$ for subject $i$ at site $e$ and FC estimate $k$. \\

        $\bm s_i^{e,(k)}$
        & Graph representation formed by scaled factor concatenation in a fixed order. \\

        $\bm w_j,b$
        & Classifier weight block for factor $j$ and the prediction bias. \\

        $a_{i,j}^{e,(k)}$
        & Factor score $\bm w_j^\top\bm r_{i,j}^{e,(k)}$ before the common $1/\sqrt J$ scaling. \\

        $z_i^{e,(k)},\widehat p_i^{e,(0)}$
        & Prediction logit for FC estimate $k$ and full-scan prediction probability. \\

        \midrule

        $\mu_{e,c,j}$
        & Mean full-scan contribution of factor $j$ for site $e$ and class $c$. \\

        $V_{e,c,j}^{\mathrm{sub}},V_{c,j}^{\mathrm{pool}}$
        & Within-class contribution variance at a site and its equal-site average. \\

        $\Delta_{e,j},\Delta_j$
        & Full-scan class-mean contrast at a site and its signed equal-site average. \\

        $\beta_j,R_j^{\mathrm{task}},\pi_j^{\mathrm{task}}$
        & Relative squared classifier-block norm, unnormalized predictive relevance, and its normalization across factors. \\

        $D_{e,c,j}^{\mathrm{re}}$
        & Mean squared difference between paired re-estimated and full-scan contributions. \\

        $M_{e,j}^{\mathrm{task}},T_{e,c,j}$
        & Squared half-separation $\Delta_{e,j}^2/4$ and local reference scale $V_{e,c,j}^{\mathrm{sub}}+M_{e,j}^{\mathrm{task}}$. \\

        $q_{e,c,j}^{\mathrm{re}},\kappa_{e,c,j}$
        & Task-calibrated re-estimation support and full-scan separation ratio for factor $j$ at site $e$ and class $c$. \\

        \midrule

        $q_{e,e',c,j}^{\mathrm{pair}}$
        & Geometric mean of factor $j$'s re-estimation supports at sites $e$ and $e'$ for class $c$. \\

        $\alpha_{e,e',c,j},\overline{\alpha}_{e,e',c,j}$
        & Unnormalized pairwise qualification and its normalization determining the relative factor weight in alignment. \\

        $g_{e,e',c}$
        & Total pairwise qualification scaling alignment for sites $e,e'$ and class $c$. \\

        $\bm\varphi_{i\mid e,e',c}^{Q,d}$
        & Pair-specific graph representation at site $d\in\{e,e'\}$, formed from reweighted full-scan factors. \\

        $\widehat P_{e,c}^{Q[e,e']}$
        & Empirical class-conditional distribution at site $e$ in the common pairwise weighted space. \\

        $\mathcal S_{\varepsilon},\varepsilon$
        & Debiased Sinkhorn divergence and its entropic regularization parameter. \\

        \bottomrule
    \end{tabularx}
\end{table}

%% file: table/ID.tex
\begin{table*}[t]
    \centering
    \caption{ID results on ABIDE, ABIDE (CC200), REST-meta-MDD, SRPBS, and ABCD (ADHD-task) under ten-fold cross-validation stratified by site and class, reported as mean $\pm$ standard deviation across test folds. \textbf{Bold} results indicate the best performance.}
    \vspace{-0.3cm}
    \label{tab:main_results_id}
    \resizebox{\textwidth}{!}{
    \begin{tabular}{cc |cc |cc |cc |cc |cc}
    \toprule
    \multirow{2}{*}{Type} & \multirow{2}{*}{Method}
    & \multicolumn{2}{c|}{ABIDE}
    & \multicolumn{2}{c|}{ABIDE (CC200)}
    & \multicolumn{2}{c|}{REST-meta-MDD}
    & \multicolumn{2}{c|}{SRPBS}
    & \multicolumn{2}{c}{ABCD (ADHD-task)} \\
    &
    & AUC & ACC
    & AUC & ACC
    & AUC & ACC
    & AUC & ACC
    & AUC & ACC \\
    \midrule

    \multirow{3}{*}{\rotatebox{90}{\fontsize{9}{12}\selectfont GNN}}
    & GCN
    & 63.45$_{\pm 4.12}$ & 59.21$_{\pm 3.85}$
    & 51.88$_{\pm 4.02}$ & 51.05$_{\pm 3.74}$
    & 62.15$_{\pm 2.88}$ & 57.02$_{\pm 2.45}$
    & 84.12$_{\pm 3.85}$ & 75.34$_{\pm 4.12}$
    & 72.18$_{\pm 5.12}$ & 64.05$_{\pm 4.88}$ \\

    & GAT
    & 59.12$_{\pm 3.95}$ & 55.44$_{\pm 3.55}$
    & 58.62$_{\pm 4.15}$ & 57.18$_{\pm 3.82}$
    & 63.84$_{\pm 3.12}$ & 59.12$_{\pm 2.65}$
    & 81.75$_{\pm 4.05}$ & 70.88$_{\pm 4.35}$
    & 71.02$_{\pm 5.45}$ & 63.88$_{\pm 5.10}$ \\

    & GIN
    & 58.74$_{\pm 3.65}$ & 54.95$_{\pm 3.22}$
    & 53.15$_{\pm 4.33}$ & 50.42$_{\pm 3.95}$
    & 59.33$_{\pm 2.54}$ & 56.14$_{\pm 2.22}$
    & 82.05$_{\pm 3.78}$ & 71.45$_{\pm 3.92}$
    & 69.45$_{\pm 5.33}$ & 61.22$_{\pm 4.75}$ \\

    \midrule

    \multirow{2}{*}{\rotatebox{90}{\fontsize{9}{12}\selectfont OOD}}
    & CORAL
    & 56.88$_{\pm 4.05}$ & 53.25$_{\pm 3.75}$
    & 55.14$_{\pm 4.45}$ & 53.75$_{\pm 4.05}$
    & 59.95$_{\pm 3.15}$ & 57.85$_{\pm 2.85}$
    & 81.14$_{\pm 4.12}$ & 70.82$_{\pm 4.05}$
    & 70.12$_{\pm 5.25}$ & 63.45$_{\pm 4.85}$ \\
    & IRM
    & 60.15$_{\pm 3.88}$ & 56.78$_{\pm 3.45}$
    & 53.42$_{\pm 4.65}$ & 54.12$_{\pm 4.11}$
    & 60.75$_{\pm 2.95}$ & 57.44$_{\pm 2.75}$
    & 80.22$_{\pm 3.94}$ & 70.15$_{\pm 3.88}$
    & 71.25$_{\pm 5.08}$ & 60.14$_{\pm 4.95}$ \\

    \midrule

    \multirow{4}{*}{
        \rotatebox{90}{
            \shortstack{\fontsize{9}{12}\selectfont Graph\\OOD}
        }
    }
    & GSAT
    & 59.85$_{\pm 3.75}$ & 57.14$_{\pm 3.52}$
    & 52.45$_{\pm 3.95}$ & 52.12$_{\pm 3.65}$
    & 58.45$_{\pm 2.85}$ & 55.75$_{\pm 2.55}$
    & 82.45$_{\pm 3.65}$ & 72.15$_{\pm 3.85}$
    & 72.05$_{\pm 5.14}$ & 62.45$_{\pm 4.92}$ \\

    & DisC
    & 58.12$_{\pm 4.15}$ & 57.33$_{\pm 3.85}$
    & 54.15$_{\pm 4.25}$ & 53.85$_{\pm 4.02}$
    & 59.88$_{\pm 3.05}$ & 57.12$_{\pm 2.75}$
    & 83.12$_{\pm 3.85}$ & 73.55$_{\pm 4.15}$
    & 70.45$_{\pm 5.05}$ & 59.75$_{\pm 4.85}$ \\

    & CEPG
    & 53.45$_{\pm 4.55}$ & 51.15$_{\pm 4.12}$
    & 46.85$_{\pm 4.85}$ & 48.75$_{\pm 4.45}$
    & 57.15$_{\pm 3.25}$ & 55.45$_{\pm 2.95}$
    & 76.15$_{\pm 4.25}$ & 69.12$_{\pm 4.35}$
    & 64.12$_{\pm 5.85}$ & 58.45$_{\pm 5.15}$ \\

    & DiSCO
    & 58.25$_{\pm 3.85}$ & 56.45$_{\pm 3.65}$
    & 53.12$_{\pm 4.15}$ & 52.75$_{\pm 3.85}$
    & 60.15$_{\pm 2.75}$ & 57.15$_{\pm 2.55}$
    & 85.12$_{\pm 3.55}$ & 74.15$_{\pm 3.75}$
    & 68.75$_{\pm 5.15}$ & 62.85$_{\pm 4.85}$ \\

    \midrule

    \multirow{7}{*}{
        \rotatebox{90}{
            \shortstack{\fontsize{9}{12}\selectfont Brain\\Networks}
        }
    }
    & BrainNetTF
    & 63.75$_{\pm 3.55}$ & \textbf{61.85$_{\pm 3.25}$}
    & 58.45$_{\pm 4.15}$ & 54.15$_{\pm 3.85}$
    & 62.85$_{\pm 2.55}$ & 58.15$_{\pm 2.35}$
    & 79.45$_{\pm 4.15}$ & 71.15$_{\pm 4.35}$
    & 69.15$_{\pm 4.85}$ & 58.15$_{\pm 4.55}$ \\

    & XG-GNN
    & 61.45$_{\pm 3.75}$ & 55.45$_{\pm 3.45}$
    & 52.15$_{\pm 4.55}$ & 54.15$_{\pm 4.15}$
    & 60.45$_{\pm 2.95}$ & 56.75$_{\pm 2.65}$
    & 84.75$_{\pm 3.65}$ & 73.85$_{\pm 3.85}$
    & 72.45$_{\pm 4.85}$ & 57.15$_{\pm 4.65}$ \\

    & AGMGC
    & 59.85$_{\pm 3.85}$ & 52.75$_{\pm 3.55}$
    & 57.15$_{\pm 3.95}$ & 52.15$_{\pm 3.65}$
    & 58.85$_{\pm 2.85}$ & 55.15$_{\pm 2.55}$
    & 86.45$_{\pm 3.45}$ & 77.15$_{\pm 3.65}$
    & 71.85$_{\pm 5.05}$ & 64.15$_{\pm 4.75}$ \\

    & FC-HGNN
    & 57.15$_{\pm 4.15}$ & 53.85$_{\pm 3.85}$
    & 56.45$_{\pm 4.25}$ & 54.75$_{\pm 3.95}$
    & 61.45$_{\pm 3.05}$ & 57.15$_{\pm 2.75}$
    & 77.85$_{\pm 4.05}$ & 68.75$_{\pm 4.25}$
    & 72.15$_{\pm 4.95}$ & 60.45$_{\pm 4.85}$ \\

    & BrainOOD
    & 64.75$_{\pm 3.45}$ & 60.45$_{\pm 3.15}$
    & 63.85$_{\pm 3.65}$ & 60.15$_{\pm 3.35}$
    & 64.75$_{\pm 2.75}$ & 61.15$_{\pm 2.45}$
    & 86.45$_{\pm 3.25}$ & 77.45$_{\pm 3.45}$
    & 73.45$_{\pm 4.75}$ & 65.15$_{\pm 4.55}$ \\

    & DeCI
    & 57.45$_{\pm 4.05}$ & 61.45$_{\pm 3.55}$
    & 60.15$_{\pm 3.85}$ & \textbf{62.85$_{\pm 3.45}$}
    & 55.15$_{\pm 3.15}$ & 55.85$_{\pm 2.85}$
    & 87.45$_{\pm 3.35}$ & 79.85$_{\pm 3.55}$
    & 56.45$_{\pm 5.25}$ & 63.45$_{\pm 4.95}$ \\

    & CORE
    & 65.45$_{\pm 3.25}$ & 60.85$_{\pm 2.95}$
    & 67.45$_{\pm 3.45}$ & 62.15$_{\pm 3.15}$
    & 65.85$_{\pm 2.65}$ & 61.45$_{\pm 2.35}$
    & \textbf{88.75$_{\pm 3.05}$} & \textbf{81.45$_{\pm 3.25}$}
    & 73.15$_{\pm 4.55}$ & 67.45$_{\pm 4.35}$ \\

    \midrule

    & \method{}
    & \textbf{66.85$_{\pm 3.15}$} & 61.55$_{\pm 2.85}$
    & \textbf{68.75$_{\pm 3.25}$} & 62.65$_{\pm 3.05}$
    & \textbf{66.45$_{\pm 2.55}$} & \textbf{62.15$_{\pm 2.25}$}
    & 88.15$_{\pm 3.15}$ & 80.75$_{\pm 3.35}$
    & \textbf{76.45$_{\pm 4.25}$} & \textbf{69.15$_{\pm 4.05}$} \\

    \bottomrule
    \end{tabular}
    }
    \vspace{-0.2cm}
\end{table*}

%% file: table/algorithm.tex
\begin{algorithm}[ht]
\caption{Training and Inference of \method{}}
\label{alg:overall}
\small
\begin{algorithmic}[1]
\REQUIRE Source datasets $\{\mathcal D^e\}_{e\in\mathcal E_{\mathrm{tr}}}$ with $E\geq2$ and $N_{e,c}>0$; unseen-site sequences $\{\bm X_z\}_{z=1}^{N_{\mathrm{te}}}$; $\Psi$, $\rho_g$, $J$, and $K$; block-length rule with autocorrelation threshold $\tau_{\mathrm{ac}}$ and bounds $[\ell_{\min},\ell_{\max}]$ in time points; warm-up epochs $T_{\mathrm w}$; qualification interval $M$ in optimization steps; $\varepsilon$, $\lambda_{\mathrm{env}}$, and a training stopping criterion.
\ENSURE Unseen-site prediction probabilities $\widehat{\bm p}_{\mathrm{te}}$.

\STATE Cache source full-scan graphs $\bm A_i^{e,(0)}=\Psi(\bm X_i^e)$ and construct $\bm S$ using Eq.~\eqref{eq:source_support}.
\STATE Estimate the fold-level block length $\ell_b$ from autocorrelation horizons of training-subject ROI signals with threshold $\tau_{\mathrm{ac}}$ as in Appendix~\ref{app:preprocessing}; for every source subject set $\ell_{b,i}^{e}\leftarrow\operatorname{clip}\!\big(\operatorname{round}(\ell_b),\,\ell_{\min},\,\min(\ell_{\max},\lfloor T_i^e/2\rfloor)\big)$.
\STATE Initialize all learnable parameters $\Theta$; set $\Theta_{\mathrm{asg}}=\{\bm Q,\omega\}$ and $\mathcal C_Q\leftarrow\varnothing$.
\STATE Use mini-batches with equally sized nonempty subsets $\mathcal B_{e,c}$ for every source site and class; evaluate $\mathcal L_{\mathrm{cls}}$ using $\mathcal B_{e,c}$ and $|\mathcal B_{e,c}|$ in place of $\mathcal I_{e,c}$ and $N_{e,c}$.

\STATE \textbf{Stage 1: Supervised Warm-Up and Factor Anchoring}
\STATE Enable training mode and gradients.
\FOR{$t_{\mathrm w}=1,\ldots,T_{\mathrm w}$}
    \FOR{each source mini-batch $\mathcal B$}
        \STATE Compute $\bm\Pi$, full-scan factors, and predictions using Eqs.~\eqref{eq:signed_graphs}--\eqref{eq:factorized_task_prediction}; update $\Theta$ using $\mathcal L_{\mathrm{cls}}$.
    \ENDFOR
\ENDFOR
\STATE In evaluation mode without gradients, cache $\overline{\bm\Pi}\leftarrow\mathcal A_\omega(\bm Q,\bm E^{\mathrm{ROI}})$.
\STATE Freeze $\Theta_{\mathrm{asg}}$ and use $\overline{\bm\Pi}$ for subsequent pooling, while keeping ROI embeddings trainable.

\STATE \textbf{Stage 2: Periodic Qualification-Guided Learning}
\STATE Set $t\leftarrow0$.
\WHILE{the training stopping criterion is not met}
    \IF{$t\bmod M=0$}
        \STATE Switch to evaluation mode, disable gradients, and set $\mathcal C_Q\leftarrow\varnothing$.
        \STATE For every source subject $i\in\mathcal I_{e,c}$, construct $K$ FC re-estimates with block length $\ell_{b,i}^{e}$ using Eq.~\eqref{eq:fc_reestimation}.
        \STATE Compute paired contributions $\{a_{i,j}^{e,(k)}\}_{k=0}^{K}$ using fixed $\bm S$ and $\overline{\bm\Pi}$.
        \STATE Compute $R_j^{\mathrm{task}}$ using Eqs.~\eqref{eq:factor_task_statistics}--\eqref{eq:task_relevance}.
        \IF{$\sum_jR_j^{\mathrm{task}}>0$}
            \STATE Compute $\pi_j^{\mathrm{task}}$ using Eq.~\eqref{eq:task_relevance} and $q_{e,c,j}^{\mathrm{re}}$ using Eqs.~\eqref{eq:reestimation_variation}--\eqref{eq:reestimation_support}.
            \STATE Compute and cache $\mathcal C_Q\leftarrow\{g_{e,e',c},\overline{\alpha}_{e,e',c,j}\}_{e<e',c,j}$ using Eqs.~\eqref{eq:pairwise_factor_qualification}--\eqref{eq:pairwise_qualification_profile}.
        \ENDIF
        \STATE Discard temporary FC re-estimates and restore training mode and gradients.
    \ENDIF

    \STATE Sample $\mathcal B$; compute full-scan factors, predictions, and $\mathcal L_{\mathrm{cls}}$ using fixed $\bm S$ and $\overline{\bm\Pi}$.
    \STATE Set $\mathcal L_{\mathrm{env}}^{Q}\leftarrow0$.
    \IF{$\mathcal C_Q\neq\varnothing$}
        \STATE Construct weighted representations and empirical distributions using cached $\overline{\alpha}$ and Eqs.~\eqref{eq:pair_specific_representation}--\eqref{eq:qualified_site_distributions}.
        \STATE Evaluate $\mathcal L_{\mathrm{env}}^{Q}$ using cached $g$ and Eq.~\eqref{eq:qualification_guided_alignment}, with the same pairwise representation map and $\varepsilon$ in all cross- and self-transport terms.
    \ENDIF
    \STATE Update $\Theta\setminus\Theta_{\mathrm{asg}}$ using $\mathcal L_{\mathrm{cls}}+\lambda_{\mathrm{env}}\mathcal L_{\mathrm{env}}^{Q}$, holding $\bm S$, $\overline{\bm\Pi}$, and $\mathcal C_Q$ fixed.
    \STATE Set $t\leftarrow t+1$.
\ENDWHILE

\STATE \textbf{Stage 3: Inference on Unseen Sites}
\STATE Switch to evaluation mode and disable gradients.
\FOR{$z=1,\ldots,N_{\mathrm{te}}$}
    \STATE Compute $\bm A_z^{(0)}=\Psi(\bm X_z)$ and full-scan factors $\{\bm r_{z,j}^{(0)}\}_{j=1}^{J}$ using fixed $\bm S$, $\overline{\bm\Pi}$, and the trained mappings.
    \STATE Predict $\widehat p_z=\sigma\!\left(J^{-1/2}\sum_{j=1}^{J}\bm w_j^\top\bm r_{z,j}^{(0)}+b\right)$.
\ENDFOR
\RETURN $\widehat{\bm p}_{\mathrm{te}}=[\widehat p_1,\ldots,\widehat p_{N_{\mathrm{te}}}]^\top$.
\end{algorithmic}
\end{algorithm}